\newtheorem{definition}{Definition}
\newtheorem{theorem}{Theorem}
\newtheorem{remark}{Remark}
\newtheorem{corollary}{Corollary}
\newtheorem{assumption}{Assumption}
\newtheorem{lemma}{Lemma}
\title{Stability and Generalization of Differentially Private Minimax Problems}
\author{\Large \textbf{Yilin Kang, Yong Liu, Jian Li, Weiping Wang}\\ 
}
\begin{document}

\maketitle

\begin{abstract}
In the field of machine learning, many problems can be formulated as the minimax problem, including reinforcement learning, generative adversarial networks, to just name a few.
So the minimax problem has attracted a huge amount of attentions from researchers in recent decades.
However, there is relatively little work on studying the privacy of the general minimax paradigm.
In this paper, we focus on the privacy of the general minimax setting, combining differential privacy together with minimax optimization paradigm.
Besides, via algorithmic stability theory, we theoretically analyze the high probability generalization performance of the differentially private minimax algorithm under the strongly-convex-strongly-concave condition.
To the best of our knowledge, this is the first time to analyze the generalization performance of general minimax paradigm, taking differential privacy into account.
\end{abstract}

\section{1. Introduction}
In the field of machine learning, many problems can be formulated as the minimax problem, including adversarial learning \cite{goodfellow2014generative}, reinforcement learning \cite{du2017stochastic,dai2018sbeed}, AUC maximization \cite{zhao2011online,gao2013one,ying2016stochastic,liu2018fast,lei2021stochastic}, robust optimization \cite{chen2017robust,namkoong2017variance}, and distributed computing \cite{shamma2008cooperative,mateos2010distributed,razaviyayn2020nonconvex}, to mention but a few.
In the minimax problem, there are two groups of decision variables, one for minimization and the other for maximization \cite{lei2021stability}.
To solve the minimax problem, various optimization algorithms have been designed, such as Gradient Descent Ascent (GDA), Stochastic Gradient Descent Ascent (SGDA), Alternating Gradient Descent Ascent
(AGDA), Proximal Point Method (PPM), etc \cite{farnia2021train,lei2021stability}.

Like other machine learning problems, the minimax problem also faces the privacy troubles, because tremendous individual's data have to be collected for training \cite{phan2020scalable,wang2021differentially}.
In the real scenarios, not only the original data leakages the sensitive information, the machine learning model also causes privacy issues \cite{fredrikson2014privacy,shokri2017membership}.
Under these circumstances, \cite{dwork2006calibrating} proposes a theoretically rigorous tool: Differential Privacy (DP) \cite{dwork2014the}, to protect the sensitive information of individuals who participate in the training dataset, by introducing random noise to the model.
There are mainly three approaches to guarantee DP: output perturbation, objective perturbation, and gradient perturbation \cite{chaudhuri2011differentially,song2013stochastic}.
There exist some works to combine differential privacy with some of the minimax problems, such as DP-adversarial learning \cite{xu2019ganobfuscator,phan2020scalable,giraldo2020adversarial,lin2021on}, DP-reinforcement learning \cite{vietri2020private,chen2021an}, DP-AUC maximization \cite{huai2020pairwise,wang2021differentially,yang2021stability}, and DP-robust optimization \cite{lecuyer2019certified}.
However, all the researches mentioned above only focus on particular models (e.g. Generative Adversarial Networks (GANs)), but not on the general minimax paradigm.
Besides, there is no analysis on the generalization error of the general DP-minimax problem to the best of our knowledge.

To solve these problems, in this paper, we concentrate on differentially private minimax optimization, provide privacy guarantees and analyze the generalization performance of the DP minimax model (we pay more attentions on the generalization part).
Due to the simplicity, GDA is one of the most widespread usded optimization methods in the field of minimax problem, so we focus on GDA in this paper.
Besides, considering that the high probability generalization performance of machine learning models is paid more attentions, we use algorithmic stability theory, in particular, argument stability, to get several different generalization measures of DP-GDA model, under high probability condition.
The contributions of this paper include:
1. In the minimax problem, there are two decision variables: one for minimization and one for maximization, so there are various generalization measures due to different coupling combinations \cite{farnia2021train,lei2021stability,zhang2021generalization}.
In this paper, we analyze almost all existing generalization measures for our proposed DP-GDA algorithm, via algorithm stability theory and provide corresponding high probability bounds.
And our results are better than previous results analyzed for particular DP-minimax model (such as DP-AUC maximization).
To the best of our knowledge, this is the first time to give generalization bounds for the general minimax paradigm.
2. In the minimax problem, the minimization parameter $\bold{w}$ differs when it comes to different maximization parameters $\bold{v}$ (and vice versa), which brings challenges to the theoretical analysis.
Moreover, for differentially private models, random noise is an essential part and it makes the results worse.
In this paper, to overcome these problems, novel decomposition methods are applied and sharper generalization bounds are achieved.

The rest of the paper is organized as follows.
The related work is discussed in Section 2.
Preliminaries are introduced in Section 3.
The algorithm DP-GDA along with the privacy guarantees are given in Section 4.
We analyze the generalization performance of DP-GDA in Section 5.
We compare our results with existed works in Section 6.
Finally, we conclude the paper in Section 7.
And all the proofs are given in the Appendix.

\section{2. Related Work}

For the minimax problem, there is a long list of works discussing the convergence analysis and the empirical risk under convex-concave condition \cite{mokhtari2020a,yan2020optimal,lin2020near,wang2020improved,yoon2021accelerated}, nonconvex-concave condition \cite{luo2020stochastic,lu2020hybird,lin2020on,chen2021proximal}, and nonconvex-nonconcave condition \cite{loizou2020stochastic,yang2020global,wang2020on,liu2021first,diakonikolas2021efficient,loizou2021stochastic,fiez2021local}.
However, there is relatively little work concerning the generalization performance.
\cite{zhang2021generalization} analyzes the generalization performance of the empirical saddle point (ESP) solution in the minimax problem; \cite{farnia2021train} discusses the generalization performance of several popular optimization algorithms in the minimax problem: GDA, GDmax, SGDA, SGDmax, and PPM.
And \cite{zhang2021generalization} and \cite{farnia2021train} only focus on the expectation generalization bounds.
Besides, \cite{lei2021stability} gives high probability generalization bounds, of the order $\mathcal{O}(1/\sqrt{n})$, where $n$ is the number of training data.

To analyze the generalization performance, complexity theory \cite{bartlett2002localized} and algorithmic stability \cite{bousquet2002stability} are popular tools.
On one hand, some existed works analyze the generalization performance of differentially private models via complexity theory.
For example, \cite{chaudhuri2011differentially} gives high probability excess population risk bound under strongly convex conditions and \cite{kifer2012private} gives the excess population risk bound in expectation.
On the other hand, algorithmic stability is a fundamental concept in learning theory, and it captures the fluctuations on the model caused by modifying one of the data instances.
Algorithmic stability has been widely studied in recent decades, including uniform stability \cite{bousquet2002stability,hardt2016train}, hypothesis stability \cite{bousquet2002stability}, argument stability \cite{liu2017algorithmic,bassily2020stability}, locally elastic stability \cite{deng2021toward}.
And there is a long list of works analyzing the high probability generalization bounds for differentially private models via algorithmic stability \cite{wu2017bolt,bassily2019private,feldman2020private,bassily2020stability,wang2021differentiallyarxiv}.
Besides, \cite{lei2021stability,farnia2021train,zhang2021generalization} have extended the uniform stability and argument stability to the minimax setting, and a new stability concept: weakly stability is designed for the minimax problem in \cite{lei2021stability}.
However, the analysis on the generalization performance of DP-minimax problem is still a blank.

Meanwhile, there are some works combining differential privacy with some particular minimax problems.
\cite{xu2019ganobfuscator} designs GANobfuscator, which guarantees DP of GAN via gradient perturbation method.
Besides, \cite{giraldo2020adversarial} discusses the inherent privacy of GANs, from the view of DP.
However, these works do not give any theoretical utility guarantees.
\cite{wang2021differentially} proposes output and objective perturbation methods to guarantee the differential privacy of AUC maximization, and achieves $\mathcal{O}(1/\sqrt{n})$ high probability excess population risk bound.
Considering that AUC maximization can be seemed as one of the pairwise learning problems, \cite{huai2020pairwise,yang2021stability} analyze the privacy guarantees and the generalization bounds of DP-AUC maximization from the pairwise perspective, they also achieve excess population risk bound of the order $\mathcal{O}(1/\sqrt{n})$ with high probability.
For reinforcement learning, \cite{chen2021an} proposes a DP version on the platform Vehicular ad hoc network (VANET), and \cite{vietri2020private} not only discusses the privacy, but also gives the probably approximately correct (PAC) and regret bounds.
However, to the best of our knowledge, there is no existing work concentrating on DP of the general minimax problem.

\section{3. Preliminaries}

\subsection{3.1 Differential Privacy}
Let $\mathcal{P}$ be a probability measure defined on the data space $\mathcal{Z}$ and let dataset $S=\{z_1,\cdots,z_n\}$ be independent drawn from $\mathcal{P}$.
Datasets $S,S'$ differing by at most one data instance are denoted by $S\sim S'$, called adjacent datasets.
\begin{definition}\label{def4}[Differential Privacy \cite{dwork2014the}]
	Algorithm $\mathcal{A}:\mathcal{Z}^n\rightarrow\mathbb{R}^p$ is ($\epsilon,\delta$)-differential privacy (DP) if for all $S\sim S'$ and events $O\in range(\mathcal{A})$
	\begin{equation*}
	\mathbb{P}\left[\mathcal{A}(S)\in O\right]\leq e^{\epsilon}\mathbb{P}\left[\mathcal{A}(S')\in O\right]+\delta.
	\end{equation*}
\end{definition}
Differential privacy requires essentially the same distributions to be drawn over any adjacent datasets, so that the adversaries cannot infer whether an individual participates to the training process.
Some kind of attacks, such as attribute inference attack, membership inference attack, and memorization attack, can be thwarted by DP \cite{backes2016membership,jayaraman2019evaluating,carlini2019the}.

\subsection{3.2. Minimax Paradigm}
Denote two parameter spaces as $\mathcal{W},\mathcal{V}\in\mathbb{R}^p$, where $p$ is the parameter dimensions, then for the minimax problem, we define $\ell:\mathcal{W}\times\mathcal{V}\times\mathcal{Z}\rightarrow\mathbb{R}$ and consider
\begin{equation*}
\min_{\bold{w}\in\mathcal{W}}\max_{\bold{v}\in\mathcal{V}}L(\bold{w},\bold{v})\coloneqq\mathbb{E}_{z\sim\mathcal{P}}\left[\ell(\bold{w},\bold{v};z)\right].
\end{equation*}
Since the underlying $\mathcal{P}$ is always unknown, so empirical risk is designed as
\begin{equation*}
L_S(\bold{w},\bold{v})=\frac{1}{n}\sum_{i=1}^{n}\ell(\bold{w},\bold{v};z_i).
\end{equation*}

Denoting the model derived from dataset $S$ by applying algorithm $\mathcal{A}$ as $\mathcal{A}(S)=(\mathcal{A}_\bold{w}(S),\mathcal{A}_\bold{v}(S))\in\mathcal{W}\times\mathcal{V}$, in this paper, we focus on analyzing how well $\mathcal{A}(S)$ performs on the underlying distribution $\mathcal{P}$, i.e. the generalization performance.
There are several measures to demonstrate the generalization performance of the minimax model.

\begin{definition}[\cite{lei2021stability}]\label{def1} There are four generalization measures in the minimax problem.
	
	(a) \textbf{Weak Primal-Dual (PD) Risk}: The weak PD population risk of $(\bold{w},\bold{v})$ is defined as
	\begin{equation*}
	\triangle^{w}(\bold{w},\bold{v})=\sup_{\bold{v'}\in\mathcal{V}}\mathbb{E}\left[L(\bold{w},\bold{v'})\right]-\inf_{\bold{w'}\in\mathcal{W}}\mathbb{E}\left[L(\bold{w'},\bold{v})\right].
	\end{equation*}
	Corresponding empirical risk is defined as
	\begin{equation*}
	\triangle_S^{w}(\bold{w},\bold{v})=\sup_{\bold{v'}\in\mathcal{V}}\mathbb{E}\left[L_S(\bold{w},\bold{v'})\right]-\inf_{\bold{w'}\in\mathcal{W}}\mathbb{E}\left[L_S(\bold{w'},\bold{v})\right].
	\end{equation*}
	$\triangle^{w}(\bold{w},\bold{v})-\triangle_S^{w}(\bold{w},\bold{v})$ is referred to the weak PD generalization error of $(\bold{w},\bold{v})$.
	
	(b) \textbf{Strong Primal-Dual (PD) Risk}: The strong PD population risk of $(\bold{w},\bold{v})$ is defined as
	\begin{equation*}
	\triangle^{s}(\bold{w},\bold{v})=\sup_{\bold{v'}\in\mathcal{V}}L(\bold{w},\bold{v'})-\inf_{\bold{w'}\in\mathcal{W}}L(\bold{w'},\bold{v}).
	\end{equation*}
	Corresponding empirical risk is defined as
	\begin{equation*}
	\triangle_S^{s}(\bold{w},\bold{v})=\sup_{\bold{v'}\in\mathcal{V}}L_S(\bold{w},\bold{v'})-\inf_{\bold{w'}\in\mathcal{W}}L_S(\bold{w'},\bold{v}).
	\end{equation*}
	$\triangle^{s}(\bold{w},\bold{v})-\triangle_S^{s}(\bold{w},\bold{v})$ is referred to the strong PD generalization error of the model $(\bold{w},\bold{v})$.
	
	(c) \textbf{Primal Risk}: The primal population risk and empirical risk are w.r.t model $\bold{w}$, defined as
	\begin{equation*}
	R(\bold{w})=\sup_{\bold{v}\in\mathcal{V}}L(\bold{w},\bold{v}),\quad
	R_S(\bold{w})=\sup_{\bold{v}\in\mathcal{V}}L_S(\bold{w},\bold{v}).
	\end{equation*}
	If $R(\bold{w})$ is bounded by $R_S(\bold{w})$, we call this error the primal generalization error; if $R(\bold{w})$ is bounded by $\inf_{\bold{w'}\in\mathcal{W}}R(\bold{w'})$, we call this error the excess primal population risk.
	
	(d) \textbf{Plain Risk}: For model $(\bold{w},\bold{v})$, if $L(\bold{w},\bold{v})$ is bounded by $L_S(\bold{w},\bold{v})$, this error is called the plain generalization error.
\end{definition}

\begin{remark}\label{rem1}
	In Definition \ref{def1}, the primal risk is w.r.t only one of the parameters $\bold{w}$ and the plain risk are similar to which in traditional learning tasks.
	Besides, for weak and strong PD risks, one can easily get $\triangle^{w}(\bold{w},\bold{v})\leq\mathbb{E}\left[\triangle^{s}(\bold{w},\bold{v})\right]$ and $\triangle_S^{w}(\bold{w},\bold{v})\leq\mathbb{E}\left[\triangle_S^{s}(\bold{w},\bold{v})\right]$, so the weak PD risk is naturally bounded by the strongly PD risk bound.
\end{remark}

For a $p$-dimension vector $\bold{x}$, define its $\ell_2$ norm (Euclidean norm) as $\|\bold{x}\|_2=\big(\sum_{i=1}^{p}|\bold{x}_i|^2\big)^{1/2}$, and let $\langle\cdot,\cdot\rangle$ be the inner product.
A differentiable function $\ell:\mathcal{W}\rightarrow\mathbb{R}$ is called $\rho$-strongly-convex over $\bold{w}$ if for any $\bold{w},\bold{w}'$
\begin{equation*}
\ell(\bold{w})-\ell(\bold{w}')\geq\langle\nabla \ell(\bold{w}'),\bold{w}-\bold{w}'\rangle+\frac{\rho}{2}\|\bold{w}-\bold{w}'\|_2^2.
\end{equation*}
If $-\ell$ is $\rho$-strongly convex, then $\ell$ is $\rho$-strongly concave.

\begin{assumption}\label{a1}
	For the minimax problem, we say $\ell$ is $\rho$-strongly-convex-strongly-concave ($\rho$-SC-SC) if $\ell(\cdot,\bold{v})$ is $\rho$-strongly-convex for all $\bold{v}$ and $\ell(\bold{w},\cdot)$ is $\rho$-strongly-concave for all $\bold{w}$.
	In this paper, we focus on the $\rho$-SC-SC problem.
\end{assumption}

\begin{assumption}\label{a2}
	For $G>0$, $\ell(\bold{w},\bold{v};z)$ is $G$-Lipschitz if for any $\bold{w},\bold{v}$ and $z$
	\begin{equation*}
	\|\nabla_\bold{w}\ell(\bold{w},\bold{v};z)\|_2\leq G,\quad\|\nabla_\bold{v}\ell(\bold{w},\bold{v};z)\|_2\leq G.
	\end{equation*}
\end{assumption}

\begin{assumption}\label{a3}
	For $L>0$, $\ell(\bold{w},\bold{v};z)$ is $L$-smooth if for any $\bold{w},\bold{w}',\bold{v},\bold{v}'$ and $z$
	\begin{equation*}
	\left\{
	\begin{array}{l}
	\|\nabla_\bold{w}\ell(\bold{w},\bold{v};z)-\nabla_\bold{w}\ell(\bold{w'},\bold{v};z)\|_2\leq L\|\bold{w}-\bold{w}'\|_2, \\
	\|\nabla_\bold{w}\ell(\bold{w},\bold{v};z)-\nabla_\bold{w}\ell(\bold{w},\bold{v'};z)\|_2\leq L\|\bold{v}-\bold{v}'\|_2, \\
	\|\nabla_\bold{v}\ell(\bold{w},\bold{v};z)-\nabla_\bold{v}\ell(\bold{w'},\bold{v};z)\|_2\leq L\|\bold{w}-\bold{w}'\|_2, \\
	\|\nabla_\bold{v}\ell(\bold{w},\bold{v};z)-\nabla_\bold{v}\ell(\bold{w},\bold{v'};z)\|_2\leq L\|\bold{v}-\bold{v}'\|_2.
	\end{array}
	\right.
	\end{equation*}
\end{assumption}

\subsection{3.3. Algorithmic Stability}
Algorithmic stability is a popular tool to analyze the generalization performance of the machine learning model, which captures the difference between models derived from adjacent training datasets.
Some of the stabilities have been extended to the minimax problem, such as weakly stability, uniform stability and argument stability, in expectation or high probability \cite{zhang2021generalization,lei2021stability,farnia2021train}.

In this paper, we use argument stability to get the high probability generalization bounds, defined as follows.
\begin{definition}\label{def2}
	Algorithm $\mathcal{A}$ is $\gamma$-argument-stable ($\gamma>0$) if for any adjacent datasets $S\sim S'\in\mathcal{Z}^n$ we have
	\begin{equation*}
	\|\mathcal{A}_\bold{w}(S)-\mathcal{A}_\bold{w}(S')\|_2+\|\mathcal{A}_\bold{v}(S)-\mathcal{A}_\bold{v}(S')\|_2\leq\gamma.
	\end{equation*}
\end{definition}
In the minimax problem, argument stability demonstrates the gap between $\mathcal{A}(S)$ and $\mathcal{A}(S')$, formulated by the summation over $\bold{w}$ and $\bold{v}$.
Via property $G$-Lipschitz (Assumption \ref{a2}), it directly derives the uniform stability.

\section{4. Differentially Private Gradient Descent Ascent}

Among many optimization methods designed for the minimax problem, Gradient Descent Ascent (GDA) is one of the most widespread algorithm because of its simplicity, so we concentrate on GDA in this paper.

Let $\bold{w}_1,\bold{v}_1=\bold{0}$ be the intial model, $\eta_t$ be learning rate at iteration $t$ ($t=1,\cdots,T$), and ${\rm Proj}(\cdot)$ be projection to corresponding parameter spaces, then
\begin{equation*}
\begin{aligned}
\bold{w}_{t+1}&={\rm Proj}_\mathcal{W}\left(\bold{w}_t-\eta_t\nabla_\bold{w}L_S(\bold{w}_t,\bold{v}_t)\right), \\
\bold{v}_{t+1}&={\rm Proj}_\mathcal{V}\left(\bold{v}_t+\eta_t\nabla_\bold{v}L_S(\bold{w}_t,\bold{v}_t)\right).
\end{aligned}
\end{equation*}

To guarantee DP, we propose DP-GDA in Algorithm \ref{alg1}.
In Algorithm \ref{alg1}, the output is the average of iterates $\bar{\bold{w}}_T$ and $\bar{\bold{v}}_T$, rather than $\bold{w}_T$ and $\bold{v}_T$ themselves.
The reason is that the average operator simplifies the optimization error analysis \cite{nemirovski2009robust,lei2021stability}.
\begin{algorithm}[tb]
	\caption{Differentially Private Gradient Descent Ascent}
	\label{alg1}
	\begin{algorithmic}
		\State {\bfseries Input:} dataset $S$, privacy budgets $\epsilon,\delta$, learning rates $\eta_t$
		\State Initialize $\bold{w}_1,\bold{v}_1=\bold{0}$.
		\For{$i=1$ {\bfseries to} $T$}
		\State Sample $b_\bold{w},b_\bold{v}\sim\mathcal{N}\left(0,\sigma^2I_p\right)$.
		\State $\bold{w}_{t+1}={\rm Proj}_\mathcal{W}\left(\bold{w}_t-\eta_t\left(\nabla_\bold{w}L_S(\bold{w}_t,\bold{v}_t)+b_\bold{w}\right)\right)$ \\
		\State $\bold{v}_{t+1}={\rm Proj}_\mathcal{V}\left(\bold{v}_t+\eta_t\left(\nabla_\bold{v}L_S(\bold{w}_t,\bold{v}_t)+b_\bold{v}\right)\right)$
		\EndFor
		\State $\bar{\bold{w}}_T=\frac{1}{T}\sum_{t=1}^{T}\bold{w}_t, \quad \bar{\bold{v}}_T=\frac{1}{T}\sum_{t=1}^{T}\bold{v}_t.$
		\State Return $\mathcal{A}(S)=(\bar{\bold{w}}_T,\bar{\bold{v}}_T)$.
	\end{algorithmic}
\end{algorithm}

Before giving the privacy guarantees of Algorithm \ref{alg1}, we first recall the DP property in the single parameter setting.
\begin{lemma}\label{lem8}[\cite{wang2017differentially}]
	In single parameter DP gradient descent paradigm whose model updating process is $\bold{w}_{t+1}={\rm Proj}_\mathcal{W}\left(\bold{w}_t-\eta_t\left(\nabla_\bold{w}L_S(\bold{w}_t)+b_\bold{w}\right)\right)$\footnote{$L_S(\bold{w})$ here is the empirical risk in the single parameter setting.}, and the loss function is $G$-Lipschitz, for $\epsilon,\delta>0$, it is $(\epsilon,\delta)$-DP if the random noise is zero mean Gaussian noise, i.e., $b\sim\mathcal{N}(0,\sigma^2I_p)$, and for some constant $c$, $\sigma^2=c\frac{G^2T\log(1/\delta)}{n^2\epsilon^2}$.
\end{lemma}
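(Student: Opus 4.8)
The plan is to reduce the claim to a per-step sensitivity bound followed by a composition argument over the $T$ iterations. First I would fix adjacent datasets $S\sim S'$ that differ only in the instance $z_j$ versus $z_j'$ and bound the $\ell_2$-sensitivity of the gradient query evaluated at an arbitrary point $\bold{w}$. Since $L_S(\bold{w})=\frac{1}{n}\sum_{i=1}^{n}\ell(\bold{w};z_i)$ and $\ell$ is $G$-Lipschitz, only the $j$-th summand changes between $S$ and $S'$, so
\[
\|\nabla_\bold{w}L_S(\bold{w})-\nabla_\bold{w}L_{S'}(\bold{w})\|_2=\frac{1}{n}\|\nabla_\bold{w}\ell(\bold{w};z_j)-\nabla_\bold{w}\ell(\bold{w};z_j')\|_2\le\frac{2G}{n}
\]
uniformly in $\bold{w}$. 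Thus each update perturbs a query of $\ell_2$-sensitivity $\Delta=2G/n$ by Gaussian noise $b_\bold{w}\sim\mathcal{N}(0,\sigma^2 I_p)$, i.e.\ every iteration is an instance of the Gaussian mechanism; the subsequent projection onto $\mathcal{W}$ is data-independent post-processing and hence does not weaken privacy.

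Next I would track the cumulative privacy loss across the $T$ iterations. Since iteration $t+1$ depends on $S$ only through the already-perturbed iterate $\bold{w}_t$, the whole trajectory $(\bold{w}_1,\dots,\bold{w}_{T+1})$ is an adaptive composition of $T$ Gaussian mechanisms. I would bound the composed loss with the moments accountant (equivalently via R\'enyi differential privacy): for Gaussian noise of scale $\sigma$ and sensitivity $\Delta$, the $\lambda$-th log-moment of the per-step privacy loss is $O(\lambda^2\Delta^2/\sigma^2)$, the $T$ steps add, and converting the bound on the $T$-fold moment back into an $(\epsilon,\delta)$ statement (optimizing over $\lambda$) shows that $\sigma\ge c_0\,\Delta\sqrt{T\log(1/\delta)}/\epsilon$ suffices. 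Substituting $\Delta=2G/n$ gives $\sigma^2\ge c\,\frac{G^2 T\log(1/\delta)}{n^2\epsilon^2}$, which is the claimed condition. Finally, because Algorithm~\ref{alg1} releases only the averages $\bar{\bold{w}}_T$ (and, in the minimax version, $\bar{\bold{v}}_T$), which are deterministic functions of the trajectory, a further post-processing step leaves the $(\epsilon,\delta)$-DP guarantee intact.

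The routine parts are the sensitivity computation and the substitution of constants; the step that needs care is the composition. The elementary advanced composition theorem would already give a bound of the form $\sigma^2=O\!\left(G^2 T\log(T/\delta)\log(1/\delta)/(n^2\epsilon^2)\right)$, but this carries a spurious $\log T$ factor; obtaining the sharper $\sigma^2=O\!\left(G^2 T\log(1/\delta)/(n^2\epsilon^2)\right)$ stated in the lemma is exactly what the moments-accountant analysis buys, so that is the tool I would use --- this is also the route of \cite{wang2017differentially}, whose statement the lemma quotes.
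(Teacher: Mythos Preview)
Your proposal is correct and follows essentially the same route as the paper. Lemma~\ref{lem8} is cited from \cite{wang2017differentially} rather than proved in the paper, but Appendix~A.1 carries out exactly your argument for the minimax analogue (Theorem~\ref{the1}): bound the $\ell_2$-sensitivity of the gradient query by $2G/n$ via $G$-Lipschitzness, treat each iteration as a Gaussian mechanism, compose the $T$ steps via the moments accountant (through the R\'enyi divergence formula for Gaussians, Lemma~\ref{lem12}), and invoke post-processing for the projection and final averaging.
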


By Lemma \ref{lem8}, we give the privacy guarantees of DP-GDA.
\begin{theorem}\label{the1}
	If $\ell(\cdot,\cdot;\cdot)$ satisfies Assumption \ref{a1}, then for $\epsilon,\delta>0$, DP-GDA is $(\epsilon,\delta)$-DP if
	\begin{equation*}
	\sigma=\mathcal{O}\left(\frac{G\sqrt{T\log(1/\delta)}}{n\epsilon}\right).
	\end{equation*}
\end{theorem}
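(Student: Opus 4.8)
The plan is to view Algorithm~\ref{alg1} as an adaptive composition of noisy-gradient releases and to reduce the analysis to the single-parameter statement of Lemma~\ref{lem8}. The key observation is that at each iteration $t$ the only data-dependent quantity that is exposed is the stacked gradient $g_t=\big(\nabla_\bold{w}L_S(\bold{w}_t,\bold{v}_t),\,\nabla_\bold{v}L_S(\bold{w}_t,\bold{v}_t)\big)\in\mathbb{R}^{2p}$, to which we add the Gaussian vector $(b_\bold{w},b_\bold{v})\sim\mathcal{N}(0,\sigma^2 I_{2p})$; the subsequent projections ${\rm Proj}_\mathcal{W},{\rm Proj}_\mathcal{V}$ and, at the end, the averaging $\bar{\bold{w}}_T,\bar{\bold{v}}_T$ are all data-independent post-processing and hence cannot weaken the privacy guarantee.

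First I would bound the $\ell_2$-sensitivity of a single release. For adjacent $S\sim S'$ differing in the $i$-th example, $L_S$ and $L_{S'}$ differ only through the summand $\frac1n\ell(\cdot,\cdot;z_i)$ versus $\frac1n\ell(\cdot,\cdot;z_i')$, so by the $G$-Lipschitz property (Assumption~\ref{a2}) we get $\|\nabla_\bold{w}L_S(\bold{w},\bold{v})-\nabla_\bold{w}L_{S'}(\bold{w},\bold{v})\|_2\le 2G/n$ and likewise for the $\bold{v}$-component; hence $\|g_t-g_t'\|_2\le 2\sqrt{2}\,G/n=\Theta(G/n)$. This bound holds uniformly over the (data-dependent) point $(\bold{w}_t,\bold{v}_t)$ at which the gradient is evaluated, which is exactly the condition needed for an adaptive composition argument: conditioned on all randomness used before step $t$, the iterate $(\bold{w}_t,\bold{v}_t)$ is fixed, and the per-step mechanism is a Gaussian mechanism with sensitivity $\Theta(G/n)$.

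Next I would invoke the privacy analysis underlying Lemma~\ref{lem8} — i.e.\ the moments-accountant composition of $T$ Gaussian mechanisms — applied to the $\mathbb{R}^{2p}$-valued release $g_t$ (equivalently, split the budget and apply Lemma~\ref{lem8} once to the $\bold{w}$-iteration and once to the $\bold{v}$-iteration, then compose). Only the absolute constants change when the per-step sensitivity is scaled by $2\sqrt{2}$ and the ambient dimension is doubled, since the noise magnitude $\sigma$ in Lemma~\ref{lem8} is independent of $p$. One therefore obtains that the whole trajectory is $(\epsilon,\delta)$-DP as soon as $\sigma^2=c\,\frac{G^2T\log(1/\delta)}{n^2\epsilon^2}$ for a suitable absolute constant $c$, i.e.\ $\sigma=\mathcal{O}\!\big(G\sqrt{T\log(1/\delta)}/(n\epsilon)\big)$; a final application of the post-processing property to $(\bar{\bold{w}}_T,\bar{\bold{v}}_T)$ finishes the argument.

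The main obstacle is conceptual rather than computational: GDA couples the two chains — the $\bold{w}$-update at step $t$ uses $\bold{v}_t$, which itself depends on the data through earlier noisy steps, and symmetrically — so the two sequences of Gaussian releases are not independent and cannot be treated as a plain parallel composition. The resolution is that advanced composition (and the moments accountant) remain valid for \emph{adaptively} chosen mechanisms, so it suffices that each step's release, conditioned on the past, is a Gaussian mechanism with the stated sensitivity; the uniform-in-$(\bold{w},\bold{v})$ sensitivity bound coming from Assumption~\ref{a2} is precisely what makes this conditioning harmless.
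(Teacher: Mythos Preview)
Your proposal is correct and follows essentially the same route as the paper: bound the per-step $\ell_2$-sensitivity of the noisy gradient release via the $G$-Lipschitz assumption, apply the moments-accountant composition over the $T$ adaptive steps, and finish with post-processing for the projections and the final averaging. The only cosmetic difference is that you stack the $\bold{w}$- and $\bold{v}$-gradients into a single $\mathbb{R}^{2p}$ release with sensitivity $2\sqrt{2}G/n$, whereas the paper carries out the R\'enyi-divergence calculation for the $\bold{w}$-update and then remarks that the $\bold{v}$-update is identical; your explicit treatment of adaptivity is, if anything, slightly more careful than the paper's, which leaves this implicit in its composability lemma.
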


Theorem \ref{the1} is directly derived from Lemma \ref{lem8}.
For differentially private gradient descent under single parameter condition, \cite{wang2017differentially} gives a tight noise bound (as shown in Lemma \ref{lem8}), via moments accountant theory \cite{abadi2016deep}.
In the minimax paradigm, it is also the gradient who may cause the privacy leakage when training, similar to the condition analyzed in \cite{wang2017differentially}.
And the privacy cost is independent of the minimization or maximization processes.
As a result, if the noise given in \cite{wang2017differentially} is injected to both $\bold{w}$ and $\bold{v}$, the claimed DP will be guaranteed.
This is also the reason that $\bold{w}$ and $\bold{v}$ share the same variance in Theorem \ref{the1}.
Besides, we average the parameters at the end of the algorithm, this will not effect the claimed DP because of the Post-Processing property of differential privacy \cite{dwork2014the}.
Considering that the proof process is almost the same (the only difference is to apply it to $\bold{v}$ once more) and we focus more on the generalization analysis in this paper, we directly use the result here in Algorithm \ref{alg1}.
For clarity, we give the proof in Appendix A.1.

\begin{remark}\label{rem2}
	In Algorithm \ref{alg1}, we apply gradient perturbation method to guarantee DP, rather than output or objective perturbation methods.
	One of the reasons is that gradient perturabtion naturally fits gradient-based algorithms (such as GDA), and as a result it can be used for a wide range of applications.
	Besides, adding random noise to the gradient allows the model to escape local minima \cite{raginsky2017nonconvex}.
	So we choose gradient perturbation method to guarantee DP in this paper.
\end{remark}

\section{5. Generalization Performance}

In this section, we analyze the stability of DP-GDA and then give corresponding generalization bounds.
To get the generalization bounds, we further assume that the loss function and the parameter space are bounded.
\begin{assumption}\label{a4}
	The loss function $\ell(\cdot,\cdot;\cdot)$ is assumed to be bounded, i.e., $0\leq\ell(\cdot,\cdot;\cdot)\leq M_\ell$.
\end{assumption}

\begin{assumption}\label{a5}
	Paramater spaces are assumed to be bounded: for all $\bold{w}$ and $\bold{v}$, $\|\bold{w}\|_2\leq M_\mathcal{W}$ and $\|\bold{v}\|_2\leq M_\mathcal{V}$.
\end{assumption}

\subsection{5.1. Stability Analysis}
Firstly, we analyze the argument stability (defined in Definition \ref{def2}) of our proposed DP-GDA.
\begin{theorem}\label{the2}
	If Assumptions \ref{a1} and \ref{a2} hold.
	Then with $\sigma$ given in Theorem \ref{the1} and $\eta_t=\frac{1}{\rho t}$, the output of DP-GDA (Algorithm 1) $\mathcal{A}(S)=(\bar{\bold{w}}_T,\bar{\bold{v}}_T)$ is $\gamma$-argument stable with probability at least $1-\zeta$ for $\zeta\in(\exp(-\frac{p}{8}),1)$, where
	\begin{equation*}
	\begin{aligned}
	\gamma&=\frac{4G}{n\rho}+\frac{2\sigma\sqrt{p}\log(eT)}{T}p_\zeta+4\sqrt{\log(eT)}\times \\
	&\quad\sqrt{\frac{G^2}{\rho^2T}+\frac{\sigma^2p}{\rho^2T}p_\zeta^2+\frac{2G\sigma\sqrt{p}}{\rho^2T}p_\zeta+\frac{\left(g_\bold{w}+g_\bold{v}\right)\sigma\sqrt{p}}{\rho\log(eT)}p_\zeta},
	\end{aligned}
	\end{equation*}
\end{theorem}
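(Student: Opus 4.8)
The strategy is to bound the argument-stability quantity $\|\mathcal{A}_\bold{w}(S)-\mathcal{A}_\bold{w}(S')\|_2+\|\mathcal{A}_\bold{v}(S)-\mathcal{A}_\bold{v}(S')\|_2$ by splitting it, through the triangle inequality, into a purely \emph{deterministic} part --- the stability of the noiseless GDA recursion run on the two adjacent datasets --- and a \emph{noise-perturbation} part that measures how far the injected Gaussian noise moves each DP-GDA trajectory from its noiseless counterpart. For a dataset $D$ let $\widetilde{\bold{w}}_t(D),\widetilde{\bold{v}}_t(D)$ denote the iterates of Algorithm~\ref{alg1} run with $\sigma=0$ from the same initialization $\bold{w}_1,\bold{v}_1=\bold{0}$, and let $\widetilde{\bold{w}}_T(D)=\frac1T\sum_t\widetilde{\bold{w}}_t(D)$, $\widetilde{\bold{v}}_T(D)=\frac1T\sum_t\widetilde{\bold{v}}_t(D)$. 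Then
\begin{equation*}
\|\mathcal{A}_\bold{w}(S)-\mathcal{A}_\bold{w}(S')\|_2\le\|\mathcal{A}_\bold{w}(S)-\widetilde{\bold{w}}_T(S)\|_2+\|\widetilde{\bold{w}}_T(S)-\widetilde{\bold{w}}_T(S')\|_2+\|\widetilde{\bold{w}}_T(S')-\mathcal{A}_\bold{w}(S')\|_2,
\end{equation*}
and analogously for $\bold{v}$; summing the six pieces reduces the theorem to the two sub-problems below.

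For the deterministic stability of noiseless GDA I would run the classical argument-stability recursion in its minimax form (cf. \cite{lei2021stability}). With $\widetilde{\delta}_t:=\|\widetilde{\bold{w}}_t(S)-\widetilde{\bold{w}}_t(S')\|_2+\|\widetilde{\bold{v}}_t(S)-\widetilde{\bold{v}}_t(S')\|_2$, the non-expansiveness of ${\rm Proj}$, the contraction estimate of a GDA step under the $\rho$-SC-SC condition (Assumption~\ref{a1}), and the bound $\|\nabla_\bold{w}L_S-\nabla_\bold{w}L_{S'}\|_2,\|\nabla_\bold{v}L_S-\nabla_\bold{v}L_{S'}\|_2\le 2G/n$ (only one of the $n$ summands differs, each with gradient bounded by $G$ via Assumption~\ref{a2}) give a recursion $\widetilde{\delta}_{t+1}\le(1-c\rho\eta_t)\widetilde{\delta}_t+c'\eta_t G/n$. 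With $\eta_t=\frac1{\rho t}$ the product $\prod_{\tau<s\le t}(1-c/s)$ decays like $(\tau/t)^c$, so the recursion unrolls to $\widetilde{\delta}_t=\mathcal{O}(G/(n\rho))$ for every $t$, and since the output is the averaged iterate the same bound holds for $\|\widetilde{\bold{w}}_T(S)-\widetilde{\bold{w}}_T(S')\|_2+\|\widetilde{\bold{v}}_T(S)-\widetilde{\bold{v}}_T(S')\|_2\le\frac1T\sum_t\widetilde{\delta}_t$; this is the $\frac{4G}{n\rho}$ term of $\gamma$.

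The noise-perturbation term is where the real work lies. Fix $D\in\{S,S'\}$, couple $(\bold{w}_t,\bold{v}_t)$ with $(\widetilde{\bold{w}}_t,\widetilde{\bold{v}}_t)$ on the same data and initialization, and put $e_t:=\|\bold{w}_t-\widetilde{\bold{w}}_t\|_2^2+\|\bold{v}_t-\widetilde{\bold{v}}_t\|_2^2$. Expanding the squared norm after the (non-expansive) projection and using strong monotonicity of the SC-SC gradient field gives, modulo lower-order terms, a perturbed contraction
\begin{equation*}
e_{t+1}\le(1-c\rho\eta_t)\,e_t+\eta_t^2\big(\|b_\bold{w}\|_2^2+\|b_\bold{v}\|_2^2\big)+2\eta_t\big|\langle(b_\bold{w},b_\bold{v}),\,(\bold{w}_t-\widetilde{\bold{w}}_t,\bold{v}_t-\widetilde{\bold{v}}_t)\rangle\big|,
\end{equation*}
with $b_\bold{w},b_\bold{v}\sim\mathcal{N}(0,\sigma^2I_p)$ the step-$t$ noise. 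Two high-probability ingredients are then needed. (i) A chi-square tail bound for $\|b_\bold{w}\|_2^2+\|b_\bold{v}\|_2^2$, applied uniformly over the $T$ iterations by a union bound: this is exactly what produces the concentration factor $p_\zeta$ (of the form $1+\sqrt{c\log(1/\zeta)/p}$) and forces $\zeta\in(\exp(-p/8),1)$, which keeps $p_\zeta$ of constant order; it is also why the first three terms under the square root in $\gamma$ combine into $\big(G+\sigma\sqrt{p}\,p_\zeta\big)^2/(\rho^2T)$, the ``effective gradient'' being the true bound $G$ plus the high-probability noise-norm bound $\sigma\sqrt{p}\,p_\zeta$. (ii) A bound on the noise/iterate cross term, for which Cauchy--Schwarz norm bounds (sharpened, if a tighter dependence is wanted, by a Freedman-type martingale concentration) suffice; the deviation quantities entering here are what appear as $g_\bold{w},g_\bold{v}$. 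Unrolling the recursion with $\eta_t=\frac1{\rho t}$ and $\sum_{t\le T}\frac1t\le\log(eT)$, and then applying $\|\bar{\bold{w}}_T(D)-\widetilde{\bold{w}}_T(D)\|_2\le\big(\frac1T\sum_t e_t\big)^{1/2}$ (Jensen), reproduces the remaining terms of $\gamma$ (the factor $\sqrt{\log(eT)}$ in front of the square root coming from the $\sum_t\frac1t$), and adding the two datasets $S,S'$ accounts for the constants.

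The hard part is ingredient (ii) together with the uniform control in (i): the Gaussian perturbation injected at iteration $t$ is correlated with every later iterate, so one cannot simply take expectations, and the noise/iterate interaction must be kept under control \emph{along the sample path} and \emph{uniformly over $t$} --- this is precisely what forces the extra $\sigma$-dependent summands in $\gamma$ and the restriction $\zeta\in(\exp(-p/8),1)$. A secondary subtlety is establishing a contraction for a \emph{simultaneous} GDA step under SC-SC when the early step sizes $\eta_t=\frac1{\rho t}$ are large, which is handled using the projection onto the parameter spaces (and, where the $\bold{w}$--$\bold{v}$ coupling must be controlled, the smoothness of $\ell$).
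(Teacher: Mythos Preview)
Your decomposition --- split into a noiseless-GDA stability piece and a noise-perturbation piece --- is \emph{not} how the paper proceeds, and the first of these two pieces has a genuine gap under the stated hypotheses.

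\medskip
\textbf{Where the paper differs.} The paper never tracks a per-step GDA contraction. Instead it routes both noisy trajectories through the \emph{empirical saddle points} $(\bold{w}_S^*,\bold{v}_S^*)$ and $(\bold{w}_{S^{(i)}}^*,\bold{v}_{S^{(i)}}^*)$: a direct strong-convexity/strong-concavity argument on $L_S$ and $L_{S^{(i)}}$ gives
\[
\|\bold{w}_{S^{(i)}}^*-\bold{w}_S^*\|_2+\|\bold{v}_{S^{(i)}}^*-\bold{v}_S^*\|_2\le \frac{4G}{n\rho},
\]
and the distance from the iterates to these saddle points is bounded through the strong PD empirical risk $\triangle_S^s$, which in turn is bounded by a standard optimization-error expansion (square the distance to an arbitrary $\bold{w}$, use $\rho$-strong convexity, sum with $\eta_t=1/(\rho t)$). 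The quantities $g_\bold{w}=\|\bar{\bold{w}}^*-\bar{\bold{w}}_T\|_2$ and $g_\bold{v}=\|\bar{\bold{v}}^*-\bar{\bold{v}}_T\|_2$ enter precisely from applying Cauchy--Schwarz to the inner products $\langle\bar{\bold{w}}^*-\bar{\bold{w}}_T,b_\bold{w}\rangle$, $\langle\bar{\bold{v}}^*-\bar{\bold{v}}_T,b_\bold{v}\rangle$ in that expansion; they are not generic ``deviation quantities'' but the specific distances to the partial minimizer/maximizer of $L_S$ at the averaged output.

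\medskip
\textbf{The gap in your plan.} Your deterministic-stability step and your noise-perturbation step both rest on a ``contraction estimate of a GDA step under the $\rho$-SC-SC condition,'' and at the end you explicitly appeal to ``the smoothness of $\ell$'' to control the $\bold{w}$--$\bold{v}$ coupling. But Theorem~\ref{the2} assumes only Assumptions~\ref{a1} and~\ref{a2}; smoothness (Assumption~\ref{a3}) is \emph{not} available. Under only SC-SC and bounded gradients, the simultaneous GDA map is not a contraction: in the squared-norm recursion you get a term $\eta_t^2\|\nabla L_S(x_t)-\nabla L_S(y_t)\|^2$ that you can only bound by the absolute constant $O(G^2)$, not by $L^2\|x_t-y_t\|^2$. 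Unrolling with $\eta_t=1/(\rho t)$ then leaves a floor of order $G^2/(\rho^2 t)$ for $\widetilde{\delta}_t^2$ that has no $1/n$ dependence, so you cannot reach $\widetilde{\delta}_t=\mathcal{O}(G/(n\rho))$. The same obstruction kills the noise-perturbation recursion for $e_t$. The ESP detour in the paper is exactly what sidesteps this: saddle-point stability under SC-SC needs no Lipschitz gradient at all, and the remaining iterate-to-ESP distance is handled through the duality gap rather than through an operator contraction.
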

where $p_\zeta=1+\big(\frac{8\log(2T/\zeta)}{p}\big)^{1/4}$, $g_\bold{w}=\left\Vert\bar{\bold{w}}^*-\bar{\bold{w}}_{T}\right\Vert_2$, and $g_\bold{v}=\left\Vert\bar{\bold{v}}^*-\bar{\bold{v}}_{T}\right\Vert_2$ for $\bar{\bold{w}}^*=\arg\min_{\bold{w}\in\mathcal{W}}L_S(\bold{w},\bar{\bold{v}}_T)$, and $\bar{\bold{v}}^*=\arg\max_{\bold{v}\in\mathcal{V}}L_S(\bar{\bold{w}}_T,\bold{v})$.

The proof can be found in Appendix A.2.
We first compare it with non-DP minimax problem.
The key difference is that in DP-GDA, random noise is an essential part, which brings challenges to the theoretical analysis if we want an acceptable stability bound.
In the non-DP setting, if we set $T$ larger, the stability bound will be better in general, however, in DP-GDA, this is not true because the $T$-time injected random noise also affects the stability.
Then, we compare it with traditional single parameter problem under DP condition.
When it comes to the DP setting, there exist terms $\|b\|_2$ and $\|b\|_2^2$ (in the minimax problem, they are divided into $b_\bold{w}$ and $b_\bold{v}$).
Among them, $\|b\|_2^2$ is an \textit{acceptable} term because the variance of $b$ is of the order $\mathcal{O}(1/n)$ w.r.t $n$, so the key challenge is to bound $\|b\|_2$, especially for high probability bounds (in expectation bounds, $\mathbb{E}[\|b\|_2]=0$).
In traditional DP settings, factor $\|b\|_2$ can be eliminated by choosing particular $\eta_t$ (for example, in \cite{wang2017differentially}, term $\|b\|_2$ vanishes by setting $\eta_t=1/L$, where $L$ is the smoothness parameter).
However, in the minimax paradigm, this cannot go through because parameter $\bold{w}$ differs with the changing of $\bold{v}$ (and vice versa), which brings huge troubles to the theoretical analysis.
To solve the problems mentioned above, we introduce terms $g_\bold{w},g_\bold{v}$ to bound $\|b_\bold{w}\|_2$ and $\|b_\bold{v}\|_2$, respectively.
Meanwhile, if we bound $g_\bold{w},g_\bold{v}$ rudely by $M_\mathcal{W}$ and $M_\mathcal{V}$, the result will be worse, detailed discussions are shown in Remark \ref{rem4}.

\begin{remark}\label{rem3}
	In Theorem \ref{the2}, there exist terms $g_\bold{w}$ and $g_\bold{v}$, we discuss them here.
	With $\rho$-strongly convexity, we have $\rho g_\bold{w}^2\leq2(L_S\left(\bar{\bold{w}}_{t},\bar{\bold{v}}_T\right)-L_S\left(\bar{\bold{w}}^*,\bar{\bold{v}}_T\right)) $.
	By extending the classical analysis of the gradient descent model (see e.g. \cite{shalev2014understanding}), if Assumptions \ref{a1}, \ref{a2} and \ref{a5} hold, then with $\eta_t=\frac{1}{\rho t}$, we have
	\begin{equation*}
	\begin{aligned}
	&L_S\left(\bar{\bold{w}}_{t},\bar{\bold{v}}_T\right)-L_S\left(\bar{\bold{w}}^*,\bar{\bold{v}}_T\right) \\
	&\leq\log(eT)\left(\frac{G^2}{2\rho T}+\left(\frac{G}{T}+\frac{M_\mathcal{W}}{\log(eT)}\right)\|b_\bold{w}\|_2+\frac{\|b_\bold{w}\|_2^2}{2T}\right).
	\end{aligned}
	\end{equation*}
	So with probability at least $1-\zeta$ for $\zeta\in(\exp(-\frac{p}{8}),1)$,
	\begin{equation*}
	\begin{aligned}
	g_\bold{w}&\leq\sqrt{\log(eT)}\times \\
	&\quad\sqrt{\frac{G^2}{\rho^2T}+\frac{2}{\rho}\left(\frac{G\sigma\sqrt{p}}{T}+\frac{M_\mathcal{W}\sigma\sqrt{p}}{\log(eT)}\right)p_\zeta'+\frac{\sigma^2p}{\rho T}p_\zeta'^2},
	\end{aligned}
	\end{equation*}
	where $p_\zeta'=1+\big(\frac{8\log(T/\zeta)}{p}\big)^{1/4}$.
	
	Similarly, under Assumptions \ref{a1}, \ref{a2} and \ref{a5}, $g_\bold{v}$ shares the same property with high probability.
	Taking $\sigma$ given in Theorem \ref{the1} and omitting $\log(\cdot)$ terms, we have
	\begin{equation*}
	g_\bold{w},g_\bold{v}=\mathcal{O}\left(\sqrt{\frac{1}{T}+\frac{\sqrt{p}}{n\sqrt{T}\epsilon}+\frac{\sqrt{pT}}{n\epsilon}+\frac{\sqrt{p}}{n^2\epsilon^2}}\right).
	\end{equation*}
	
	If taking $T=\mathcal{O}(n^{2/3})$, then with high probability, we have $g_\bold{w},g_\bold{v}=\mathcal{O}\big(\frac{p^{1/4}}{n^{1/3}\epsilon^{1/2}}\big)$.
\end{remark}

\begin{remark}\label{rem4}
	Taking $\sigma$ in Theorem \ref{the2} and $T=\mathcal{O}(n^{2/3})$, then with high probability, we have $\gamma=\mathcal{O}\big(\frac{p^{1/4}}{n^{1/2}\epsilon^{1/2}}\big)$.
	
	Then we can answer the question left above, if we do not bound term $\|b_\bold{w}\|_2$ and $\|b_\bold{v}\|_2$ with $g_\bold{w},g_\bold{v}$, but `rudely' using $M_\mathcal{W}$ and $M_\mathcal{V}$, then the stability bound comes to
	\begin{equation*}
	\begin{aligned}
	\gamma&=\frac{4G}{n\rho}+\frac{2\sigma\sqrt{p}\log(eT)}{T}p_\zeta+4\sqrt{\log(eT)}\times \\
	&\sqrt{\frac{G^2}{\rho^2T}+\frac{\sigma^2p}{\rho^2T}p_\zeta^2+\frac{2G\sigma\sqrt{p}}{\rho^2T}p_\zeta+\frac{\left(M_\mathcal{W}+M_\mathcal{V}\right)\sigma\sqrt{p}}{\rho\log(eT)}p_\zeta}.
	\end{aligned}
	\end{equation*}
	
	Under this condition, if we take $T=\mathcal{O}(n^{2/3})$ and Assumption \ref{a5} holds, the stability bound comes to $\gamma=\mathcal{O}\big(\frac{p^{1/4}}{n^{1/3}\epsilon^{1/2}}\big)$.
	Thus, by introducing $g_\bold{w}$ and $g_\bold{v}$, we improve the stability bound from $\mathcal{O}\big(\frac{1}{n^{1/3}}\big)$ to $\mathcal{O}\big(\frac{1}{n^{1/2}}\big)$.
\end{remark}

\subsection{5.2. Utility Bounds via Stability}
In this section, we connect different generalization measures listed in Definition \ref{def1} with the argument stability and give corresponding risk bounds.

\begin{theorem}\label{the3} With argument stability parameter $\gamma$,
	
	(a) If Assumptions \ref{a2} and \ref{a4} hold, then for all $\iota,\zeta>0$, with probability at least $1-\zeta$, the plain generalization error satisfies
	\begin{equation*}
	\begin{aligned}
	&L(\mathcal{A}_\bold{w}(S),\mathcal{A}_\bold{v}(S))-\frac{1}{1-\iota}L_S(\mathcal{A}_\bold{w}(S),\mathcal{A}_\bold{v}(S)) \\
	&\leq\sqrt{\frac{\left(G^2\gamma^2+64G^2n\gamma^2\log\left(3/\zeta\right)\right)}{2\left(1-\iota\right)^2n}\log\left(\frac{3}{\zeta}\right)} \\
	&\quad+\frac{50\sqrt{2}eG\gamma\log(n)}{1-\iota}\log\left(\frac{3e}{\zeta}\right) \\
	&\quad+\frac{\left(12+2\iota\right)M_\ell}{3\iota\left(1-\iota\right)n}\log\left(\frac{3}{\zeta}\right).
	\end{aligned}
	\end{equation*}
	
	(b) If Assumptions \ref{a1}, \ref{a2}, \ref{a3}, and \ref{a4} hold, then for all $\iota,\zeta>0$, with probability at least $1-\zeta$, the primal generalizaiton error staisfies
	\begin{equation*}
	\begin{aligned}
	&R\left(\mathcal{A}_\bold{w}(S)\right)-\frac{1}{1-\iota}R_S\left(\mathcal{A}_\bold{w}(S)\right) \\
	&\leq\sqrt{\frac{\left(1+L/\rho\right)^2G^2\gamma^2\left(1+64n\log\left(3/\zeta\right)\right)}{2(1-\iota)^2n}\log\left(\frac{3}{\zeta}\right)} \\
	&\quad+\frac{50\sqrt{2}\left(1+L/\rho\right)G\gamma\log(n)}{1-\iota}\log\left(\frac{3e}{\zeta}\right) \\
	&\quad+\frac{(12+2\iota)M_\ell}{3\iota(1-\iota)n}\log\left(\frac{3}{\zeta}\right).
	\end{aligned}
	\end{equation*}
	
	(c) If Assumptions \ref{a1}, \ref{a2}, \ref{a3}, and \ref{a4} hold, then for all $\iota,\zeta>0$, with probability at least $1-\zeta$, the primal excess population risk satisfies
	\begin{equation*}
	\begin{aligned}
	&R\left(\mathcal{A}_\bold{w}(S)\right)-\frac{1+\iota}{1-\iota}\inf_{\bold{w}\in\mathcal{W}}R\left(\bold{w}\right) \\
	&\leq\sqrt{\frac{\left(1+L/\rho\right)^2G^2\gamma^2\left(1+64n\log\left(6/\zeta\right)\right)}{2(1-\iota)^2n}\log\left(\frac{6}{\zeta}\right)} \\
	&\quad+\sqrt{\frac{\left(G^2\gamma^2+64G^2n\gamma^2\log\left(6/\zeta\right)\right)}{2(1-\iota)^2n}\log\left(\frac{6}{\zeta}\right)} \\
	&\quad+\frac{50\sqrt{2}\left(1+e+L/\rho\right)G\gamma\log(n)}{1-\iota}\log\left(\frac{6e}{\zeta}\right) \\
	&\quad+\frac{(24+4\iota)M_\ell}{3\iota(1-\iota)n}\log\left(\frac{6}{\zeta}\right) \\
	&\quad+\frac{1}{1-\iota}\triangle_S^s\left(\mathcal{A}_\bold{w}(S),\mathcal{A}_\bold{v}(S)\right).
	\end{aligned}
	\end{equation*}
	
	(d) If Assumptions \ref{a1}, \ref{a2}, \ref{a3}, and \ref{a4} hold, then for all $\iota,\zeta>0$, with probability at least $1-\zeta$, the strong primal dual population risk satisfies\footnote{For simplicity, we couple $\triangle_S^s\left(\mathcal{A}_\bold{w}(S),\mathcal{A}_\bold{v}(S)\right)$ and its expectation together here, since there is a `global' upper bound for strong PD emprical risk. More details are shown in Appendix A.3.}
	\begin{equation*}
	\begin{aligned}
	&\triangle^s\left(\mathcal{A}_\bold{w}(S),\mathcal{A}_\bold{v}(S)\right) \\
	&\leq\frac{100\sqrt{2}e(1+\iota)(1+L/\rho)G\gamma\log(n)}{1-\iota}\log\left(\frac{e}{\zeta}\right) \\
	&\quad+\frac{144e(1+\iota)G^2}{\rho\iota(1-\iota)n}\log\left(\frac{e}{\zeta}\right)+\frac{8e(1+\iota)M_\ell}{n(1-\iota)}\log\left(\frac{e}{\zeta}\right) \\
	&\quad+\left(\frac{e\iota}{1-\iota}\log\left(\frac{e}{\zeta}\right)+1\right)\triangle_S^s\left(\mathcal{A}_\bold{w}(S),\mathcal{A}_\bold{v}(S)\right).
	\end{aligned}
	\end{equation*}
\end{theorem}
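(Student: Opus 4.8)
The plan is to reduce all four parts to a single high-probability generalization principle for uniformly stable algorithms with a bounded loss, applied to a carefully chosen surrogate loss, followed by a weighted-Young self-bounding step that manufactures the multiplicative $\tfrac{1}{1-\iota}$ factors and the fast $\mathcal{O}(M_\ell/n)$ terms. The base case is the plain loss: $\gamma$-argument stability together with $G$-Lipschitzness (Assumption \ref{a2}) gives $|\ell(\mathcal{A}_\bold{w}(S),\mathcal{A}_\bold{v}(S);z)-\ell(\mathcal{A}_\bold{w}(S'),\mathcal{A}_\bold{v}(S');z)|\le G(\|\mathcal{A}_\bold{w}(S)-\mathcal{A}_\bold{w}(S')\|_2+\|\mathcal{A}_\bold{v}(S)-\mathcal{A}_\bold{v}(S')\|_2)\le G\gamma$, so $\mathcal{A}$ is $G\gamma$-uniformly stable for $\ell$; feeding this with $0\le\ell\le M_\ell$ (Assumption \ref{a4}) into the high-probability generalization bound for uniformly stable algorithms (see, e.g., \cite{lei2021stability} for the minimax version) yields a deviation bound for $L(\mathcal{A}(S))-L_S(\mathcal{A}(S))$ consisting of a $\sqrt{\cdot}$-type term matching the one stated, a $G\gamma\log(n)\log(1/\zeta)$ term, and a second-moment term; applying $\sqrt{ab}\le\tfrac{\iota}{2}a+\tfrac{1}{2\iota}b$ to the latter moves an $\iota L(\mathcal{A}(S))$ contribution to the left-hand side, producing the factor $\tfrac{1}{1-\iota}$ in front of $L_S$ and the $\tfrac{(12+2\iota)M_\ell}{3\iota(1-\iota)n}\log(3/\zeta)$ term of part (a). For part (b) I would replace the primal risk $R(\bold{w})=\sup_\bold{v}L(\bold{w},\bold{v})$ by the honest fixed loss $\tilde\ell(\bold{w};z)=\ell(\bold{w},\bold{v}^*(\bold{w});z)$, where $\bold{v}^*(\bold{w})=\arg\max_{\bold{v}\in\mathcal{V}}L(\bold{w},\bold{v})$ is the population best response and is therefore independent of $S$. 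Under Assumptions \ref{a1} and \ref{a3}, the implicit-function argument (strong concavity of $L(\bold{w},\cdot)$ plus $L$-smoothness) makes $\bold{v}^*(\cdot)$ $(L/\rho)$-Lipschitz, hence $\tilde\ell(\cdot;z)$ is $(1+L/\rho)G$-Lipschitz with $0\le\tilde\ell\le M_\ell$, so $\mathcal{A}_\bold{w}$ is $(1+L/\rho)G\gamma$-uniformly stable for $\tilde\ell$; since $R(\bold{w})=\mathbb{E}_z\tilde\ell(\bold{w};z)$ while $\tfrac1n\sum_i\tilde\ell(\bold{w};z_i)=L_S(\bold{w},\bold{v}^*(\bold{w}))\le R_S(\bold{w})$, we obtain $R(\mathcal{A}_\bold{w}(S))-\tfrac{1}{1-\iota}R_S(\mathcal{A}_\bold{w}(S))\le\mathbb{E}_z\tilde\ell(\mathcal{A}_\bold{w}(S);z)-\tfrac{1}{1-\iota}\tfrac1n\sum_i\tilde\ell(\mathcal{A}_\bold{w}(S);z_i)$, and the argument of (a) with $G$ replaced by $(1+L/\rho)G$ gives part (b).

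For part (c) I would decompose $R(\mathcal{A}_\bold{w}(S))-\tfrac{1+\iota}{1-\iota}\inf_\bold{w}R(\bold{w})$ as $[R(\mathcal{A}_\bold{w}(S))-\tfrac{1}{1-\iota}R_S(\mathcal{A}_\bold{w}(S))]+\tfrac{1}{1-\iota}[R_S(\mathcal{A}_\bold{w}(S))-\inf_\bold{w}R_S(\bold{w})]+[\tfrac{1}{1-\iota}\inf_\bold{w}R_S(\bold{w})-\tfrac{1+\iota}{1-\iota}\inf_\bold{w}R(\bold{w})]$. The first bracket is part (b). For the second, for every $\bold{w}$ one has $\sup_\bold{v}L_S(\bold{w},\bold{v})\ge L_S(\bold{w},\mathcal{A}_\bold{v}(S))$, hence $R_S(\mathcal{A}_\bold{w}(S))-\inf_\bold{w}R_S(\bold{w})=\sup_\bold{v}L_S(\mathcal{A}_\bold{w}(S),\bold{v})-\inf_\bold{w}\sup_\bold{v}L_S(\bold{w},\bold{v})\le\sup_\bold{v}L_S(\mathcal{A}_\bold{w}(S),\bold{v})-\inf_\bold{w}L_S(\bold{w},\mathcal{A}_\bold{v}(S))=\triangle_S^s(\mathcal{A}_\bold{w}(S),\mathcal{A}_\bold{v}(S))$, which is the last term of (c). For the third bracket, take the population saddle point $(\bold{w}^\star,\bold{v}^\star)$, so that $\inf_\bold{w}R(\bold{w})=L(\bold{w}^\star,\bold{v}^\star)$ and $\nabla_\bold{v}L(\bold{w}^\star,\bold{v}^\star)=\bold{0}$; the gradient-domination inequality implied by $\rho$-strong concavity (Assumption \ref{a1}) gives $\inf_\bold{w}R_S(\bold{w})\le R_S(\bold{w}^\star)\le L_S(\bold{w}^\star,\bold{v}^\star)+\tfrac{1}{2\rho}\|\nabla_\bold{v}L_S(\bold{w}^\star,\bold{v}^\star)\|_2^2$. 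A Bernstein/self-bounding estimate on $L_S(\bold{w}^\star,\bold{v}^\star)$ at this fixed comparator (using $\mathrm{Var}\le M_\ell L(\bold{w}^\star,\bold{v}^\star)$) supplies the $\tfrac{1+\iota}{1-\iota}\inf_\bold{w}R(\bold{w})$ together with an $\mathcal{O}(M_\ell\log(1/\zeta)/(\iota n))$ term, while $\|\nabla_\bold{v}L_S(\bold{w}^\star,\bold{v}^\star)\|_2^2$ --- the squared norm of a zero-mean average of $G$-bounded gradients --- concentrates at the $\mathcal{O}(G^2\log(1/\zeta)/n)$ scale; the remaining $\sqrt{\cdot}$ and $(1+e+L/\rho)$ constants of (c) are just the collected plain-loss and primal-surrogate contributions.

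Part (d) is the symmetric statement: write $\triangle^s(\mathcal{A}_\bold{w}(S),\mathcal{A}_\bold{v}(S))=[\sup_{\bold{v}'}L(\mathcal{A}_\bold{w}(S),\bold{v}')-\sup_{\bold{v}'}L_S(\mathcal{A}_\bold{w}(S),\bold{v}')]+[\inf_{\bold{w}'}L_S(\bold{w}',\mathcal{A}_\bold{v}(S))-\inf_{\bold{w}'}L(\bold{w}',\mathcal{A}_\bold{v}(S))]+\triangle_S^s(\mathcal{A}_\bold{w}(S),\mathcal{A}_\bold{v}(S))$. The first bracket equals $R(\mathcal{A}_\bold{w}(S))-R_S(\mathcal{A}_\bold{w}(S))$, handled in (b) via the surrogate $\ell(\bold{w},\bold{v}^*(\bold{w});z)$; the second is its dual mirror, controlled through the surrogate $\ell(\bold{w}^*(\bold{v}),\bold{v};z)$ with $\bold{w}^*(\bold{v})=\arg\min_\bold{w}L(\bold{w},\bold{v})$, which is $(L/\rho)$-Lipschitz by strong convexity and smoothness, hence again stability $(1+L/\rho)G\gamma$; and $\triangle_S^s$ is coupled with its expectation through the global bound $\triangle_S^s\le M_\ell$ coming from $0\le\ell\le M_\ell$ (the footnoted fact). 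Applying the stable-algorithm generalization principle and self-bounding on each side and absorbing constants yields the bound of (d).

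\textbf{The main obstacle.} The routine ingredients here are the stable-algorithm concentration inequality and the weighted-Young self-bounding. The two delicate points are: (i) showing that the best-response maps $\bold{v}^*(\cdot)$ and $\bold{w}^*(\cdot)$ are $(L/\rho)$-Lipschitz despite the projection steps of Algorithm \ref{alg1} --- this is exactly where the SC-SC structure (Assumption \ref{a1}) together with $L$-smoothness (Assumption \ref{a3}) is needed so that the maximizer/minimizer behaves like a smooth strongly convex/concave optimum; and (ii) propagating the multiplicative $\tfrac{1}{1-\iota}$ and $\tfrac{1+\iota}{1-\iota}$ bookkeeping consistently through the coupled primal/dual decompositions of (c) and (d) so that the $\triangle_S^s$ term and the $\|\nabla_\bold{v}L_S(\bold{w}^\star,\bold{v}^\star)\|_2^2$ concentration do not disturb the leading $\mathcal{O}(G\gamma\log(n)\log(1/\zeta))$ behaviour. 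This coordinated accounting, rather than any single estimate, is where the work lies.
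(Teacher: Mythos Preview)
Your treatment of parts (a) and (b) is essentially the paper's argument: argument stability plus $G$-Lipschitzness gives $G\gamma$-uniform stability for $\ell$, and the surrogate $\tilde\ell(\bold{w};z)=\ell(\bold{w},\bold{v}^*(\bold{w});z)$ with $\bold{v}^*(\cdot)$ $(L/\rho)$-Lipschitz is exactly how the paper obtains the $(1+L/\rho)G\gamma$ sensitivity in (\ref{eq15}); the self-bounding step via $\sqrt{ab}\le\iota a+b/\iota$ is also the same.

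For part (c) your route diverges from the paper's. The paper decomposes
\[
R(\mathcal{A}_\bold{w}(S))-R(\bold{w}^*)=\underbrace{R-R_S}_{A}+\underbrace{R_S-L_S(\bold{w}^*,\mathcal{A}_\bold{v}(S))}_{B\le\triangle_S^s}+\underbrace{L_S(\bold{w}^*,\mathcal{A}_\bold{v}(S))-L(\bold{w}^*,\mathcal{A}_\bold{v}(S))}_{C}+\underbrace{L(\bold{w}^*,\mathcal{A}_\bold{v}(S))-R(\bold{w}^*)}_{D\le 0},
\]
where $\bold{w}^*=\arg\min_\bold{w}R(\bold{w})$ is fixed, and handles $C$ by re-running the part-(a) machinery at $(\bold{w}^*,\mathcal{A}_\bold{v}(S))$ using the stability of $\mathcal{A}_\bold{v}$; this is precisely what produces the second square-root term $\sqrt{(G^2\gamma^2+64G^2n\gamma^2\log(6/\zeta))/(2(1-\iota)^2n)}$ and the extra ``$e$'' in $(1+e+L/\rho)$. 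Your gradient-domination alternative at the population saddle $(\bold{w}^\star,\bold{v}^\star)$ is a legitimate idea, but it yields $O(G^2/(\rho n))$ and $O(M_\ell/n)$ terms rather than a $G\gamma$-term, so it does not recover the bound as stated.

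For part (d) there is a genuine gap. Splitting $\triangle^s$ into the primal gap $R-R_S$ and its dual mirror, and then self-bounding each piece as in (b), leaves you with $\iota\cdot R(\mathcal{A}_\bold{w}(S))$ and $\iota\cdot\inf_{\bold{w}}L(\bold{w},\mathcal{A}_\bold{v}(S))$ on the right-hand side; their \emph{sum}, not their difference, appears, so you cannot close the loop on $\triangle^s$. The paper instead bounds the moment $\bigl\Vert\triangle^s-\triangle_S^s\bigr\Vert_\tau$ directly and, crucially, controls the variance of the centred summands $g_i$ via the SC-SC inequality
\[
\mathbb{E}\bigl[g_i^2\bigr]\;\le\;\frac{4G^2}{\rho}\,\mathbb{E}_S\bigl[\triangle^s(\mathcal{A}_\bold{w}(S),\mathcal{A}_\bold{v}(S))\bigr],
\]
obtained from $\|\mathcal{A}_\bold{w}(S)-\bold{w}_S^*\|_2^2+\|\mathcal{A}_\bold{v}(S)-\bold{v}_S^*\|_2^2\le\tfrac{2}{\rho}\triangle^s$; this is what generates the $\tfrac{144e(1+\iota)G^2}{\rho\iota(1-\iota)n}$ term and, after first solving for $\mathbb{E}_S[\triangle^s]$ at $\tau=2$ and substituting back, what makes the self-bounding close on $\triangle^s$ itself. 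Your outline does not contain this step, and the sentence ``$\triangle_S^s$ is coupled with its expectation through the global bound $\triangle_S^s\le M_\ell$'' is not the mechanism the stated bound relies on.
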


\begin{remark}
	For part (a), the plain generalization error is essentially analyzed via uniform stability, defined as the upper bound of $\ell\left(\mathcal{A}_\bold{w}(S),\mathcal{A}_\bold{v}(S);z\right)-\ell\left(\mathcal{A}_\bold{w}(S'),\mathcal{A}_\bold{v}(S');z\right)$.
	If the loss function $\ell(\cdot,\cdot;\cdot)$ is $G$-Lipschitz (Assumption \ref{a2}), $\gamma$-argument stability derives $G\gamma$-uniform stability (details can be found in (\ref{eq32}) in Appendix A.3).
	Then the proof of part (a) is completed by the uniform stability of model $(\mathcal{A}_\bold{w}(S),\mathcal{A}_\bold{v}(S))$.
	For parts (b), (c), and (d), argument stability is applied to get the claimed results.
	The proof can be found in Appendix A.3.
\end{remark}

We give the bound of $\triangle_S^s\left(\mathcal{A}_\bold{w}(S),\mathcal{A}_\bold{v}(S)\right)$ in the following and then discuss the results given in Theorem \ref{the3}.

\begin{lemma}\label{lem9}
	If Assumptions \ref{a1} and \ref{a2} hold.
	Taking $\sigma$ given in Theorem \ref{the1}, and $\eta_t=\frac{1}{\rho t}$, then with probability at least $1-\zeta$ for $\zeta\in(\exp(-\frac{p}{8}),1)$, the strong primal dual empirical risk of the output of DP-GDA: $\mathcal{A}(S)=(\bar{\bold{w}}_T,\bar{\bold{v}}_T)$ satisfies
	\begin{equation*}
	\begin{aligned}
	&\triangle_S^s\left(\bar{\bold{w}}_T,\bar{\bold{v}}_T\right) \\
	&\leq \frac{G^2\log(eT)}{\rho T}+\frac{cG\left(g_\bold{w}+g_\bold{v}\right)\sqrt{Tp\log(1/\delta)}}{n\epsilon}p_\zeta \\
	&\quad+cG^2\log(eT)\left(\frac{p\log(1/\delta)}{\rho n^2\epsilon^2}p_\zeta^2+\frac{2\sqrt{p\log(1/\delta)}}{\rho\sqrt{T}n\epsilon}p_\zeta\right),
	\end{aligned}
	\end{equation*}
	for some constant $c$, where $p_\zeta,g_\bold{w},g_\bold{v}$ are defined as in Theorem \ref{the2}.
\end{lemma}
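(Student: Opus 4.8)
The plan is to carry out the classical Gradient Descent Ascent analysis under the $\rho$-SC-SC assumption while keeping explicit track of the injected Gaussian noise, and then substitute the variance $\sigma$ from Theorem~\ref{the1}. By the definitions of $\bar{\bold{w}}^*$ and $\bar{\bold{v}}^*$,
\begin{equation*}
\triangle_S^s(\bar{\bold{w}}_T,\bar{\bold{v}}_T)=L_S(\bar{\bold{w}}_T,\bar{\bold{v}}^*)-L_S(\bar{\bold{w}}^*,\bar{\bold{v}}_T),
\end{equation*}
and since $L_S(\cdot,\bar{\bold{v}}^*)$ is convex and $L_S(\bar{\bold{w}}^*,\cdot)$ is concave, Jensen's inequality applied to $\bar{\bold{w}}_T=\frac1T\sum_{t}\bold{w}_t$ and $\bar{\bold{v}}_T=\frac1T\sum_{t}\bold{v}_t$ gives
\begin{equation*}
\triangle_S^s(\bar{\bold{w}}_T,\bar{\bold{v}}_T)\leq\frac1T\sum_{t=1}^T\Big(L_S(\bold{w}_t,\bar{\bold{v}}^*)-L_S(\bar{\bold{w}}^*,\bold{v}_t)\Big).
\end{equation*}

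Inserting $\pm L_S(\bold{w}_t,\bold{v}_t)$ in each summand and using $\rho$-SC-SC (Assumption~\ref{a1}) to linearize both pieces, the $t$-th term is at most $\langle\nabla_\bold{w}L_S(\bold{w}_t,\bold{v}_t),\bold{w}_t-\bar{\bold{w}}^*\rangle+\langle\nabla_\bold{v}L_S(\bold{w}_t,\bold{v}_t),\bar{\bold{v}}^*-\bold{v}_t\rangle-\frac{\rho}{2}\|\bold{w}_t-\bar{\bold{w}}^*\|_2^2-\frac{\rho}{2}\|\bold{v}_t-\bar{\bold{v}}^*\|_2^2$. I would then bound the two inner products by the update rules of Algorithm~\ref{alg1} and the non-expansiveness of the projections: expanding $\|\bold{w}_{t+1}-\bar{\bold{w}}^*\|_2^2$ and $\|\bold{v}_{t+1}-\bar{\bold{v}}^*\|_2^2$ produces, for each $t$, a telescoping distance difference $\frac{\|\bold{w}_t-\bar{\bold{w}}^*\|_2^2-\|\bold{w}_{t+1}-\bar{\bold{w}}^*\|_2^2}{2\eta_t}$, a step-size-weighted square $\frac{\eta_t}{2}\|\nabla_\bold{w}L_S(\bold{w}_t,\bold{v}_t)+b_\bold{w}\|_2^2$, and a linear-in-noise term $\langle b_\bold{w},\bold{w}_t-\bar{\bold{w}}^*\rangle$ (and their $\bold{v}$-counterparts). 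Summing over $t=1,\dots,T$ and dividing by $T$, the choice $\eta_t=\frac1{\rho t}$ makes the telescoping differences collapse against the $-\frac{\rho}{2}\|\cdot\|_2^2$ contributions, leaving a non-positive remainder that is dropped, exactly as in the standard strongly convex gradient descent analysis (cf.\ \cite{shalev2014understanding}). Bounding $\|\nabla+b\|_2^2\leq G^2+2G\|b\|_2+\|b\|_2^2$ via $G$-Lipschitzness (Assumption~\ref{a2}) and using $\sum_{t=1}^T\eta_t\leq\frac{\log(eT)}{\rho}$ then yields the $\frac{G^2\log(eT)}{\rho T}$ term and, once the noise is controlled, the terms carrying $p_\zeta$ and $p_\zeta^2$.

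For the random quantities I would use high-probability control: for each Gaussian vector $b_\bold{w},b_\bold{v}\sim\mathcal{N}(0,\sigma^2I_p)$ a $\chi^2$ tail bound gives $\|b_\bold{w}\|_2,\|b_\bold{v}\|_2\leq\sigma\sqrt p\,p_\zeta$ with $p_\zeta=1+\big(\tfrac{8\log(2T/\zeta)}{p}\big)^{1/4}$, and a union bound over all $2T$ draws makes these hold simultaneously (the regime $\zeta\in(\exp(-\tfrac p8),1)$ is exactly where this sub-Gaussian deviation is valid); one also has $\|\tfrac1T\sum_t b_\bold{w}\|_2\leq\tfrac{\sigma\sqrt p}{\sqrt T}p_\zeta$ since $\tfrac1T\sum_t b_\bold{w}\sim\mathcal{N}(0,\tfrac{\sigma^2}{T}I_p)$. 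The delicate step is the linear-in-noise term $\frac1T\sum_t\langle b_\bold{w},\bold{w}_t-\bar{\bold{w}}^*\rangle$ (and its $\bold{v}$-counterpart): writing $\bold{w}_t-\bar{\bold{w}}^*=(\bold{w}_t-\bar{\bold{w}}_T)+(\bar{\bold{w}}_T-\bar{\bold{w}}^*)$, the second component contributes at most $\|\tfrac1T\sum_t b_\bold{w}\|_2\,g_\bold{w}$ with $g_\bold{w}=\|\bar{\bold{w}}_T-\bar{\bold{w}}^*\|_2$, while the first is handled via Cauchy--Schwarz and the iterate-to-average deviation bounds that the same recursion provides. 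Substituting $\sigma=\mathcal{O}\big(G\sqrt{T\log(1/\delta)}/(n\epsilon)\big)$ from Theorem~\ref{the1} turns $\sigma\sqrt p\,p_\zeta$ into $\mathcal{O}\big(G\sqrt{Tp\log(1/\delta)}/(n\epsilon)\big)p_\zeta$, and collecting everything reproduces the three groups of terms in the statement, with $c$ the constant inherited from the noise calibration of Lemma~\ref{lem8}.

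The main obstacle is precisely this linear-in-noise term. In the single-parameter DP gradient descent analysis of \cite{wang2017differentially} a suitable step size annihilates $\|b\|_2$ outright, but here the comparators $\bar{\bold{w}}^*,\bar{\bold{v}}^*$ depend on the whole noisy trajectory, so $\langle b_\bold{w},\bar{\bold{w}}^*\rangle$ is not a martingale difference and cannot simply be concentrated away; routing it through $g_\bold{w},g_\bold{v}$---themselves bounded in Remark~\ref{rem3}---is what keeps the rate sharp, and using the averaged-noise bound (variance $\sigma^2/T$) rather than a per-step bound is what avoids a spurious $\sqrt T$ blow-up.
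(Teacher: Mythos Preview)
Your overall approach is the same as the paper's: expand the squared distance to a fixed comparator, use $\rho$-SC-SC together with $\eta_t=1/(\rho t)$ to telescope, sum and average, bound the Gaussian norms high-probability via Lemma~\ref{lem1}, and finally substitute $\sigma$ from Theorem~\ref{the1}. Indeed the paper's proof of Lemma~\ref{lem9} is just a pointer back to inequality~(\ref{eq33}), derived inside the proof of Theorem~\ref{the2} in Appendix~A.2, followed by the substitution of $\sigma$.

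The one place you diverge is the linear-in-noise term. The paper does \emph{not} split $\bold{w}_t-\bar{\bold{w}}^*$; in the passage producing~(\ref{eq5}) the average $\frac{1}{T}\sum_t\langle\bar{\bold{w}}^*-\bold{w}_t,b_{\bold{w}}\rangle$ is collapsed directly to $\langle\bar{\bold{w}}^*-\bar{\bold{w}}_T,b_{\bold{w}}\rangle$---effectively treating $b_{\bold{w}}$ as a single vector common to all iterations---and then bounded by $g_{\bold{w}}\|b_{\bold{w}}\|_2$ via a single Cauchy--Schwarz. Your decomposition $(\bold{w}_t-\bar{\bold{w}}_T)+(\bar{\bold{w}}_T-\bar{\bold{w}}^*)$ is more scrupulous about per-step noise, and the second piece does deliver the $g_{\bold{w}}$ contribution (with an extra $1/\sqrt T$ to spare). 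But the first piece, $\frac{1}{T}\sum_t\langle b_{\bold{w},t},\bold{w}_t-\bar{\bold{w}}_T\rangle$, is left as ``handled via Cauchy--Schwarz and the iterate-to-average deviation bounds that the same recursion provides''---and no bound on $\|\bold{w}_t-\bar{\bold{w}}_T\|_2$ actually falls out of the recursion you wrote; a naive Cauchy--Schwarz there reintroduces the $M_{\mathcal W}$ factor you are trying to avoid. To land exactly on the lemma as stated you can simply mirror the paper's collapse and drop the split; if you want to be rigorous about the per-iteration noise, that first piece needs a genuine argument that you have not yet supplied.
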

The proof of Lemma \ref{lem9} is a vital part of Theorem \ref{the2}.
For clarity, we give a brief sketch in Appendix A.4.

Given the strong PD empirical risk $\triangle_S^s\left(\bar{\bold{w}}_T,\bar{\bold{v}}_T\right)$ and the argument stability parameter $\gamma$ of the output of DP-GDA: $(\bar{\bold{w}}_T,\bar{\bold{v}}_T)$, we now give some discussions of Theorem \ref{the3} in Remark \ref{rem5}.
\begin{remark}\label{rem5}
	Part (a) connects the argument stability with the plain generalization error, for the output of Algorithm \ref{alg1}, if taking $T=\mathcal{O}(n^{2/3})$, we have $\gamma=\mathcal{O}\left(\frac{p^{1/4}}{n^{1/2}\epsilon^{1/2}}\right)$
	as discussed in Remark \ref{rem4}.
	Then plugging this result back into the plain generalization error, with high probability, we can bound $L(\bar{\bold{w}}_T,\bar{\bold{v}}_T)$ by
	\begin{equation*}
	\frac{1}{1-\iota}\mathcal{O}\left(L_S(\bar{\bold{w}}_T,\bar{\bold{v}}_T)+\frac{p^{\frac{1}{4}}\log(n)}{n^{\frac{1}{2}}\epsilon^{\frac{1}{2}}}\right).
	\end{equation*} 
	For a well trained model (with $n$ large enough), the empirical risk $L_S(\bar{\bold{w}}_T,\bar{\bold{v}}_T)$ can be relatively small \cite{lever2013tighter,yang2019fast,cortes2021relative}.
	As a result, if $L_S(\bar{\bold{w}}_T,\bar{\bold{v}}_T)=\mathcal{O}\left(1/n^{1/2}\right)$, the plain generalization error along with the plain population risk can be bounded by $\mathcal{O}\left(\frac{p^{1/4}}{n^{1/2}\epsilon^{1/2}}\right)$ if $\log(\cdot)$ terms are omitted.
	
	Part (b) connects the argument stability with the primal generalization error.
	Like discussed above, taking $\gamma$ when $T=\mathcal{O}(1/n^{2/3})$, $R(\bar{\bold{w}}_T)$ can be bounded by
	\begin{equation*}
	\frac{1}{1-\iota}\mathcal{O}\left(R_S(\bar{\bold{w}}_T)+\frac{p^{\frac{1}{4}}\log(n)}{n^{\frac{1}{2}}\epsilon^{\frac{1}{2}}}\right)
	\end{equation*}
	with high probabilty.
	Thus, for well trained model, the primal generalization error along with the primal population risk can be bounded by $\mathcal{O}\left(\frac{p^{1/4}}{n^{1/2}\epsilon^{1/2}}\right)$.
	
	Part (c) connects the argument stability with the primal excess population risk.
	Here, we still taking $T=\mathcal{O}(1/n^{2/3})$.
	Under this circumstance, the strong PD empirical risk of $(\bar{\bold{w}}_T,\bar{\bold{v}}_T)$ can be bounded by
	\begin{equation}\label{eq36}
	\mathcal{O}\left(\frac{p^{\frac{3}{4}}}{n\epsilon^{\frac{3}{2}}}+\frac{1}{n^{\frac{2}{3}}}\right).
	\end{equation}
	As a result, the primal excess population risk can be bounded by
	\begin{equation*}
	\frac{1}{1-\iota}\mathcal{O}\left(\frac{p^{\frac{1}{4}}}{n^{\frac{1}{2}}\epsilon^{\frac{1}{2}}}+\frac{p^{\frac{3}{4}}}{n\epsilon^{\frac{3}{2}}}+\frac{1}{n^{\frac{2}{3}}}+\inf_{\bold{w}\in\mathcal{W}}R(w)\right).
	\end{equation*}
	If $\inf_{\bold{w}\in\mathcal{W}}R(w)$ is smaller than the maximum value of the other three terms, the primal population risk can be bounded by
	\begin{equation*}
	\mathcal{O}\left(\max\left\{\frac{p^{\frac{1}{4}}}{n^{\frac{1}{2}}\epsilon^{\frac{1}{2}}},\frac{p^{\frac{3}{4}}}{n\epsilon^{\frac{3}{2}}},\frac{1}{n^{\frac{2}{3}}}\right\}\right).
	\end{equation*}
	
	Part (d) connects the argument stability with the strong PD population risk.
	Like discussed above, if we take $T=\mathcal{O}(1/n^{2/3})$, the strong PD empirical risk of $(\bar{\bold{w}}_T,\bar{\bold{v}}_T)$ shares the same property as in (\ref{eq36}), and the strong PD population risk can be bounded by
	\begin{equation*}
	\mathcal{O}\left(\max\left\{\frac{p^{\frac{1}{4}}}{n^{\frac{1}{2}}\epsilon^{\frac{1}{2}}},\frac{p^{\frac{3}{4}}}{n\epsilon^{\frac{3}{2}}},\frac{1}{n^{\frac{2}{3}}}\right\}\right).
	\end{equation*}
\end{remark}

\begin{remark}\label{rem7}
	Here, we discuss coefficient $\iota$ in Theorem \ref{the3}.
	We first explain why $\iota$ exists.
	The existence of $\iota$ is beacuse when decomposing term $\sqrt{L(\bold{w},\bold{v})/n}$, inequality $\sqrt{ab}\leq\iota a+b/\iota$ was applied.
	Then we have\footnote{We omit other terms here (such as $\log(1/\zeta)$ and $M_\ell$), details can be found in the Appendix.} $\sqrt{L(\bold{w},\bold{v})/n}\leq\iota L(\bold{w},\bold{v})+1/(\iota n)$.
	In this way, we decouple $n$ from the square root, in order to get sharper bounds when connecting stability with the generalization error (the improvement is from $\mathcal{O}(\sqrt{1/n})$ to $\mathcal{O}(1/n)$), especially when the model is well behaved, $L(\bold{w},\bold{v})$ is a small value.
	Besides, when it comes to analyzing sharp generalization error, $\iota$ commonly appears.
	For clarity, in the following, we represent the population risk and the empirical risk by $Pf$ and $P_nf$, respectively, like in \cite{bartlett2005local}.
	\cite{bartlett2005local} proposes the generalization error like $Pf\leq\mathcal{O}\big(\iota(\frac{1}{\iota-1}P_nf+r^*+\frac{\log(1/\zeta)}{n})\big)$; \cite{klochkov2021stability} proposes the generalization error like $Pf\leq\mathcal{O}\big((1+\iota)P_nf+\frac{\log(1/\zeta)}{\iota n}\big)$.
	These results are similar to ours by rearrangement, differences are caused by choosing different $\iota$.
	Results in the similar form include \cite{catoni2007pac}, \cite{lever2013tighter}, \cite{yang2019fast}, to mention but a few.
	When the machine learning model is well trained and the empirical risk is small, generalization error of this form is better \cite{bartlett2005local,lever2013tighter,yang2019fast,klochkov2021stability,cortes2021relative}.
\end{remark}

According to the property of the weak PD population risk, we can directly get the following corollary via Theorem \ref{the3}.

\begin{table*}[t]
	\caption{Utility bounds of our proposed DP-GDA method.}
	\label{tab1}
	\vskip 0.15in
	\begin{center}
		\begin{small}
			\begin{sc}
				\begin{tabular}{ccccc}
					\toprule
					Generalization Measures & $\rho$ & $G$ & $L$ & Utility Bounds \\
					\midrule
					Weak PD Population Risk / Generalization Error & $\surd$ & $\surd$ & $\surd$ & $\mathcal{O}\left(\max\left\{\frac{p^{1/4}}{n^{1/2}\epsilon^{1/2}},\frac{p^{3/4}}{n\epsilon^{3/2}},\frac{1}{n^{2/3}}\right\}\right)$ \\
					Strong PD Population Risk / Generalization Error & $\surd$ & $\surd$ & $\surd$ & $\mathcal{O}\left(\max\left\{\frac{p^{1/4}}{n^{1/2}\epsilon^{1/2}},\frac{p^{3/4}}{n\epsilon^{3/2}},\frac{1}{n^{2/3}}\right\}\right)$ \\
					Primal Excess Population Risk & $\surd$ & $\surd$ & $\surd$ & $\mathcal{O}\left(\max\left\{\frac{p^{1/4}}{n^{1/2}\epsilon^{1/2}},\frac{p^{3/4}}{n\epsilon^{3/2}},\frac{1}{n^{2/3}}\right\}\right)$ \\
					Primal Population Risk / Generalization Error & $\surd$ & $\surd$ & $\surd$ & $\mathcal{O}\left(\frac{p^{1/4}}{n^{1/2}\epsilon^{1/2}}\right)$ \\
					Plain Population Risk / Generalization Error & $\surd$ & $\surd$ & $\times$ & $\mathcal{O}\left(\frac{p^{1/4}}{n^{1/2}\epsilon^{1/2}}\right)$ \\
					\bottomrule
				\end{tabular}
			\end{sc}
		\end{small}
	\end{center}
	\vskip -0.1in
\end{table*}

\begin{corollary}\label{cor1}
	With argument stability parameter $\gamma$,
	
	(a) Under the condition given in Theorem \ref{the3} part (d), the strong PD generalization error satisfies
	\begin{equation*}
	\begin{aligned}
	&\triangle^s\left(\mathcal{A}_\bold{w}(S),\mathcal{A}_\bold{v}(S)\right)-\triangle_S^s\left(\mathcal{A}_\bold{w}(S),\mathcal{A}_\bold{v}(S)\right) \\
	&\leq\frac{100\sqrt{2}e(1+\iota)(1+L/\rho)G\gamma\log(n)}{1-\iota}\log\left(\frac{e}{\zeta}\right) \\
	&\quad+\frac{144e(1+\iota)G^2}{\rho\iota(1-\iota)n}\log\left(\frac{e}{\zeta}\right)+\frac{8e(1+\iota)M_\ell}{n(1-\iota)}\log\left(\frac{e}{\zeta}\right) \\
	&\quad+\frac{e\iota}{1-\iota}\log\left(\frac{e}{\zeta}\right)\mathbb{E}_S\left[\triangle_S^s\left(\mathcal{A}_\bold{w}(S),\mathcal{A}_\bold{v}(S)\right)\right].
	\end{aligned}
	\end{equation*}
	
	(b) Under the condition given in Theorem \ref{the3} part (d), the weak primal dual population risk satisfies
	\begin{equation*}
	\begin{aligned}
	&\triangle^w\left(\mathcal{A}_\bold{w}(S),\mathcal{A}_\bold{v}(S)\right) \\
	&\leq\frac{100\sqrt{2}e(1+\iota)(1+L/\rho)G\gamma\log(n)}{1-\iota}\log\left(\frac{e}{\zeta}\right) \\
	&\quad+\frac{144e(1+\iota)G^2}{\rho\iota(1-\iota)n}\log\left(\frac{e}{\zeta}\right)+\frac{8e(1+\iota)M_\ell}{n(1-\iota)}\log\left(\frac{e}{\zeta}\right) \\
	&\quad+\left(\frac{e\iota}{1-\iota}\log\left(\frac{e}{\zeta}\right)+1\right)\triangle_S^s\left(\mathcal{A}_\bold{w}(S),\mathcal{A}_\bold{v}(S)\right).
	\end{aligned}
	\end{equation*}
	
	(c) Under the condition given in Theorem \ref{the3} part (d), the weak PD generalization error satisfies
	\begin{equation*}
	\begin{aligned}
	&\triangle^w\left(\mathcal{A}_\bold{w}(S),\mathcal{A}_\bold{v}(S)\right)-\triangle_S^w\left(\mathcal{A}_\bold{w}(S),\mathcal{A}_\bold{v}(S)\right) \\
	&\leq\frac{100\sqrt{2}e(1+\iota)(1+L/\rho)G\gamma\log(n)}{1-\iota}\log\left(\frac{e}{\zeta}\right) \\
	&\quad+\frac{144e(1+\iota)G^2}{\rho\iota(1-\iota)n}\log\left(\frac{e}{\zeta}\right)+\frac{8e(1+\iota)M_\ell}{n(1-\iota)}\log\left(\frac{e}{\zeta}\right) \\
	&\quad+\left(\frac{e\iota}{1-\iota}\log\left(\frac{e}{\zeta}\right)+2\right)\triangle_S^s\left(\mathcal{A}_\bold{w}(S),\mathcal{A}_\bold{v}(S)\right).
	\end{aligned}
	\end{equation*}
\end{corollary}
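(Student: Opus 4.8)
\emph{Proof strategy.} Corollary~\ref{cor1} is a bookkeeping consequence of Theorem~\ref{the3}(d) together with the elementary comparisons between the weak and strong primal--dual quantities collected in Remark~\ref{rem1}; no new probabilistic argument is required. Write $\triangle^{s},\triangle^{w},\triangle_S^{s},\triangle_S^{w}$ for these four quantities evaluated at the output $(\mathcal{A}_\bold{w}(S),\mathcal{A}_\bold{v}(S))$, let $T_1,T_2,T_3$ denote the three deterministic terms (in $\gamma,G,L,\rho,M_\ell,n,\zeta$) appearing on the right-hand side of Theorem~\ref{the3}(d), and work throughout on the probability-$(1-\zeta)$ event on which the conclusion of Theorem~\ref{the3}(d) (equivalently, of Theorem~\ref{the2} and Lemma~\ref{lem9}) holds. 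Part~(b) is then immediate: Remark~\ref{rem1} gives $\triangle^{w}\le\mathbb{E}[\triangle^{s}]$, so the bound of Theorem~\ref{the3}(d) on $\triangle^{s}$ transfers to $\triangle^{w}$, and since its right-hand side is already in the asserted form, substituting yields part~(b) verbatim.

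For part~(a) I would re-open the proof of Theorem~\ref{the3}(d) one step before its final line. As the footnote to Theorem~\ref{the3}(d) points out, the empirical strong PD gap enters there through the trivial identity $\triangle^{s}=\triangle_S^{s}+(\triangle^{s}-\triangle_S^{s})$, which contributes the coefficient $1$ on $\triangle_S^{s}$, while the remaining generalization gap $\triangle^{s}-\triangle_S^{s}$ is controlled by the $G\gamma$-uniform stability of $\mathcal{A}$ (Assumption~\ref{a2} applied to $\gamma$-argument stability) via a moment/concentration estimate whose centering step produces $\mathbb{E}_S[\triangle_S^{s}]$ rather than the random $\triangle_S^{s}$. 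Keeping the two contributions separate gives the mixed intermediate inequality
\begin{equation*}
\triangle^{s}\le T_1+T_2+T_3+\triangle_S^{s}+\frac{e\iota}{1-\iota}\log(e/\zeta)\mathbb{E}_S[\triangle_S^{s}],
\end{equation*}
and subtracting $\triangle_S^{s}$ from both sides is exactly part~(a). (Theorem~\ref{the3}(d) itself is recovered by ``coupling'' the two $\triangle_S^{s}$-contributions, i.e.\ bounding $\mathbb{E}_S[\triangle_S^{s}]$ via the global control on the strong PD empirical risk --- for instance $\triangle_S^{s}\le M_\ell$ from Assumption~\ref{a4} --- so that the last two terms merge into $(\frac{e\iota}{1-\iota}\log(e/\zeta)+1)\triangle_S^{s}$.)

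For part~(c) I would start from part~(b) and subtract $\triangle_S^{w}$, so that the only additional input is an upper bound on $-\triangle_S^{w}$ in terms of $\triangle_S^{s}$; this is supplied by the comparison between $\triangle_S^{w}$ and $\triangle_S^{s}$ in Remark~\ref{rem1} together with the pointwise non-negativity $\triangle_S^{s}\ge0$ (obtained by evaluating the $\sup_{\bold{v}'}$ at $\mathcal{A}_\bold{v}(S)$ and the $\inf_{\bold{w}'}$ at $\mathcal{A}_\bold{w}(S)$), yielding $-\triangle_S^{w}\le\triangle_S^{s}$. Adding this one extra unit of $\triangle_S^{s}$ to the coefficient $\frac{e\iota}{1-\iota}\log(e/\zeta)+1$ coming from part~(b) produces the stated $\frac{e\iota}{1-\iota}\log(e/\zeta)+2$, the three deterministic terms being unchanged.

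\textbf{Main obstacle.} There is no substantive difficulty; the corollary is organizational once Theorem~\ref{the3}(d) is in hand. The one point that needs care is keeping the random $\triangle_S^{s}(\mathcal{A}_\bold{w}(S),\mathcal{A}_\bold{v}(S))$ and its expectation $\mathbb{E}_S[\triangle_S^{s}(\mathcal{A}_\bold{w}(S),\mathcal{A}_\bold{v}(S))]$ distinct when un-coupling the bound of Theorem~\ref{the3}(d) --- this is exactly what upgrades the ``coupled'' statement of the theorem into the sharper generalization-error statement of part~(a) --- and, correspondingly, checking in which direction each comparison of Remark~\ref{rem1} (and the non-negativity of $\triangle_S^{s}$) is invoked so that the numerical coefficients come out as written.
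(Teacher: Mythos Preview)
Your proposal is correct and matches the paper's own proof essentially step for step: part~(a) is obtained by going back to the intermediate inequality~(\ref{eq37}) in the proof of Theorem~\ref{the3}(d) (where $\triangle_S^s$ and $\mathbb{E}_S[\triangle_S^s]$ are still separate) and moving $\triangle_S^s$ to the left, part~(b) follows from $\triangle^w\le\mathbb{E}[\triangle^s]$ of Remark~\ref{rem1}, and part~(c) adds one further unit of $\triangle_S^s$ to the coefficient. The only cosmetic difference is in part~(c): the paper writes $\triangle^w-\triangle_S^w\le\triangle^w+|\triangle_S^w|\le\mathbb{E}[\triangle^s]+|\mathbb{E}[\triangle_S^s]|$ and then couples, whereas you subtract $\triangle_S^w$ from the bound of part~(b) and invoke $-\triangle_S^w\le\triangle_S^s$; both routes deliver the same ``$+2$'' and are equivalent bookkeeping.
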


Corollary \ref{cor1} can be easily extended from Theorem \ref{the3} part (d).
For clarity, we give proof sketches in Appendix A.5.
Meanwhile, the strong and weak PD generalization errors and the weak PD population risk share similar properties with the strong PD population risk given in Thorem \ref{the3} part (d), as discussed in Remark \ref{rem5}.

\begin{remark}\label{rem6}
	By the discussions given in Remark \ref{rem5}, we find that the bottleneck of the generalization bounds is the injected random noise, rather than the analysis apporach itself.
	The reason is that we apply novel decomposition methods (motivated by \cite{klochkov2021stability}) and overcome the $\mathcal{O}(1/\sqrt{n})$ terms when connecting the stability with the generalization error.
	However, when it comes to DP paradigm, when analyzing $\gamma$, there exists term\footnote{We pay our attentions to the balance between $T$ and $n$ here and omit parameters $p,\epsilon$.}
	\begin{equation*}
	\mathcal{O}\left(\sqrt{\frac{\sqrt{T}}{n}\sqrt{\frac{1}{T}}}\right),
	\end{equation*}
	in which term $\sqrt{T}/n$ is derived from the standard deviation $\sigma$ of the random noise and term $\sqrt{1/T}$ is derived from the optimization error.
	Together with the discussions given before, in the minimax problem, it is hard to eliminate term $\|b\|_2$ because $\bold{w}$ differs when $\bold{v}$ changes.
	So the best result is $\mathcal{O}(1/\sqrt{n})$, under the setting of gradient perturbation concentrated by this paper, and this may give inspirations to other researchers on how to get better generalization performance in the DP-minimax condition.
\end{remark}

\section{6. Comparisons with Related Work}

In this section, we compare our given bounds with previous related work.
The generalization bounds under corresponding assumptions are listed in detail in Table \ref{tab1}, in which $\rho,G,L$ represents $\rho$-SC-SC, $G$-Lipschitz, and $L$-smooth, respectively.
Considering there is no existed work analyzing the theoretical bounds for the general DP-minimax problem, so we only give our results in Table \ref{tab1}.

\subsection{6.1. Non-DP General Minimax Setting}
For the general minimax problem, DP has not been applied to the best of our knowledge so we compare our results with previous non-DP results \cite{farnia2021train,lei2021stability,zhang2021generalization}.
Among them, \cite{zhang2021generalization} focuses on ESP and \cite{lei2021stability} focuses on SGDA and AGDA, so the only existed result for GDA is given in \cite{farnia2021train}, it is an $\mathcal{O}\big(1/n\big)$ primal generalization error for GDA in expectation.

\subsection{6.2. DP-Minimax Settings}
For DP-minimax settings, to the best of our knowledge, the theoretical results are only explicitly given for DP-AUC maximization:
\cite{wang2021differentially} gives the expectation excess population risk bounds for DP-AUC maximization problems under output perturbation and objective perturbation methods.
The results are of the orders $\mathcal{O}\big(\max\big\{p^{1/3}/(n\epsilon)^{2/3},1/\sqrt{n}\big\}\big)$ and $\mathcal{O}\big(\max\big\{\sqrt{p}/(\sqrt{n}\epsilon^2),1/n^{1/3}\big\}\big)$ for output and objective perturbations, respectively.
Considering that it is hard to compare high probability bounds with expectation bounds, and the perturbation methods are different, we only list them here for comparisons.
\cite{huai2020pairwise,yang2021stability} analyzes the high probability excess population risk for DP-AUC maximization problems under the pairswise learning paradigm, whose bounds are of the order $\mathcal{O}\big(\sqrt{p}/\sqrt{n}\epsilon\big)$.
In this paper, corresponding generalization measure is the plain population risk, so our result is better than previous ones, by an order of $\mathcal{O}\big(p^{1/4}/\epsilon^{1/2}\big)$, considering $\epsilon$ is always set smaller than 1 for meaningful DP.

\section{7. Conclusion}

In this paper, we focus on the differential privacy of general minimax paradigm and propose DP-GDA.
Except for the privacy guarantees, we analyze the stability of DP-GDA and connect it with the generalization performance.
Furthermore, we give corresponding bounds under different generalization measures and compare them with previous works on differetially private particular minimax models, theoretical results show that our generalization bounds are better.
We believe that our analysis and discussions will give inspirations to DP-minimax generalization performance researches.
In future work, we will attempt to relax the assumptions needed in this paper, and overcome challenges brought by the injected random noise, in order to give sharper generalization bounds for general DP-minimax paradigm.

\bibliographystyle{aaai}
\bibliography{icml2022}

\begin{thebibliography}{}

\bibitem[\protect\citeauthoryear{Abadi \bgroup et al\mbox.\egroup
  }{2016}]{abadi2016deep}
Abadi, M.; Chu, A.; Goodfellow, I.; McMahan, H.~B.; Mironov, I.; Talwar, K.;
  and Zhang, L.
\newblock 2016.
\newblock Deep learning with differential privacy.
\newblock In {\em Proceedings of the 2016 ACM SIGSAC Conference on Computer and
  Communications Security},  308--318.

\bibitem[\protect\citeauthoryear{Backes \bgroup et al\mbox.\egroup
  }{2016}]{backes2016membership}
Backes, M.; Berrang, P.; Humbert, M.; and Manoharan, P.
\newblock 2016.
\newblock Membership privacy in microrna-based studies.
\newblock In {\em Proceedings of the 2016 {ACM} {SIGSAC} Conference on Computer
  and Communications Security},  319--330.

\bibitem[\protect\citeauthoryear{Bartlett, Bousquet, and
  Mendelson}{2002}]{bartlett2002localized}
Bartlett, P.~L.; Bousquet, O.; and Mendelson, S.
\newblock 2002.
\newblock Localized rademacher complexities.
\newblock In {\em Computational Learning Theory, 15th Annual Conference on
  Computational Learning Theory},  44--58.

\bibitem[\protect\citeauthoryear{Bartlett, Bousquet, and
  Mendelson}{2005}]{bartlett2005local}
Bartlett, P.~L.; Bousquet, O.; and Mendelson, S.
\newblock 2005.
\newblock Local rademacher complexities.
\newblock {\em The Annals of Statistics}  1497--1537.

\bibitem[\protect\citeauthoryear{Bassily \bgroup et al\mbox.\egroup
  }{2019}]{bassily2019private}
Bassily, R.; Feldman, V.; Talwar, K.; and Guha~Thakurta, A.
\newblock 2019.
\newblock Private stochastic convex optimization with optimal rates.
\newblock In {\em Advances in Neural Information Processing Systems},
  11279--11288.

\bibitem[\protect\citeauthoryear{Bassily \bgroup et al\mbox.\egroup
  }{2020}]{bassily2020stability}
Bassily, R.; Feldman, V.; Guzm\'{a}n, C.; and Talwar, K.
\newblock 2020.
\newblock Stability of stochastic gradient descent on nonsmooth convex losses.
\newblock In {\em Advances in Neural Information Processing Systems},
  4381--4391.

\bibitem[\protect\citeauthoryear{Boucheron, Lugosi, and
  Massart}{2013}]{boucheron2013concentration}
Boucheron, S.; Lugosi, G.; and Massart, P.
\newblock 2013.
\newblock {\em Concentration inequalities: A nonasymptotic theory of
  independence}.
\newblock Oxford university press.

\bibitem[\protect\citeauthoryear{Bousquet and
  Elisseeff}{2002}]{bousquet2002stability}
Bousquet, O., and Elisseeff, A.
\newblock 2002.
\newblock Stability and generalization.
\newblock {\em The Journal of Machine Learning Research}  499--526.

\bibitem[\protect\citeauthoryear{Bousquet, Klochkov, and
  Zhivotovskiy}{2020}]{bousquet2020sharper}
Bousquet, O.; Klochkov, Y.; and Zhivotovskiy, N.
\newblock 2020.
\newblock Sharper bounds for uniformly stable algorithms.
\newblock In {\em Conference on Learning Theory},  610--626.

\bibitem[\protect\citeauthoryear{Bun and Steinke}{2016}]{bun2016concentrated}
Bun, M., and Steinke, T.
\newblock 2016.
\newblock Concentrated differential privacy: Simplifications, extensions, and
  lower bounds.
\newblock In {\em Theory of Cryptography - 14th International Conference,
  {TCC}},  635--658.

\bibitem[\protect\citeauthoryear{Carlini \bgroup et al\mbox.\egroup
  }{2019}]{carlini2019the}
Carlini, N.; Liu, C.; Erlingsson, {\'{U}}.; Kos, J.; and Song, D.
\newblock 2019.
\newblock The secret sharer: Evaluating and testing unintended memorization in
  neural networks.
\newblock In {\em 28th {USENIX} Security Symposium},  267--284.

\bibitem[\protect\citeauthoryear{Catoni}{2007}]{catoni2007pac}
Catoni, O.
\newblock 2007.
\newblock Pac-bayesian supervised classification.
\newblock {\em Lecture Notes-Monograph Series. IMS}.

\bibitem[\protect\citeauthoryear{Chaudhuri, Monteleoni, and
  Sarwate}{2011}]{chaudhuri2011differentially}
Chaudhuri, K.; Monteleoni, C.; and Sarwate, A.~D.
\newblock 2011.
\newblock Differentially private empirical risk minimization.
\newblock {\em Journal of Machine Learning Research}  1069--1109.

\bibitem[\protect\citeauthoryear{Chen \bgroup et al\mbox.\egroup
  }{2017}]{chen2017robust}
Chen, R.~S.; Lucier, B.; Singer, Y.; and Syrgkanis, V.
\newblock 2017.
\newblock Robust optimization for non-convex objectives.
\newblock In {\em Advances in Neural Information Processing Systems},
  4705--4714.

\bibitem[\protect\citeauthoryear{Chen \bgroup et al\mbox.\egroup
  }{2021a}]{chen2021an}
Chen, X.; Zhang, T.; Shen, S.; Zhu, T.; and Xiong, P.
\newblock 2021a.
\newblock An optimized differential privacy scheme with reinforcement learning
  in {VANET}.
\newblock {\em Computers \& Security}  102446.

\bibitem[\protect\citeauthoryear{Chen \bgroup et al\mbox.\egroup
  }{2021b}]{chen2021proximal}
Chen, Z.; Zhou, Y.; Xu, T.; and Liang, Y.
\newblock 2021b.
\newblock Proximal gradient descent-ascent: Variable convergence under k{\l}
  geometry.
\newblock In {\em 9th International Conference on Learning Representations}.

\bibitem[\protect\citeauthoryear{Cortes, Mohri, and
  Suresh}{2021}]{cortes2021relative}
Cortes, C.; Mohri, M.; and Suresh, A.~T.
\newblock 2021.
\newblock Relative deviation margin bounds.
\newblock In {\em Proceedings of the 38th International Conference on Machine
  Learning},  2122--2131.

\bibitem[\protect\citeauthoryear{Dai \bgroup et al\mbox.\egroup
  }{2018}]{dai2018sbeed}
Dai, B.; Shaw, A.; Li, L.; Xiao, L.; He, N.; Liu, Z.; Chen, J.; and Song, L.
\newblock 2018.
\newblock {SBEED:} convergent reinforcement learning with nonlinear function
  approximation.
\newblock In {\em Proceedings of the 35th International Conference on Machine
  Learning},  1133--1142.

\bibitem[\protect\citeauthoryear{Deng, He, and Su}{2021}]{deng2021toward}
Deng, Z.; He, H.; and Su, W.~J.
\newblock 2021.
\newblock Toward better generalization bounds with locally elastic stability.
\newblock In {\em Proceedings of the 38th International Conference on Machine
  Learning},  2590--2600.

\bibitem[\protect\citeauthoryear{Diakonikolas, Daskalakis, and
  Jordan}{2021}]{diakonikolas2021efficient}
Diakonikolas, J.; Daskalakis, C.; and Jordan, M.~I.
\newblock 2021.
\newblock Efficient methods for structured nonconvex-nonconcave min-max
  optimization.
\newblock In {\em The 24th International Conference on Artificial Intelligence
  and Statistics},  2746--2754.

\bibitem[\protect\citeauthoryear{Du \bgroup et al\mbox.\egroup
  }{2017}]{du2017stochastic}
Du, S.~S.; Chen, J.; Li, L.; Xiao, L.; and Zhou, D.
\newblock 2017.
\newblock Stochastic variance reduction methods for policy evaluation.
\newblock In {\em Proceedings of the 34th International Conference on Machine
  Learning},  1049--1058.

\bibitem[\protect\citeauthoryear{Dwork and Roth}{2014}]{dwork2014the}
Dwork, C., and Roth, A.
\newblock 2014.
\newblock The algorithmic foundations of differential privacy.
\newblock {\em Foundations and Trends{\textregistered} in Theoretical Computer
  Science}  211--407.

\bibitem[\protect\citeauthoryear{Dwork \bgroup et al\mbox.\egroup
  }{2006}]{dwork2006calibrating}
Dwork, C.; McSherry, F.; Nissim, K.; and Smith, A.~D.
\newblock 2006.
\newblock Calibrating noise to sensitivity in private data analysis.
\newblock In {\em Theory of Cryptography Conference},  265--284.

\bibitem[\protect\citeauthoryear{Farnia and Ozdaglar}{2021}]{farnia2021train}
Farnia, F., and Ozdaglar, A.~E.
\newblock 2021.
\newblock Train simultaneously, generalize better: Stability of gradient-based
  minimax learners.
\newblock In {\em Proceedings of the 38th International Conference on Machine
  Learning},  3174--3185.

\bibitem[\protect\citeauthoryear{Feldman, Koren, and
  Talwar}{2020}]{feldman2020private}
Feldman, V.; Koren, T.; and Talwar, K.
\newblock 2020.
\newblock Private stochastic convex optimization: Optimal rates in linear time.
\newblock In {\em Proceedings of the 52nd Annual ACM SIGACT Symposium on Theory
  of Computing},  439–449.

\bibitem[\protect\citeauthoryear{Fiez and Ratliff}{2021}]{fiez2021local}
Fiez, T., and Ratliff, L.~J.
\newblock 2021.
\newblock Local convergence analysis of gradient descent ascent with finite
  timescale separation.
\newblock In {\em 9th International Conference on Learning Representations}.

\bibitem[\protect\citeauthoryear{Fredrikson \bgroup et al\mbox.\egroup
  }{2014}]{fredrikson2014privacy}
Fredrikson, M.; Lantz, E.; Jha, S.; Lin, S.~M.; Page, D.; and Ristenpart, T.
\newblock 2014.
\newblock Privacy in pharmacogenetics: An end-to-end case study of personalized
  warfarin dosing.
\newblock In {\em Proceedings of the 23rd {USENIX} Security Symposium},
  17--32.

\bibitem[\protect\citeauthoryear{Gao \bgroup et al\mbox.\egroup
  }{2013}]{gao2013one}
Gao, W.; Jin, R.; Zhu, S.; and Zhou, Z.
\newblock 2013.
\newblock One-pass {AUC} optimization.
\newblock In {\em Proceedings of the 30th International Conference on Machine
  Learning},  906--914.

\bibitem[\protect\citeauthoryear{Giraldo \bgroup et al\mbox.\egroup
  }{2020}]{giraldo2020adversarial}
Giraldo, J.; C{\'{a}}rdenas, A.~A.; Kantarcioglu, M.; and Katz, J.
\newblock 2020.
\newblock Adversarial classification under differential privacy.
\newblock In {\em 27th Annual Network and Distributed System Security
  Symposium}.

\bibitem[\protect\citeauthoryear{Goodfellow \bgroup et al\mbox.\egroup
  }{2014}]{goodfellow2014generative}
Goodfellow, I.~J.; Pouget{-}Abadie, J.; Mirza, M.; Xu, B.; Warde{-}Farley, D.;
  Ozair, S.; Courville, A.~C.; and Bengio, Y.
\newblock 2014.
\newblock Generative adversarial nets.
\newblock In {\em Advances in Neural Information Processing Systems},
  2672--2680.

\bibitem[\protect\citeauthoryear{Hardt, Recht, and
  Singer}{2016}]{hardt2016train}
Hardt, M.; Recht, B.; and Singer, Y.
\newblock 2016.
\newblock Train faster, generalize better: Stability of stochastic gradient
  descent.
\newblock In {\em Proceedings of The 33rd International Conference on Machine
  Learning},  1225--1234.

\bibitem[\protect\citeauthoryear{Huai \bgroup et al\mbox.\egroup
  }{2020}]{huai2020pairwise}
Huai, M.; Wang, D.; Miao, C.; Xu, J.; and Zhang, A.
\newblock 2020.
\newblock Pairwise learning with differential privacy guarantees.
\newblock In {\em The Thirty-Fourth {AAAI} Conference on Artificial
  Intelligence},  694--701.

\bibitem[\protect\citeauthoryear{Jayaraman and
  Evans}{2019}]{jayaraman2019evaluating}
Jayaraman, B., and Evans, D.
\newblock 2019.
\newblock Evaluating differentially private machine learning in practice.
\newblock In {\em 28th {USENIX} Security Symposium},  1895--1912.

\bibitem[\protect\citeauthoryear{Kifer, Smith, and
  Thakurta}{2012}]{kifer2012private}
Kifer, D.; Smith, A.~D.; and Thakurta, A.
\newblock 2012.
\newblock Private convex optimization for empirical risk minimization with
  applications to high-dimensional regression.
\newblock In {\em The 25th Annual Conference on Learning Theory},  25.1--25.40.

\bibitem[\protect\citeauthoryear{Klochkov and
  Zhivotovskiy}{2021}]{klochkov2021stability}
Klochkov, Y., and Zhivotovskiy, N.
\newblock 2021.
\newblock Stability and deviation optimal risk bounds with convergence rate $ o
  (1/n) $.
\newblock In {\em Advances in Neural Information Processing Systems}.

\bibitem[\protect\citeauthoryear{L{\'{e}}cuyer \bgroup et al\mbox.\egroup
  }{2019}]{lecuyer2019certified}
L{\'{e}}cuyer, M.; Atlidakis, V.; Geambasu, R.; Hsu, D.; and Jana, S.
\newblock 2019.
\newblock Certified robustness to adversarial examples with differential
  privacy.
\newblock In {\em 2019 {IEEE} Symposium on Security and Privacy},  656--672.

\bibitem[\protect\citeauthoryear{Lei and Ying}{2021}]{lei2021stochastic}
Lei, Y., and Ying, Y.
\newblock 2021.
\newblock Stochastic proximal auc maximization.
\newblock {\em Journal of Machine Learning Research}  1--45.

\bibitem[\protect\citeauthoryear{Lei \bgroup et al\mbox.\egroup
  }{2021}]{lei2021stability}
Lei, Y.; Yang, Z.; Yang, T.; and Ying, Y.
\newblock 2021.
\newblock Stability and generalization of stochastic gradient methods for
  minimax problems.
\newblock In {\em Proceedings of the 38th International Conference on Machine
  Learning},  6175--6186.

\bibitem[\protect\citeauthoryear{Lever, Laviolette, and
  Shawe-Taylor}{2013}]{lever2013tighter}
Lever, G.; Laviolette, F.; and Shawe-Taylor, J.
\newblock 2013.
\newblock Tighter pac-bayes bounds through distribution-dependent priors.
\newblock {\em Theoretical Computer Science}  4--28.

\bibitem[\protect\citeauthoryear{Lin, Jin, and Jordan}{2020a}]{lin2020near}
Lin, T.; Jin, C.; and Jordan, M.~I.
\newblock 2020a.
\newblock Near-optimal algorithms for minimax optimization.
\newblock In {\em Conference on Learning Theory},  2738--2779.

\bibitem[\protect\citeauthoryear{Lin, Jin, and Jordan}{2020b}]{lin2020on}
Lin, T.; Jin, C.; and Jordan, M.~I.
\newblock 2020b.
\newblock On gradient descent ascent for nonconvex-concave minimax problems.
\newblock In {\em Proceedings of the 37th International Conference on Machine
  Learning},  6083--6093.

\bibitem[\protect\citeauthoryear{Lin, Sekar, and Fanti}{2021}]{lin2021on}
Lin, Z.; Sekar, V.; and Fanti, G.~C.
\newblock 2021.
\newblock On the privacy properties of gan-generated samples.
\newblock In {\em International Conference on Artificial Intelligence and
  Statistics},  1522--1530.

\bibitem[\protect\citeauthoryear{Liu \bgroup et al\mbox.\egroup
  }{2017}]{liu2017algorithmic}
Liu, T.; Lugosi, G.; Neu, G.; and Tao, D.
\newblock 2017.
\newblock Algorithmic stability and hypothesis complexity.
\newblock In {\em Proceedings of the 34th International Conference on Machine
  Learning},  2159--2167.

\bibitem[\protect\citeauthoryear{Liu \bgroup et al\mbox.\egroup
  }{2018}]{liu2018fast}
Liu, M.; Zhang, X.; Chen, Z.; Wang, X.; and Yang, T.
\newblock 2018.
\newblock Fast stochastic {AUC} maximization with ${O}(1/n) $-convergence rate.
\newblock In {\em Proceedings of the 35th International Conference on Machine
  Learning},  3195--3203.

\bibitem[\protect\citeauthoryear{Liu \bgroup et al\mbox.\egroup
  }{2021}]{liu2021first}
Liu, M.; Rafique, H.; Lin, Q.; and Yang, T.
\newblock 2021.
\newblock First-order convergence theory for weakly-convex-weakly-concave
  min-max problems.
\newblock {\em Journal of Machine Learning Research}  1--34.

\bibitem[\protect\citeauthoryear{Loizou \bgroup et al\mbox.\egroup
  }{2020}]{loizou2020stochastic}
Loizou, N.; Berard, H.; Jolicoeur{-}Martineau, A.; Vincent, P.;
  Lacoste{-}Julien, S.; and Mitliagkas, I.
\newblock 2020.
\newblock Stochastic hamiltonian gradient methods for smooth games.
\newblock In {\em Proceedings of the 37th International Conference on Machine
  Learning},  6370--6381.

\bibitem[\protect\citeauthoryear{Loizou \bgroup et al\mbox.\egroup
  }{2021}]{loizou2021stochastic}
Loizou, N.; Berard, H.; Gidel, G.; Mitliagkas, I.; and Lacoste-Julien, S.
\newblock 2021.
\newblock Stochastic gradient descent-ascent and consensus optimization for
  smooth games: Convergence analysis under expected co-coercivity.
\newblock {\em Advances in Neural Information Processing Systems}.

\bibitem[\protect\citeauthoryear{Lu \bgroup et al\mbox.\egroup
  }{2020}]{lu2020hybird}
Lu, S.; Tsaknakis, I.~C.; Hong, M.; and Chen, Y.
\newblock 2020.
\newblock Hybrid block successive approximation for one-sided non-convex
  min-max problems: Algorithms and applications.
\newblock {\em {IEEE} Transactions on Signal Processing}  3676--3691.

\bibitem[\protect\citeauthoryear{Luo \bgroup et al\mbox.\egroup
  }{2020}]{luo2020stochastic}
Luo, L.; Ye, H.; Huang, Z.; and Zhang, T.
\newblock 2020.
\newblock Stochastic recursive gradient descent ascent for stochastic
  nonconvex-strongly-concave minimax problems.
\newblock In {\em Advances in Neural Information Processing Systems}.

\bibitem[\protect\citeauthoryear{Mateos, Bazerque, and
  Giannakis}{2010}]{mateos2010distributed}
Mateos, G.; Bazerque, J.~A.; and Giannakis, G.~B.
\newblock 2010.
\newblock Distributed sparse linear regression.
\newblock {\em {IEEE} Transactions on Signal Processing}  5262--5276.

\bibitem[\protect\citeauthoryear{Mokhtari, Ozdaglar, and
  Pattathil}{2020}]{mokhtari2020a}
Mokhtari, A.; Ozdaglar, A.~E.; and Pattathil, S.
\newblock 2020.
\newblock A unified analysis of extra-gradient and optimistic gradient methods
  for saddle point problems: Proximal point approach.
\newblock In {\em The 23rd International Conference on Artificial Intelligence
  and Statistics},  1497--1507.

\bibitem[\protect\citeauthoryear{Namkoong and
  Duchi}{2017}]{namkoong2017variance}
Namkoong, H., and Duchi, J.~C.
\newblock 2017.
\newblock Variance-based regularization with convex objectives.
\newblock In {\em Advances in Neural Information Processing Systems},
  2971--2980.

\bibitem[\protect\citeauthoryear{Nemirovski \bgroup et al\mbox.\egroup
  }{2009}]{nemirovski2009robust}
Nemirovski, A.; Juditsky, A.~B.; Lan, G.; and Shapiro, A.
\newblock 2009.
\newblock Robust stochastic approximation approach to stochastic programming.
\newblock {\em {SIAM} Journal on Optimization}  1574--1609.

\bibitem[\protect\citeauthoryear{Phan \bgroup et al\mbox.\egroup
  }{2020}]{phan2020scalable}
Phan, H.; Thai, M.~T.; Hu, H.; Jin, R.; Sun, T.; and Dou, D.
\newblock 2020.
\newblock Scalable differential privacy with certified robustness in
  adversarial learning.
\newblock In {\em Proceedings of the 37th International Conference on Machine
  Learning},  7683--7694.

\bibitem[\protect\citeauthoryear{Raginsky, Rakhlin, and
  Telgarsky}{2017}]{raginsky2017nonconvex}
Raginsky, M.; Rakhlin, A.; and Telgarsky, M.
\newblock 2017.
\newblock Non-convex learning via stochastic gradient langevin dynamics: a
  nonasymptotic analysis.
\newblock In {\em Proceedings of the 30th Conference on Learning Theory},
  1674--1703.

\bibitem[\protect\citeauthoryear{Razaviyayn \bgroup et al\mbox.\egroup
  }{2020}]{razaviyayn2020nonconvex}
Razaviyayn, M.; Huang, T.; Lu, S.; Nouiehed, M.; Sanjabi, M.; and Hong, M.
\newblock 2020.
\newblock Nonconvex min-max optimization: Applications, challenges, and recent
  theoretical advances.
\newblock {\em IEEE Signal Processing Magazine}  55--66.

\bibitem[\protect\citeauthoryear{Shalev-Shwartz and
  Ben-David}{2014}]{shalev2014understanding}
Shalev-Shwartz, S., and Ben-David, S.
\newblock 2014.
\newblock {\em Understanding Machine Learning: From Theory to Algorithms}.
\newblock Cambridge University Press.

\bibitem[\protect\citeauthoryear{Shamma}{2008}]{shamma2008cooperative}
Shamma, J.
\newblock 2008.
\newblock {\em Cooperative control of distributed multi-agent systems}.
\newblock John Wiley \& Sons.

\bibitem[\protect\citeauthoryear{Shokri \bgroup et al\mbox.\egroup
  }{2017}]{shokri2017membership}
Shokri, R.; Stronati, M.; Song, C.; and Shmatikov, V.
\newblock 2017.
\newblock Membership inference attacks against machine learning models.
\newblock In {\em {IEEE} Symposium on Security and Privacy},  3--18.

\bibitem[\protect\citeauthoryear{Song, Chaudhuri, and
  Sarwate}{2013}]{song2013stochastic}
Song, S.; Chaudhuri, K.; and Sarwate, A.~D.
\newblock 2013.
\newblock Stochastic gradient descent with differentially private updates.
\newblock In {\em 2013 IEEE Global Conference on Signal and Information
  Processing},  245--248.

\bibitem[\protect\citeauthoryear{Vietri \bgroup et al\mbox.\egroup
  }{2020}]{vietri2020private}
Vietri, G.; Balle, B.; Krishnamurthy, A.; and Wu, Z.~S.
\newblock 2020.
\newblock Private reinforcement learning with {PAC} and regret guarantees.
\newblock In {\em Proceedings of the 37th International Conference on Machine
  Learning},  9754--9764.

\bibitem[\protect\citeauthoryear{Wang and Li}{2020}]{wang2020improved}
Wang, Y., and Li, J.
\newblock 2020.
\newblock Improved algorithms for convex-concave minimax optimization.
\newblock In {\em Advances in Neural Information Processing Systems}.

\bibitem[\protect\citeauthoryear{Wang \bgroup et al\mbox.\egroup
  }{2021a}]{wang2021differentiallyarxiv}
Wang, P.; Lei, Y.; Ying, Y.; and Zhang, H.
\newblock 2021a.
\newblock Differentially private {SGD} with non-smooth loss.
\newblock {\em CoRR}.

\bibitem[\protect\citeauthoryear{Wang \bgroup et al\mbox.\egroup
  }{2021b}]{wang2021differentially}
Wang, P.; Yang, Z.; Lei, Y.; Ying, Y.; and Zhang, H.
\newblock 2021b.
\newblock Differentially private empirical risk minimization for {AUC}
  maximization.
\newblock {\em Neurocomputing}  419--437.

\bibitem[\protect\citeauthoryear{Wang, Ye, and
  Xu}{2017}]{wang2017differentially}
Wang, D.; Ye, M.; and Xu, J.
\newblock 2017.
\newblock Differentially private empirical risk minimization revisited: Faster
  and more general.
\newblock In {\em Advances in Neural Information Processing Systems},
  2722--2731.

\bibitem[\protect\citeauthoryear{Wang, Zhang, and Ba}{2020}]{wang2020on}
Wang, Y.; Zhang, G.; and Ba, J.
\newblock 2020.
\newblock On solving minimax optimization locally: {A} follow-the-ridge
  approach.
\newblock In {\em 8th International Conference on Learning Representations}.

\bibitem[\protect\citeauthoryear{Wu \bgroup et al\mbox.\egroup
  }{2017}]{wu2017bolt}
Wu, X.; Li, F.; Kumar, A.; Chaudhuri, K.; Jha, S.; and Naughton, J.
\newblock 2017.
\newblock Bolt-on differential privacy for scalable stochastic gradient
  descent-based analytics.
\newblock In {\em Proceedings of the 2017 ACM International Conference on
  Management of Data},  1307–1322.

\bibitem[\protect\citeauthoryear{Xu \bgroup et al\mbox.\egroup
  }{2019}]{xu2019ganobfuscator}
Xu, C.; Ren, J.; Zhang, D.; Zhang, Y.; Qin, Z.; and Ren, K.
\newblock 2019.
\newblock Ganobfuscator: Mitigating information leakage under {GAN} via
  differential privacy.
\newblock {\em {IEEE} Transactions on Information Forensics \& Security}
  2358--2371.

\bibitem[\protect\citeauthoryear{Yan \bgroup et al\mbox.\egroup
  }{2020}]{yan2020optimal}
Yan, Y.; Xu, Y.; Lin, Q.; Liu, W.; and Yang, T.
\newblock 2020.
\newblock Optimal epoch stochastic gradient descent ascent methods for min-max
  optimization.
\newblock In {\em Advances in Neural Information Processing Systems}.

\bibitem[\protect\citeauthoryear{Yang \bgroup et al\mbox.\egroup
  }{2021}]{yang2021stability}
Yang, Z.; Lei, Y.; Lyu, S.; and Ying, Y.
\newblock 2021.
\newblock Stability and differential privacy of stochastic gradient descent for
  pairwise learning with non-smooth loss.
\newblock In {\em International Conference on Artificial Intelligence and
  Statistics},  2026--2034.

\bibitem[\protect\citeauthoryear{Yang, Kiyavash, and He}{2020}]{yang2020global}
Yang, J.; Kiyavash, N.; and He, N.
\newblock 2020.
\newblock Global convergence and variance reduction for a class of
  nonconvex-nonconcave minimax problems.
\newblock In {\em Advances in Neural Information Processing Systems}.

\bibitem[\protect\citeauthoryear{Yang, Sun, and Roy}{2019}]{yang2019fast}
Yang, J.; Sun, S.; and Roy, D.~M.
\newblock 2019.
\newblock Fast-rate pac-bayes generalization bounds via shifted rademacher
  processes.
\newblock In {\em Advances in Neural Information Processing Systems},
  10802--10812.

\bibitem[\protect\citeauthoryear{Ying, Wen, and Lyu}{2016}]{ying2016stochastic}
Ying, Y.; Wen, L.; and Lyu, S.
\newblock 2016.
\newblock Stochastic online {AUC} maximization.
\newblock In {\em Advances in Neural Information Processing Systems},
  451--459.

\bibitem[\protect\citeauthoryear{Yoon and Ryu}{2021}]{yoon2021accelerated}
Yoon, T., and Ryu, E.~K.
\newblock 2021.
\newblock Accelerated algorithms for smooth convex-concave minimax problems
  with ${O}(1/k^2)$ rate on squared gradient norm.
\newblock In {\em Proceedings of the 38th International Conference on Machine
  Learning},  12098--12109.

\bibitem[\protect\citeauthoryear{Zhang \bgroup et al\mbox.\egroup
  }{2021}]{zhang2021generalization}
Zhang, J.; Hong, M.; Wang, M.; and Zhang, S.
\newblock 2021.
\newblock Generalization bounds for stochastic saddle point problems.
\newblock In {\em International Conference on Artificial Intelligence and
  Statistics},  568--576.

\bibitem[\protect\citeauthoryear{Zhao \bgroup et al\mbox.\egroup
  }{2011}]{zhao2011online}
Zhao, P.; Hoi, S. C.~H.; Jin, R.; and Yang, T.
\newblock 2011.
\newblock Online {AUC} maximization.
\newblock In {\em Proceedings of the 28th International Conference on Machine
  Learning},  233--240.

\end{thebibliography}

\onecolumn
\begin{appendix}

\section{A. Details of proofs}

Recall that in DP-GDA, the training process is
\begin{equation*}
\begin{aligned}
\bold{w}_{t+1}&=\bold{w}_t-\eta_t\left(\nabla_\bold{w}L_S(\bold{w}_t,\bold{v}_t)+b_\bold{w}\right), \\
\bold{v}_{t+1}&=\bold{v}_t+\eta_t\left(\nabla_\bold{v}L_S(\bold{w}_t,\bold{v}_t)+b_\bold{v}\right),
\end{aligned}
\end{equation*}
where $b_\bold{w},b_\bold{v}\sim\mathcal{N}\left(0,\sigma^2I_p\right)$ and $\sigma=c\frac{G\sqrt{T\log(1/\delta)}}{n\epsilon}$.

And the output is the average of iterates, defined as $\bar{\bold{w}}_T=\frac{1}{T}\sum_{t=1}^{T}\bold{w}_t, \quad \bar{\bold{v}}_T=\frac{1}{T}\sum_{t=1}^{T}\bold{v}_t$.

\subsection{A.1. Proof of Theorem \ref{the1}}

Before detailed proof, we first revisit the moments accountant method \cite{abadi2016deep}.

\begin{definition}\label{def5}[Privacy loss \cite{abadi2016deep}]
	For adjacent datasets $D,D'$, mechansim $\mathcal{M}$ and an output $o\in\mathbb{R}$, the privacy loss at $o$ is defined as:
	\begin{equation*}
	c(o;\mathcal{M},D,D')=\log\left(\frac{\mathbb{P}\left[\mathcal{M}(D)=o\right]}{\mathbb{P}\left[\mathcal{M}(D')=o\right]}\right).
	\end{equation*}	
\end{definition}

\begin{definition}\label{def6}[Moment \cite{abadi2016deep}]
	For given mechanism $\mathcal{M}$ and the privacy loss at output $o$, the $\lambda^{th}$ moment is defined as:
	\begin{equation*}
	\alpha_\mathcal{M}(\lambda;D,D')=\log\left(\mathbb{E}_{o\sim\mathcal{M}(D)}\left[\exp\left(\lambda c(o;\mathcal{M},D,D')\right)\right]\right),
	\end{equation*}
	whose upper bound is defined as:
	\begin{equation*}
	\alpha_\mathcal{M}(\lambda)=\max_{D,D'}\alpha_\mathcal{M}(\lambda;D,D').
	\end{equation*}
\end{definition}

\begin{lemma}\label{lem10}[Composability \cite{abadi2016deep}]
	Let $\alpha_\mathcal{M}(\lambda)$ defined as above and suppose $\mathcal{M}$ consists of several mechanisms $\mathcal{M}_1,\cdots,\mathcal{M}_k$ and $\mathcal{M}_i$ relies on $\mathcal{M}_1,\cdots,\mathcal{M}_{i-1}$. Then for any $\lambda$:
	\begin{equation*}
	\alpha_\mathcal{M}(\lambda)\leq\sum_{i=1}^{k}\alpha_{\mathcal{M}_i}(\lambda).
	\end{equation*}
\end{lemma}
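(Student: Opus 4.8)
The plan is to establish the bound by the standard adaptive-composition argument for the moments functional, peeling off the mechanisms one at a time from the inside of a nested expectation. Fix adjacent datasets $D,D'$ and write the output of $\mathcal{M}$ as the tuple $o_{1:k}=(o_1,\ldots,o_k)$, where $o_i$ is the output of $\mathcal{M}_i$; by the dependence structure in the statement, $o_i$ is drawn from a distribution that is a randomized function of $D$ and of the earlier outputs $o_{1:i-1}=(o_1,\ldots,o_{i-1})$ only. Hence the joint density of $o_{1:k}$ under $\mathcal{M}(D)$ factorizes as $\prod_{i=1}^{k}\mathbb{P}[\mathcal{M}_i(D;o_{1:i-1})=o_i]$, and likewise under $\mathcal{M}(D')$. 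Dividing and taking logarithms, the privacy loss of Definition \ref{def5} is additive:
\[
c(o_{1:k};\mathcal{M},D,D')=\sum_{i=1}^{k}c_i,\qquad c_i:=\log\frac{\mathbb{P}[\mathcal{M}_i(D;o_{1:i-1})=o_i]}{\mathbb{P}[\mathcal{M}_i(D';o_{1:i-1})=o_i]}.
\]

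Next I would bound the moment-generating quantity $\mathbb{E}_{o_{1:k}\sim\mathcal{M}(D)}\big[\exp(\lambda\,c(o_{1:k};\mathcal{M},D,D'))\big]=\mathbb{E}\big[\prod_{i=1}^{k}e^{\lambda c_i}\big]$ by iterated conditioning. Conditioning on $o_{1:k-1}$ and integrating only over $o_k$, the inner expectation equals $\exp\big(\alpha_{\mathcal{M}_k}(\lambda;D,D';o_{1:k-1})\big)$, the $\lambda$-th moment of the last mechanism given the realized history; this is at most $\exp\big(\alpha_{\mathcal{M}_k}(\lambda)\big)$ since, with the history fixed, $\mathcal{M}_k(\cdot;o_{1:k-1})$ is itself a mechanism on the data and Definition \ref{def6} takes the maximum over adjacent inputs. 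Pulling this constant out of the expectation and repeating for $k-1,k-2,\ldots,1$ telescopes the product, giving
\[
\mathbb{E}_{o_{1:k}\sim\mathcal{M}(D)}\big[e^{\lambda\,c(o_{1:k};\mathcal{M},D,D')}\big]\le\prod_{i=1}^{k}e^{\alpha_{\mathcal{M}_i}(\lambda)}=\exp\Big(\sum_{i=1}^{k}\alpha_{\mathcal{M}_i}(\lambda)\Big).
\]
Taking logarithms yields $\alpha_{\mathcal{M}}(\lambda;D,D')\le\sum_{i=1}^{k}\alpha_{\mathcal{M}_i}(\lambda)$, and maximizing the left-hand side over adjacent $D,D'$ gives the claimed $\alpha_{\mathcal{M}}(\lambda)\le\sum_{i=1}^{k}\alpha_{\mathcal{M}_i}(\lambda)$.

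The one point that needs care is the adaptivity: when the factor $\exp(\alpha_{\mathcal{M}_i}(\lambda;D,D';o_{1:i-1}))$ is pulled out of the nested expectation it must be a genuine constant, not a quantity still coupled to the randomness being integrated. This is exactly what the uniform (max over inputs) definition of $\alpha_{\mathcal{M}_i}(\lambda)$ buys: for every fixed history $o_{1:i-1}$, the conditioned mechanism is a legitimate mechanism on the data whose $\lambda$-th moment is controlled by $\alpha_{\mathcal{M}_i}(\lambda)$, so the bound survives the conditioning. Everything else — the factorization of the joint law, additivity of the log-likelihood ratio, and the telescoping — is routine bookkeeping once this is in place; one should only note the harmless measure-theoretic caveat that densities must be interpreted with respect to a common dominating measure when the output spaces are continuous, as for the Gaussian mechanism used in Algorithm \ref{alg1}.
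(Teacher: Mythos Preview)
Your argument is correct and is precisely the standard proof of this composability lemma: factorize the joint law via the adaptive dependence structure, observe that the privacy loss decomposes as a sum, and bound the moment-generating function by peeling off one mechanism at a time using the tower property together with the uniform (worst-case over adjacent inputs) definition of $\alpha_{\mathcal{M}_i}(\lambda)$.

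The paper itself does not prove this lemma; it simply cites it from \cite{abadi2016deep} and invokes it in the proof of Theorem~\ref{the1}. Your proof is exactly the one given in that reference (Theorem~2 there), so there is nothing to compare --- you have supplied what the paper left to citation.
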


\begin{lemma}\label{lem11}[Tail Bound \cite{abadi2016deep}]
	Let $\alpha_\mathcal{M}(\lambda)$ defined as above, for any $\epsilon>0$, $\mathcal{M}$ is ($\epsilon,\delta$)-differential privacy if
	\begin{equation*}
	\delta=\min_\lambda\exp\left(\alpha_\mathcal{M}(\lambda)-\lambda\epsilon\right).
	\end{equation*}
\end{lemma}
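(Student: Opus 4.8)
The plan is to reduce the $(\epsilon,\delta)$-DP condition of Definition \ref{def4} to a tail bound on the privacy loss random variable, and then control that tail by a Chernoff-type argument driven by the moment $\alpha_\mathcal{M}(\lambda)$. Fix adjacent datasets $D,D'$ and regard the privacy loss $c(o):=c(o;\mathcal{M},D,D')$ of Definition \ref{def5} as a random variable with $o\sim\mathcal{M}(D)$. The first step is to show that the tail estimate $\mathbb{P}_{o\sim\mathcal{M}(D)}[c(o)>\epsilon]\leq\delta$, holding for every adjacent pair, already implies $(\epsilon,\delta)$-DP.

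To prove this reduction I would fix an arbitrary event $O\in range(\mathcal{M})$ and split it along the ``bad'' set $B=\{o:c(o)>\epsilon\}$. For every $o\notin B$ the definition of $c$ gives the pointwise likelihood-ratio bound $\mathbb{P}[\mathcal{M}(D)=o]\leq e^{\epsilon}\mathbb{P}[\mathcal{M}(D')=o]$, so integrating over $O\setminus B$ yields $\mathbb{P}[\mathcal{M}(D)\in O\setminus B]\leq e^{\epsilon}\mathbb{P}[\mathcal{M}(D')\in O\setminus B]\leq e^{\epsilon}\mathbb{P}[\mathcal{M}(D')\in O]$. For the remaining mass I would use the crude bound $\mathbb{P}[\mathcal{M}(D)\in O\cap B]\leq\mathbb{P}[\mathcal{M}(D)\in B]=\mathbb{P}_{o\sim\mathcal{M}(D)}[c(o)>\epsilon]$. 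Summing the two pieces gives $\mathbb{P}[\mathcal{M}(D)\in O]\leq e^{\epsilon}\mathbb{P}[\mathcal{M}(D')\in O]+\mathbb{P}_{o\sim\mathcal{M}(D)}[c(o)>\epsilon]$, exactly the DP inequality provided the last term is at most $\delta$.

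The second step bounds that tail. For any $\lambda>0$, Markov's inequality applied to $\exp(\lambda c(o))$ gives
\begin{align*}
\mathbb{P}_{o\sim\mathcal{M}(D)}[c(o)>\epsilon]
&\leq\frac{\mathbb{E}_{o\sim\mathcal{M}(D)}\left[\exp(\lambda c(o))\right]}{\exp(\lambda\epsilon)} \\
&=\exp\!\left(\alpha_\mathcal{M}(\lambda;D,D')-\lambda\epsilon\right)
\leq\exp\!\left(\alpha_\mathcal{M}(\lambda)-\lambda\epsilon\right),
\end{align*}
where the equality is Definition \ref{def6} and the last inequality uses $\alpha_\mathcal{M}(\lambda)=\max_{D,D'}\alpha_\mathcal{M}(\lambda;D,D')$. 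Since this holds for every $\lambda>0$ and every adjacent pair, minimizing over $\lambda$ gives $\mathbb{P}_{o\sim\mathcal{M}(D)}[c(o)>\epsilon]\leq\min_{\lambda}\exp(\alpha_\mathcal{M}(\lambda)-\lambda\epsilon)=\delta$, which together with Step~1 finishes the argument.

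I expect the main obstacle to be the reduction of Step~1 rather than the Chernoff estimate: one must argue carefully that the decomposition along $B$ is valid uniformly over all events $O$ and all adjacent $D,D'$, and carry the pointwise likelihood-ratio bound on $O\setminus B$ through the integration to a statement about probabilities (this is the only genuinely measure-theoretic point). The Chernoff step is routine, but it does rely on $\lambda>0$ so that Markov's inequality points in the right direction; the $\min_\lambda$ in the statement should therefore be understood as a minimum over positive $\lambda$.
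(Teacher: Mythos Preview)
Your proof is correct and is precisely the standard argument from the original reference \cite{abadi2016deep}: the reduction of $(\epsilon,\delta)$-DP to a tail bound on the privacy loss via the bad-set decomposition, followed by the Chernoff/Markov estimate using the log-moment generating function. Note, however, that the paper does not give its own proof of this lemma; it is stated as a cited result from \cite{abadi2016deep} and used as a black box in the proof of Theorem~\ref{the1}, so there is no paper-internal proof to compare against.
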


\begin{definition}\label{def7}[R{\'e}nyi Divergence \cite{bun2016concentrated}]
	Let $P$ and $Q$ be probability distributions, for $\kappa\in(1,\infty)$, the R{\'e}nyi Divergence of order $\kappa$ between $P$ and $Q$ is defined as:
	\begin{equation*}
	D_\kappa(P||Q)=\frac{1}{\kappa-1}\log\left(\mathbb{E}_{x\sim P}\left[\left(\frac{P(x)}{Q(x)}\right)^{\kappa-1}\right]\right).
	\end{equation*}
\end{definition}

\begin{lemma}\label{lem12}
	Let $\mu,\nu\in\mathbb{R}^p$, $\sigma\in\mathbb{R}$, and $\kappa\in(1,\infty)$, then for Gaussian distribution $\mathcal{N}(\cdot,\cdot)$, we have
	\begin{equation*}
	D_\kappa\left(\mathcal{N}\left(\mu,\sigma^2I_p\right)||\mathcal{N}\left(\nu,\sigma^2I_p\right)\right)=\frac{\kappa\|\mu-\nu\|_2^2}{2\sigma^2}.
	\end{equation*}
\end{lemma}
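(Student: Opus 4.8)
The statement to prove is Lemma~\ref{lem12}: the R\'enyi divergence of order $\kappa$ between two isotropic Gaussians $\mathcal{N}(\mu,\sigma^2 I_p)$ and $\mathcal{N}(\nu,\sigma^2 I_p)$ equals $\frac{\kappa\|\mu-\nu\|_2^2}{2\sigma^2}$.

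\textbf{Proof proposal.} The plan is to compute the integral in Definition~\ref{def7} directly, exploiting the product structure of isotropic Gaussians. First I would write $P(x)=\prod_{i=1}^p P_i(x_i)$ and $Q(x)=\prod_{i=1}^p Q_i(x_i)$ with $P_i=\mathcal{N}(\mu_i,\sigma^2)$ and $Q_i=\mathcal{N}(\nu_i,\sigma^2)$ one-dimensional densities; then $\mathbb{E}_{x\sim P}\big[(P(x)/Q(x))^{\kappa-1}\big]=\prod_{i=1}^p \mathbb{E}_{x_i\sim P_i}\big[(P_i(x_i)/Q_i(x_i))^{\kappa-1}\big]$, so it suffices to handle the scalar case and multiply. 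For the scalar case, the ratio $P_i(x)/Q_i(x)=\exp\!\big(\frac{(x-\nu_i)^2-(x-\mu_i)^2}{2\sigma^2}\big)=\exp\!\big(\frac{(\mu_i-\nu_i)(2x-\mu_i-\nu_i)}{2\sigma^2}\big)$, which is the exponential of a linear function of $x$. Raising it to the power $\kappa-1$, multiplying by the Gaussian density of $P_i$, and completing the square in the exponent produces a Gaussian integral (which evaluates to $1$) times a residual constant factor.

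The key computation is that residual constant. After completing the square, the leftover exponent is $\frac{\kappa(\kappa-1)(\mu_i-\nu_i)^2}{2\sigma^2}$ — i.e., $\mathbb{E}_{x_i\sim P_i}[(P_i/Q_i)^{\kappa-1}]=\exp\!\big(\frac{\kappa(\kappa-1)(\mu_i-\nu_i)^2}{2\sigma^2}\big)$. Taking the product over $i=1,\dots,p$ turns the sum $\sum_i(\mu_i-\nu_i)^2$ into $\|\mu-\nu\|_2^2$, giving $\mathbb{E}_{x\sim P}[(P/Q)^{\kappa-1}]=\exp\!\big(\frac{\kappa(\kappa-1)\|\mu-\nu\|_2^2}{2\sigma^2}\big)$. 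Finally, applying the prefactor $\frac{1}{\kappa-1}\log(\cdot)$ from Definition~\ref{def7} cancels one factor of $(\kappa-1)$ and yields exactly $\frac{\kappa\|\mu-\nu\|_2^2}{2\sigma^2}$.

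\textbf{Main obstacle.} There is no real obstacle here — this is a standard closed-form identity for Gaussians. The only place to be careful is the algebra of completing the square: one must correctly track that the coefficient of $x^2$ stays $-\frac{1}{2\sigma^2}$ (the $(\kappa-1)$ power multiplies only the \emph{linear} term in $x$, since the ratio's exponent is linear in $x$), so the Gaussian integral still normalizes to $1$ and the entire $\kappa$-dependence is pushed into the residual constant. An alternative, slightly slicker route would be to invoke the known moment generating function of a Gaussian: writing the log-ratio as an affine function $a x + b$ of $x\sim\mathcal{N}(\mu_i,\sigma^2)$ and using $\mathbb{E}[e^{(\kappa-1)(aX+b)}]=\exp((\kappa-1)(a\mu_i+b)+\tfrac12(\kappa-1)^2 a^2\sigma^2)$, then simplifying with $a=\frac{\mu_i-\nu_i}{\sigma^2}$, $b=-\frac{\mu_i^2-\nu_i^2}{2\sigma^2}$; this reaches the same residual $\exp\!\big(\frac{\kappa(\kappa-1)(\mu_i-\nu_i)^2}{2\sigma^2}\big)$ without explicit integration. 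Either way the proof is a few lines.
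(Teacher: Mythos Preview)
Your proposal is correct and complete. The reduction to the scalar case via the product structure, followed by either completing the square or invoking the Gaussian MGF, is the standard derivation, and your algebra for the residual exponent $\frac{\kappa(\kappa-1)(\mu_i-\nu_i)^2}{2\sigma^2}$ checks out.

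Note that the paper does not actually supply a proof of Lemma~\ref{lem12}; it is stated as a known closed-form identity (the R\'enyi divergence between Gaussians with common covariance) and used without derivation in the proof of Theorem~\ref{the1}. So there is nothing in the paper to compare against beyond the bare statement, and your proposal fills that gap in the canonical way.
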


Then, we give the detailed proof.

\begin{theorem}
	If $\ell(\cdot,\cdot;\cdot)$ satisfies Assumption \ref{a1}, then for $\epsilon,\delta>0$, DP-GDA is $(\epsilon,\delta)$-DP if
	\begin{equation*}
	\sigma=\mathcal{O}\left(\frac{G\sqrt{T\log(1/\delta)}}{n\epsilon}\right).
	\end{equation*}
\end{theorem}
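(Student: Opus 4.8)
The plan is to reduce the minimax case to the single-parameter result of Lemma~\ref{lem8}. The key observation is that in DP-GDA, the only data-dependent quantities revealed at each iteration are the two empirical gradients $\nabla_\bold{w}L_S(\bold{w}_t,\bold{v}_t)$ and $\nabla_\bold{v}L_S(\bold{w}_t,\bold{v}_t)$, each of which is $G$-Lipschitz-bounded by Assumption~\ref{a2}, so each has $\ell_2$-sensitivity at most $2G/n$ across adjacent datasets $S\sim S'$. First I would treat the whole trajectory as the output (the projection and the final averaging are post-processing, hence free by the post-processing property of DP), so it suffices to bound the privacy loss of the sequence of noisy gradient pairs $\{(\nabla_\bold{w}L_S+b_\bold{w},\ \nabla_\bold{v}L_S+b_\bold{v})\}_{t=1}^{T}$.

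Next I would invoke the moments accountant machinery recalled in Definitions~\ref{def5}--\ref{def7} and Lemmas~\ref{lem10}--\ref{lem12}. For a single Gaussian mechanism with noise $\mathcal{N}(0,\sigma^2 I_p)$ and sensitivity $\Delta$, Lemma~\ref{lem12} gives the R\'enyi divergence $\kappa\Delta^2/(2\sigma^2)$, so the $\lambda$-th moment $\alpha(\lambda)$ is $O(\lambda^2\Delta^2/\sigma^2)$. At each of the $T$ iterations there are two such mechanisms (one for $\bold{w}$, one for $\bold{v}$) with $\Delta = 2G/n$; by the composability bound of Lemma~\ref{lem10} the total moment is $\alpha_{\mathcal{M}}(\lambda)\le 2T\cdot O(\lambda^2 G^2/(n^2\sigma^2))$. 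Applying the tail bound of Lemma~\ref{lem11}, choosing $\lambda = \Theta(\log(1/\delta)/\epsilon)$, and requiring $\alpha_{\mathcal{M}}(\lambda)-\lambda\epsilon \le \log\delta$, one solves for $\sigma$ and obtains $\sigma^2 = O\!\big(G^2 T\log(1/\delta)/(n^2\epsilon^2)\big)$, i.e. $\sigma = O\!\big(G\sqrt{T\log(1/\delta)}/(n\epsilon)\big)$, exactly as claimed. The factor $2$ from handling $\bold{w}$ and $\bold{v}$ separately is absorbed into the $O(\cdot)$ constant, which is why both coordinates can share the same variance.

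Concretely, the steps in order are: (i) argue post-processing reduces the problem to releasing the $2T$ noisy gradients; (ii) bound the $\ell_2$-sensitivity of each empirical gradient by $2G/n$ using $G$-Lipschitzness; (iii) compute the per-mechanism moment via Lemma~\ref{lem12}; (iv) sum over the $2T$ mechanisms via Lemma~\ref{lem10}; (v) optimize $\lambda$ in Lemma~\ref{lem11} to convert the moment bound into an $(\epsilon,\delta)$ guarantee and read off the required $\sigma$. Since the argument for each coordinate is identical to the single-parameter analysis of \cite{wang2017differentially} recalled in Lemma~\ref{lem8}, the honest thing is to cite that lemma and note the only change is applying it once more to the $\bold{v}$-updates, then combining.

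The main obstacle is not technical depth but making the composition accounting airtight: one must verify that the adaptivity structure required by Lemma~\ref{lem10} genuinely holds here, since $\bold{w}_{t+1}$ and $\bold{v}_{t+1}$ depend on all previously released noisy gradients, and that coupling the $\bold{w}$- and $\bold{v}$-mechanisms at the same iteration (they use independent noise but are computed from the same $(\bold{w}_t,\bold{v}_t)$) does not inflate the joint sensitivity beyond the naive sum of the two individual bounds. Once one observes that independent noise on a product output means the joint moment is just the sum of the two marginal moments, this reduces cleanly to $2T$ applications of the single-mechanism bound, and the rest is the routine $\lambda$-optimization already carried out in \cite{wang2017differentially,abadi2016deep}.
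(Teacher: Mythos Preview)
Your proposal is correct and follows essentially the same route as the paper: both compute per-iteration moments via the Gaussian R\'enyi divergence formula (Lemma~\ref{lem12}), compose across iterations with Lemma~\ref{lem10}, and convert to $(\epsilon,\delta)$-DP via Lemma~\ref{lem11}, with projection and averaging handled by post-processing. If anything, you are more careful than the paper about jointly accounting for the $\bold{w}$- and $\bold{v}$-mechanisms as $2T$ adaptively composed Gaussian mechanisms; the paper argues each coordinate separately and leaves the combination implicit, relying (as you do) on the $\mathcal{O}(\cdot)$ to absorb the constant.
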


\begin{proof}
	We first analyze parameter $\bold{w}$.
	
	When updating $\bold{w}$, at iteration $t$, the randomized mechanism $\mathcal{M}_t$ which may disclose privacy is
	\begin{equation*}
	\mathcal{M}_t=\nabla_\bold{w}L_S(\bold{w}_t,\bold{v}_t)+b_\bold{w}=\frac{1}{n}\sum_{i=1}^n\nabla_\bold{w}\ell(\bold{w}_t,\bold{v}_t;z_i)+b_\bold{w}.
	\end{equation*}
	
	Denote prbability distribution of $M_t$ over adjacent datasets $D,D'$ as $P$ and $Q$, respectively, we have
	\begin{equation*}
	\begin{aligned}
	P&=\frac{1}{n}\sum_{i=1}^{n-1}\nabla_\bold{w}\ell(\bold{w}_t,\bold{v}_t;z_i)+\frac{1}{n}\nabla_\bold{w}\ell(\bold{w}_t,\bold{v}_t;z_n)+b_\bold{w}, \\
	Q&=\frac{1}{n}\sum_{i=1}^{n-1}\nabla_\bold{w}\ell(\bold{w}_t,\bold{v}_t;z_i)+\frac{1}{n}\nabla_\bold{w}\ell(\bold{w}_t,\bold{v}_t;z_n')+b_\bold{w},
	\end{aligned}
	\end{equation*}
	where we assume the single different data sample is the $n^{th}$ one\footnote{This assumption is only for simplicity, the different data sample can be anyone in the dataset.}.
	
	Noting that $b_\bold{w}\sim\mathcal{N}(0,\sigma^2I_p)$, we have
	\begin{equation*}
	\begin{aligned}
	P&=\mathcal{N}\left(\frac{1}{n}\sum_{i=1}^{n-1}\nabla_\bold{w}\ell(\bold{w}_t,\bold{v}_t;z_i)+\frac{1}{n}\nabla_\bold{w}\ell(\bold{w}_t,\bold{v}_t;z_n),\sigma^2I_p\right), \\
	Q&=\mathcal{N}\left(\frac{1}{n}\sum_{i=1}^{n-1}\nabla_\bold{w}\ell(\bold{w}_t,\bold{v}_t;z_i)+\frac{1}{n}\nabla_\bold{w}\ell(\bold{w}_t,\bold{v}_t;z_n'),\sigma^2I_p\right).
	\end{aligned}
	\end{equation*}
	
	By Definition \ref{def6}, we have
	\begin{equation*}
	\alpha_{\mathcal{M}_t}(\lambda;D,D')=\log\left(\mathbb{E}_{o\sim P}\left[\exp\left(\lambda\log\left(\frac{P}{Q}\right)\right)\right]\right)=\log\left(\mathbb{E}_{o\sim P}\left[\left(\frac{P}{Q}\right)^\lambda\right]\right)=\lambda D_{\lambda+1}\left(P||Q\right),
	\end{equation*}
	where the last equality holds because of Definition \ref{def7}.
	
	Via Lemma \ref{lem12}, we have
	\begin{equation*}
	\alpha_{\mathcal{M}_t}(\lambda;D,D')=\frac{\lambda(\lambda+1)\left\Vert\frac{1}{n}\left(\nabla_\bold{w}\ell(\bold{w}_t,\bold{v}_t;z_n)-\nabla_\bold{w}\ell(\bold{w}_t,\bold{v}_t;z_n')\right)\right\Vert_2^2}{2\sigma^2}\leq\frac{2G^2\lambda(\lambda+1)}{n^2\sigma^2}=\alpha_{\mathcal{M}_t}(\lambda),
	\end{equation*}
	where the inequality holds because $\ell$ is $G$-Lipschitz and the last equality holds because of Definition \ref{def6}.
	
	Via Lemma \ref{lem10}, since there are $T$ iterations, we have
	\begin{equation*}
	\alpha_\mathcal{M}(\lambda)\leq\sum_{i=1}^{T}\alpha_{\mathcal{M}_t}(\lambda)\leq\frac{4G^2\lambda^2T}{n^2\sigma^2},
	\end{equation*}
	where the last inequality holds because $\lambda\in(1,\infty)$.
	
	Taking $\sigma=c\frac{G\sqrt{T\log(1/\delta)}}{n\epsilon}$, we can guarantee $\alpha_\mathcal{M}(\lambda)\leq\lambda\epsilon/2$ and as a result, we have $\delta\leq\exp(-\lambda\epsilon/2)$, which leads ($\epsilon,\delta$)-DP via Lemma \ref{lem11}.
	
	The training process over parameter $\bold{v}$ is similar, so if $b_{\bold{v}}\sim\mathcal{N}(0,\sigma^2I_p)$ with $\sigma=c\frac{G\sqrt{T\log(1/\delta)}}{n\epsilon}$ is injected into the gradient when updating $\bold{v}$, then ($\epsilon,\delta$)-DP can be guaranteed.
	
	Moreover, as discussed in Section 4, the average operator does not violate differential privacy because of the Post-Processing property \cite{dwork2014the}.
	
	The proof is complete.
	
\end{proof}

\subsection{A.2. Proof of Theorem \ref{the2}}

To get the stability bound, we further need the following lemma.

\begin{lemma}\label{lem1}[\cite{yang2021stability}]
	If Gaussian noise $b\sim\mathcal{N}(0,\sigma^2I_p)$, then for $\zeta\in(\exp(-\frac{p}{8}),1)$, with probability at least $1-\zeta$
	\begin{equation*}
	\|b\|_2\leq\sigma\sqrt{p}\left(1+\left(\frac{8\log(1/\zeta)}{p}\right)^{1/4}\right).
	\end{equation*}
\end{lemma}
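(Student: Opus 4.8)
\emph{Proof strategy.} The plan is to reduce the claim to the standard concentration of Lipschitz functions under the Gaussian measure. Write $b=\sigma g$ with $g\sim\mathcal{N}(0,I_p)$, so that $\|b\|_2=\sigma\|g\|_2$. The map $g\mapsto\|g\|_2$ is $1$-Lipschitz with respect to the Euclidean norm, hence by the Gaussian concentration inequality for Lipschitz functions, for every $u>0$,
\begin{equation*}
\mathbb{P}\left[\|g\|_2\geq\mathbb{E}\|g\|_2+u\right]\leq\exp\left(-u^2/2\right).
\end{equation*}
Jensen's inequality gives $\mathbb{E}\|g\|_2\leq\big(\mathbb{E}\|g\|_2^2\big)^{1/2}=\sqrt{p}$, so that $\mathbb{P}\big[\|g\|_2\geq\sqrt{p}+u\big]\leq\exp(-u^2/2)$.

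Next I would take $u=\sqrt{2\log(1/\zeta)}$, which makes the right-hand side equal to $\zeta$; therefore, with probability at least $1-\zeta$,
\begin{equation*}
\|b\|_2\leq\sigma\left(\sqrt{p}+\sqrt{2\log(1/\zeta)}\right)=\sigma\sqrt{p}\left(1+\sqrt{\tfrac{2\log(1/\zeta)}{p}}\right).
\end{equation*}
The remaining step is to replace $\sqrt{2\log(1/\zeta)/p}$ by $\big(8\log(1/\zeta)/p\big)^{1/4}$, and this is exactly where the hypothesis $\zeta\in(\exp(-p/8),1)$ enters: writing $a=\log(1/\zeta)/p\in(0,1/8)$, one has $\sqrt{2a}\leq(8a)^{1/4}\iff\sqrt{a}\leq\sqrt{2}\iff a\leq2$, which holds with room to spare since $a<1/8$. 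This yields the stated bound.

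An alternative route, should one prefer not to quote Gaussian Lipschitz concentration, would be to observe that $\|b\|_2^2/\sigma^2$ is a $\chi^2$ variable with $p$ degrees of freedom and to apply the Laurent--Massart tail bound $\mathbb{P}\big[\chi^2_p\geq p+2\sqrt{px}+2x\big]\leq e^{-x}$ with $x=\log(1/\zeta)$, then to verify $p+2\sqrt{px}+2x\leq p\big(1+(8x/p)^{1/4}\big)^2$ using $x<p/8<2p$ (which makes $2x\leq 2p^{3/4}(8x)^{1/4}$ and $2\sqrt{px}\leq\sqrt{8px}$). Either way, the only real content is elementary inequality manipulation to land precisely on the constant $8$ and the fourth-root form; I do not expect a genuine obstacle here, since the prescribed range of $\zeta$ is engineered exactly so that these manipulations succeed.
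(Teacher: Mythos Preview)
Your proof is correct. The Gaussian--Lipschitz concentration route is clean: the reduction $b=\sigma g$, the Jensen step $\mathbb{E}\|g\|_2\le\sqrt{p}$, and the elementary verification that $\sqrt{2a}\le(8a)^{1/4}$ for $a=\log(1/\zeta)/p<1/8$ all go through exactly as you wrote. The Laurent--Massart alternative also checks out (your two auxiliary inequalities $2x\le 2p^{3/4}(8x)^{1/4}\iff x\le 2p$ and $2\sqrt{px}\le\sqrt{8px}$ are both valid under $x<p/8$).

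As for comparison with the paper: there is nothing to compare. The paper does not prove this lemma; it merely quotes it as a cited auxiliary result from \cite{yang2021stability} at the start of Appendix~A.2 and then invokes it to control $\|b_\bold{w}\|_2$ and $\|b_\bold{v}\|_2$ inside the stability argument. Your write-up therefore supplies a self-contained justification that the paper itself omits.
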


Then, we start our proof.

\begin{theorem}
	If Assumptions \ref{a1} and \ref{a2} hold.
	Then with $\sigma$ given in Theorem \ref{the1} and $\eta_t=\frac{1}{\rho t}$, the output of DP-GDA (Algorithm 1) $\mathcal{A}(S)=(\bar{\bold{w}}_T,\bar{\bold{v}}_T)$ is $\gamma$-argument stable with probability at least $1-\zeta$ for $\zeta\in(\exp(-\frac{p}{8}),1)$, where
	\begin{equation*}
	\begin{aligned}
	\gamma&=\frac{4G}{n\rho}+2\sigma\sqrt{p}\log(eT)p_\zeta+4\sqrt{\log(eT)}\sqrt{\frac{G^2}{\rho^2T}+\frac{\sigma^2p}{\rho^2T}p_\zeta^2+\frac{2G\sigma\sqrt{p}}{\rho^2T}p_\zeta+\frac{\left(g_\bold{w}+g_\bold{v}\right)\sigma\sqrt{p}}{\rho\log(eT)}p_\zeta},
	\end{aligned}
	\end{equation*}
\end{theorem}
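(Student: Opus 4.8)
The plan is to route the argument through the empirical saddle point rather than by coupling the two GDA trajectories. A direct trajectory comparison is unattractive here because Assumption~\ref{a3} (smoothness) is unavailable, so the one-step GDA map need not be non-expansive; routing through the saddle point uses only strong convexity--concavity and Lipschitzness, which is why $L$ does not appear in $\gamma$. Let $(\hat{\bold{w}}_S,\hat{\bold{v}}_S)$ denote the saddle point of $L_S$ over $\mathcal{W}\times\mathcal{V}$, which exists and is unique under Assumption~\ref{a1}, and define $(\hat{\bold{w}}_{S'},\hat{\bold{v}}_{S'})$ for $S'$ analogously. By the triangle inequality,
\begin{equation*}
\begin{aligned}
&\|\bar{\bold{w}}_T(S)-\bar{\bold{w}}_T(S')\|_2+\|\bar{\bold{v}}_T(S)-\bar{\bold{v}}_T(S')\|_2 \\
&\le\Big(\|\bar{\bold{w}}_T(S)-\hat{\bold{w}}_S\|_2+\|\bar{\bold{v}}_T(S)-\hat{\bold{v}}_S\|_2\Big)+\Big(\|\hat{\bold{w}}_S-\hat{\bold{w}}_{S'}\|_2+\|\hat{\bold{v}}_S-\hat{\bold{v}}_{S'}\|_2\Big) \\
&\quad+\Big(\|\hat{\bold{w}}_{S'}-\bar{\bold{w}}_T(S')\|_2+\|\hat{\bold{v}}_{S'}-\bar{\bold{v}}_T(S')\|_2\Big),
\end{aligned}
\end{equation*}
so it remains to bound (i) the stability of the empirical saddle point and (ii) the optimization error of the averaged DP-GDA iterate towards this saddle point.

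For (i), I would combine the four saddle-point inequalities (that $\hat{\bold{w}}_S$ minimizes $L_S(\cdot,\hat{\bold{v}}_S)$, that $\hat{\bold{v}}_S$ maximizes $L_S(\hat{\bold{w}}_S,\cdot)$, and the two analogues for $S'$) with $\rho$-strong convexity/concavity; subtracting and adding them appropriately, the cross term involves $L_S-L_{S'}$ evaluated at two points, which differ only in the single sample $z_n$ versus $z_n'$ and so is controlled by $\tfrac{2G}{n}$-Lipschitzness via Assumption~\ref{a2}. This yields $\rho(a^2+b^2)\le\tfrac{2G}{n}(a+b)$ with $a=\|\hat{\bold{w}}_S-\hat{\bold{w}}_{S'}\|_2$ and $b=\|\hat{\bold{v}}_S-\hat{\bold{v}}_{S'}\|_2$, hence $a+b\le\tfrac{4G}{n\rho}$ by $(a+b)^2\le 2(a^2+b^2)$; this is precisely the leading $\tfrac{4G}{n\rho}$ summand of $\gamma$, and it needs no smoothness. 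For (ii), I would use that $R_S(\bold{w})=\sup_{\bold{v}}L_S(\bold{w},\bold{v})$ is $\rho$-strongly convex with minimizer $\hat{\bold{w}}_S$ and, dually, $\inf_{\bold{w}}L_S(\bold{w},\cdot)$ is $\rho$-strongly concave with maximizer $\hat{\bold{v}}_S$, so that
\begin{equation*}
\begin{aligned}
&\tfrac{\rho}{2}\big(\|\bar{\bold{w}}_T(S)-\hat{\bold{w}}_S\|_2^2+\|\bar{\bold{v}}_T(S)-\hat{\bold{v}}_S\|_2^2\big) \\
&\le\sup_{\bold{v}}L_S(\bar{\bold{w}}_T,\bold{v})-\inf_{\bold{w}}L_S(\bold{w},\bar{\bold{v}}_T)=\triangle_S^s(\bar{\bold{w}}_T,\bar{\bold{v}}_T),
\end{aligned}
\end{equation*}
where the middle step uses the minimax equality to cancel the saddle value. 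With $\|u\|_2+\|v\|_2\le\sqrt{2}\,(\|u\|_2^2+\|v\|_2^2)^{1/2}$, the optimization error on $S$ is at most $2\sqrt{\triangle_S^s(\bar{\bold{w}}_T,\bar{\bold{v}}_T)/\rho}$, and likewise on $S'$.

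Plugging in the high-probability bound on $\triangle_S^s$ from Lemma~\ref{lem9}, substituting $\sigma$ from Theorem~\ref{the1}, and using Lemma~\ref{lem1} to replace $\|b_\bold{w}\|_2,\|b_\bold{v}\|_2$ by $\sigma\sqrt{p}\,p_\zeta$ on an event of probability at least $1-\zeta$ (the union bound over the $2T$ Gaussian draws is what produces the $\log(2T/\zeta)$ inside $p_\zeta$), the addends $\tfrac{G^2}{\rho^2T}$, $\tfrac{\sigma^2p}{\rho^2T}p_\zeta^2$, $\tfrac{2G\sigma\sqrt{p}}{\rho^2T}p_\zeta$ (which combine to $(G+\|b\|_2)^2/(\rho^2T)$) together with the $\tfrac{(g_\bold{w}+g_\bold{v})\sigma\sqrt{p}}{\rho\log(eT)}p_\zeta$ addend (the noise cross term that cannot be telescoped away) emerge under the square root, giving the $4\sqrt{\log(eT)}\sqrt{\cdots}$ part of $\gamma$; a lower-order residual from the averaging contributes the remaining $\sigma$-term.

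The main obstacle is Lemma~\ref{lem9}: bounding $\triangle_S^s(\bar{\bold{w}}_T,\bar{\bold{v}}_T)$ with high probability for the noisy GDA with $\eta_t=1/(\rho t)$. The delicate points are that the comparison points $\bar{\bold{w}}^*=\arg\min_{\bold{w}}L_S(\bold{w},\bar{\bold{v}}_T)$ and $\bar{\bold{v}}^*=\arg\max_{\bold{v}}L_S(\bar{\bold{w}}_T,\bold{v})$ depend on the entire trajectory, so the standard telescoping of $\|\bold{w}_t-\bold{w}^*\|_2^2$ must be reorganized (the paper's ``novel decomposition''), and that the noise cross term $\sum_t\eta_t\langle b_\bold{w},\bold{w}_t-\bar{\bold{w}}^*\rangle$ must be controlled by $\|b_\bold{w}\|_2$ times a quantity of order $g_\bold{w}$ rather than by the crude $M_\mathcal{W}$ (exactly the gain flagged in Remark~\ref{rem4}), with all the randomness handled on a single high-probability event via Lemma~\ref{lem1}. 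Once Lemma~\ref{lem9} is available, the rest is bookkeeping: substitute, tidy the $\log$ factors, and read off $\gamma$.
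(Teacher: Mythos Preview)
Your proposal is correct and follows essentially the same approach as the paper: routing through the empirical saddle point to get the $\tfrac{4G}{n\rho}$ term (the paper's eq.~(\ref{eq2})), bounding the distance from the averaged iterates to the saddle point by $2\sqrt{\triangle_S^s/\rho}$ via strong convexity--concavity (the paper's eq.~(\ref{eq3})), and then invoking the Lemma~\ref{lem9} analysis of $\triangle_S^s(\bar{\bold{w}}_T,\bar{\bold{v}}_T)$ under noisy GDA with $\eta_t=1/(\rho t)$ (the paper's eq.~(\ref{eq33})). Your identification of the noise cross term $\langle b_\bold{w},\bar{\bold{w}}^*-\bar{\bold{w}}_T\rangle$ as the source of the $g_\bold{w}$ contribution, and of the union bound over $2T$ Gaussians via Lemma~\ref{lem1} as the source of $p_\zeta$, are exactly right; the only point you leave somewhat vague is the ``lower-order residual from the averaging'' producing the standalone $\sigma$-term, which in the paper comes from an additional $\tfrac{1}{T}\sum_t\eta_t(\|b_\bold{w}\|_2+\|b_\bold{v}\|_2)$ contribution in the passage from eq.~(\ref{eq3}) to eq.~(\ref{eq4})---a step the paper itself motivates only loosely.
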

where $p_\zeta=1+\big(\frac{8\log(2T/\zeta)}{p}\big)^{1/4}$, $g_\bold{w}=\left\Vert\bar{\bold{w}}^*-\bar{\bold{w}}_{T}\right\Vert_2$, and $g_\bold{v}=\left\Vert\bar{\bold{v}}^*-\bar{\bold{v}}_{T}\right\Vert_2$ for $\bar{\bold{w}}^*=\arg\min_{\bold{w}\in\mathcal{W}}L_S(\bold{w},\bar{\bold{v}}_T)$, and $\bar{\bold{v}}^*=\arg\max_{\bold{v}\in\mathcal{V}}L_S(\bar{\bold{w}}_T,\bold{v})$.

\begin{proof}
	We define the dataset adjacent to $S$ as $S^{(i)}=\{z_1,\cdots,z_{i-1},z_i',z_{i+1},\cdots,z_n\}$.
	At iteration $t$, $(\bold{w}_t,\bold{v}_t)$ is the output derived from $S$ and $(\bold{w}_t^{(i)},\bold{v}_t^{(i)})$ derived from $S^{(i)}$.
	
	Let $\bold{w}_{S}^*,\bold{v}_{S}^*$ be the Empirical Saddle Point (ESP) over dataset $S$ and $\bold{w}_{S^{(i)}}^*,\bold{v}_{S^{(i)}}^*$ be the ESP over dataset $S^{(i)}$.
	Then we have
	\begin{equation*}
	\begin{aligned}
	&L_S\left(\bold{w}_{S^{(i)}}^*,\bold{v}_{S}^*\right)-L_S\left(\bold{w}_{S}^*,\bold{v}_{S^{(i)}}^*\right) \\
	&=\frac{1}{n}\sum_{j=1}^{n}\left(\ell(\bold{w}_{S^{(i)}}^*,\bold{v}_{S}^*;z_j)-\ell(\bold{w}_{S}^*,\bold{v}_{S^{(i)}}^*;z_j)\right) \\
	&=\frac{1}{n}\left(\sum_{j=1,j\neq i}^{n}\left(\ell(\bold{w}_{S^{(i)}}^*,\bold{v}_{S}^*;z_j)-\ell(\bold{w}_{S}^*,\bold{v}_{S^{(i)}}^*;z_j)\right)+\ell(\bold{w}_{S^{(i)}}^*,\bold{v}_{S}^*;z_i')-\ell(\bold{w}_{S}^*,\bold{v}_{S^{(i)}}^*;z_i')\right) \\
	&\quad+\frac{1}{n}\left(\ell(\bold{w}_{S^{(i)}}^*,\bold{v}_{S}^*;z_i)-\ell(\bold{w}_{S}^*,\bold{v}_{S^{(i)}}^*;z_i)\right)-\frac{1}{n}\left(\ell(\bold{w}_{S^{(i)}}^*,\bold{v}_{S}^*;z_i')-\ell(\bold{w}_{S}^*,\bold{v}_{S^{(i)}}^*;z_i')\right) \\
	&=L_{S^{(i)}}\left(\bold{w}_{S^{(i)}}^*,\bold{v}_{S}^*\right)-L_{S^{(i)}}\left(\bold{w}_{S}^*,\bold{v}_{S^{(i)}}^*\right) \\
	&\quad+\frac{1}{n}\left(\ell(\bold{w}_{S^{(i)}}^*,\bold{v}_{S}^*;z_i)-\ell(\bold{w}_{S^{(i)}}^*,\bold{v}_{S^{(i)}}^*;z_i)+\ell(\bold{w}_{S^{(i)}}^*,\bold{v}_{S^{(i)}}^*;z_i)-\ell(\bold{w}_{S}^*,\bold{v}_{S^{(i)}}^*;z_i)\right) \\
	&\quad-\frac{1}{n}\left(\ell(\bold{w}_{S^{(i)}}^*,\bold{v}_{S}^*;z_i')-\ell(\bold{w}_{S^{(i)}}^*,\bold{v}_{S^{(i)}}^*;z_i')+\ell(\bold{w}_{S^{(i)}}^*,\bold{v}_{S^{(i)}}^*;z_i')-\ell(\bold{w}_{S}^*,\bold{v}_{S^{(i)}}^*;z_i')\right) \\
	&\overset{(G)}{\leq}L_{S^{(i)}}\left(\bold{w}_{S^{(i)}}^*,\bold{v}_{S}^*\right)-L_{S^{(i)}}\left(\bold{w}_{S}^*,\bold{v}_{S^{(i)}}^*\right)+\frac{2G}{n}\left(\|\bold{w}_{S^{(i)}}^*-\bold{w}_{S}^*\|_2+\|\bold{v}_{S^{(i)}}^*-\bold{v}_{S}^*\|_2\right) \\
	&=L_{S^{(i)}}\left(\bold{w}_{S^{(i)}}^*,\bold{v}_{S}^*\right)-L_{S^{(i)}}\left(\bold{w}_{S^{(i)}}^*,\bold{v}_{S^{(i)}}^*\right)+L_{S^{(i)}}\left(\bold{w}_{S^{(i)}}^*,\bold{v}_{S^{(i)}}^*\right)-L_{S^{(i)}}\left(\bold{w}_{S}^*,\bold{v}_{S^{(i)}}^*\right) \\
	&\quad+\frac{2G}{n}\left(\|\bold{w}_{S^{(i)}}^*-\bold{w}_{S}^*\|_2+\|\bold{v}_{S^{(i)}}^*-\bold{v}_{S}^*\|_2\right) \\
	&\overset{(\rho)}{\leq}-\frac{\rho}{2}\left(\|\bold{w}_{S^{(i)}}^*-\bold{w}_{S}^*\|^2_2+\|\bold{v}_{S^{(i)}}^*-\bold{v}_{S}^*\|^2_2\right)+\frac{2G}{n}\left(\|\bold{w}_{S^{(i)}}^*-\bold{w}_{S}^*\|_2+\|\bold{v}_{S^{(i)}}^*-\bold{v}_{S}^*\|_2\right),
	\end{aligned}
	\end{equation*}
	where the first inequality holds because $\ell(\cdot,\cdot;\cdot)$ is $G$-Lipschitz, the second inequality holds because $L_{S^{(i)}}$ is $\rho$-SC-SC, $\nabla_\bold{w}L_{S^{(i)}}(\bold{w}_{S^{(i)}}^*,\cdot)=0$, and $\nabla_\bold{v}L_{S^{(i)}}(\cdot,\bold{v}_{S^{(i)}}^*)=0$.
	
	Similarly, since $L_{S}$ is $\rho$-SC-SC, we have
	\begin{equation}\label{eq1}
	\begin{aligned}
	L_{S}\left(\bold{w}_{S^{(i)}}^*,\bold{v}_{S}^*\right)-L_{S}\left(\bold{w}_{S}^*,\bold{v}_{S^{(i)}}^*\right)\geq\frac{\rho}{2}\left(\|\bold{w}_{S^{(i)}}^*-\bold{w}_{S}^*\|^2_2+\|\bold{v}_{S^{(i)}}^*-\bold{v}_{S}^*\|^2_2\right).
	\end{aligned}
	\end{equation}
	
	Thus
	\begin{equation*}
	\frac{\rho}{2}\left(\|\bold{w}_{S^{(i)}}^*-\bold{w}_{S}^*\|^2_2+\|\bold{v}_{S^{(i)}}^*-\bold{v}_{S}^*\|^2_2\right)\leq-\frac{\rho}{2}\left(\|\bold{w}_{S^{(i)}}^*-\bold{w}_{S}^*\|^2_2+\|\bold{v}_{S^{(i)}}^*-\bold{v}_{S}^*\|^2_2\right)+\frac{2G}{n}\left(\|\bold{w}_{S^{(i)}}^*-\bold{w}_{S}^*\|_2+\|\bold{v}_{S^{(i)}}^*-\bold{v}_{S}^*\|_2\right),
	\end{equation*}
	which derives
	\begin{equation*}
	\begin{aligned}
	\rho\left(\|\bold{w}_{S^{(i)}}^*-\bold{w}_{S}^*\|^2_2+\|\bold{v}_{S^{(i)}}^*-\bold{v}_{S}^*\|^2_2\right)&\leq\frac{2G}{n}\left(\|\bold{w}_{S^{(i)}}^*-\bold{w}_{S}^*\|_2+\|\bold{v}_{S^{(i)}}^*-\bold{v}_{S}^*\|_2\right) \\
	&\leq\frac{2\sqrt{2}G}{n}\sqrt{\|\bold{w}_{S^{(i)}}^*-\bold{w}_{S}^*\|_2^2+\|\bold{v}_{S^{(i)}}^*-\bold{v}_{S}^*\|_2^2},
	\end{aligned}
	\end{equation*}
	where the last inequality holds because $a+b\leq\sqrt{2(a^2+b^2)}$ for all $a,b>0$.
	
	Therefore, we have
	\begin{equation*}
	\sqrt{\|\bold{w}_{S^{(i)}}^*-\bold{w}_{S}^*\|_2^2+\|\bold{v}_{S^{(i)}}^*-\bold{v}_{S}^*\|_2^2}\leq\frac{2\sqrt{2}G}{n\rho}.
	\end{equation*}
	
	And
	\begin{equation}\label{eq2}
	\|\bold{w}_{S^{(i)}}^*-\bold{w}_{S}^*\|_2+\|\bold{v}_{S^{(i)}}^*-\bold{v}_{S}^*\|_2\leq\frac{4G}{n\rho}.
	\end{equation}
	
	Since the stability only depends on adjacent datasets, so we consider the noises injected to $S$ and $S^{(i)}$ are the same.
	As a result,
	\begin{equation*}
	\begin{aligned}
	&\|\bold{w}_t^{(i)}-\bold{w}_t\|_2+\|\bold{v}_t^{(i)}-\bold{v}_t\|_2 \\
	&=\|\bold{w}_t^{(i)}-\bold{w}_{S^{(i)}}^*+\bold{w}_{S^{(i)}}^*-\bold{w}_{S}^*+\bold{w}_{S}^*-\bold{w}_t\|_2+\|\bold{v}_t^{(i)}-\bold{v}_{S^{(i)}}^*+\bold{v}_{S^{(i)}}^*-\bold{v}_{S}^*+\bold{v}_{S}^*-\bold{v}_t\|_2 \\
	&\leq\|\bold{w}_t^{(i)}-\bold{w}_{S^{(i)}}^*\|_2+\|\bold{w}_{S^{(i)}}^*-\bold{w}_{S}^*\|_2+\|\bold{w}_{S}^*-\bold{w}_t\|_2+\|\bold{v}_t^{(i)}-\bold{v}_{S^{(i)}}^*\|_2+\|\bold{v}_{S^{(i)}}^*-\bold{v}_{S}^*\|_2+\|\bold{v}_{S}^*-\bold{v}_t\|_2 \\
	&\leq\frac{4G}{n\rho}+\|\bold{w}_t^{(i)}-\bold{w}_{S^{(i)}}^*\|_2+\|\bold{w}_{S}^*-\bold{w}_t\|_2+\|\bold{v}_t^{(i)}-\bold{v}_{S^{(i)}}^*\|_2+\|\bold{v}_{S}^*-\bold{v}_t\|_2 \\
	&\leq\frac{4G}{n\rho}+\sqrt{2\left(\|\bold{w}_t^{(i)}-\bold{w}_{S^{(i)}}^*\|_2^2+\|\bold{v}_t^{(i)}-\bold{v}_{S^{(i)}}^*\|_2^2\right)}+\sqrt{2\left(\|\bold{w}_{S}^*-\bold{w}_t\|_2^2+\|\bold{v}_{S}^*-\bold{v}_t\|_2^2\right)} \\
	&\leq\frac{4G}{n\rho}+\sqrt{\frac{4}{\rho}}\left(\sqrt{L_{S^{(i)}}(\bold{w}_t^{(i)},\bold{v}_{S^{(i)}}^*)-L_{S^{(i)}}(\bold{w}_{S^{(i)}}^*,\bold{v}_t^{(i)})}+\sqrt{L_S(\bold{w}_t,\bold{v}_{S}^*)-L_S(\bold{w}_{S}^*,\bold{v}_t)}\right),
	\end{aligned}
	\end{equation*}
	where the second inequality holds because of (\ref{eq2}), the third inequality holds because $a+b\leq\sqrt{2(a^2+b^2)}$ for all $a,b>0$, and the last inequality holds because both $L_S$ and $L_{S^{(i)}}$ are $\rho$-SC-SC (as discussed in (\ref{eq1})).
	
	Since terms $L_{S^{(i)}}(\bold{w}_t^{(i)},\bold{v}_{S^{(i)}}^*)-L_{S^{(i)}}(\bold{w}_{S^{(i)}}^*,\bold{v}_t^{(i)})$ and $L_S(\bold{w}_t,\bold{v}_{S}^*)-L_S(\bold{w}_{S}^*,\bold{v}_t)$ are both strong PD empirical risk, we bound them with the same measure for brevity:
	\begin{equation}\label{eq3}
	\|\bold{w}_t^{(i)}-\bold{w}_t\|_2+\|\bold{v}_t^{(i)}-\bold{v}_t\|_2\leq\frac{4G}{n\rho}+4\sqrt{\frac{1}{\rho}}\sqrt{\triangle_S^s(\bold{w}_t,\bold{v}_t)}.
	\end{equation}
	
	Till iteration $t$, noise is added to the model $t$ times on both $\bold{w}$ and $\bold{v}$, so when it comes to the noisy version, the fluctuations caused by the injected noise lead (\ref{eq3}) to:
	\begin{equation}\label{eq4}
	\|\bold{w}_t^{(i)}-\bold{w}_t\|_2+\|\bold{v}_t^{(i)}-\bold{v}_t\|_2\leq\frac{4G}{n\rho}+4\sqrt{\frac{1}{\rho}}\sqrt{\triangle_S^s(\bold{w}_t,\bold{v}_t)}+\sum_{k=1}^{t}\eta_k\|b_\bold{w}\|_2+\sum_{k=1}^{t}\eta_k\|b_\bold{v}\|_2.
	\end{equation}
	
	Now we bound the strong PD empirical risk $\triangle_S^s(\bold{w}_t,\bold{v}_t)$.
	
	Firstly, we have
	\begin{equation*}
	\begin{aligned}
	\|\bold{w}_{t+1}-\bold{w}\|_2^2&=\|\bold{w}_{t}-\eta_t\left(\nabla_\bold{w}L_S(\bold{w}_t,\bold{v}_t)+b_\bold{w}\right)-\bold{w}\|_2^2 \\
	&=\|\bold{w}_{t}-\bold{w}\|_2^2+\eta_t^2\|\nabla_\bold{w}L_S(\bold{w}_t,\bold{v}_t)+b_\bold{w}\|_2^2+2\eta_t\left\langle\bold{w}-\bold{w}_{t},\nabla_\bold{w}L_S(\bold{w}_t,\bold{v}_t)+b_\bold{w}\right\rangle \\
	&\leq\|\bold{w}_{t}-\bold{w}\|_2^2+\eta_t^2G^2+\eta_t^2\|b_\bold{w}\|_2^2+2\eta_t^2G\|b_\bold{w}\|_2+2\eta_t\left\langle\bold{w}-\bold{w}_{t},\nabla_\bold{w}L_S(\bold{w}_t,\bold{v}_t)\right\rangle+2\eta_t\left\langle\bold{w}-\bold{w}_{t},b_\bold{w}\right\rangle,
	\end{aligned}
	\end{equation*}
	where the last inequality holds because of Cauchy-Schwartz inequality.
	
	Note that $L_S(\cdot,\bold{v}_t)$ is $\rho$-strongly convex, we have
	\begin{equation*}
	L_S(\bold{w},\bold{v}_t)-L_S(\bold{w}_t,\bold{v}_t)\geq\left\langle\nabla_\bold{w}L_S(\bold{w}_t,\bold{v}_t),\bold{w}-\bold{w}_t\right\rangle+\frac{\rho}{2}\|\bold{w}-\bold{w}_t\|_2^2.
	\end{equation*}
	
	So
	\begin{equation*}
	\|\bold{w}_{t+1}-\bold{w}\|_2^2\leq\left(1-\eta_t\rho\right)\|\bold{w}-\bold{w}_t\|_2^2+\eta_t^2G^2+\eta_t^2\|b_\bold{w}\|_2^2+2\eta_t^2G\|b_\bold{w}\|_2+2\eta_t\left\langle\bold{w}-\bold{w}_{t},b_\bold{w}\right\rangle+2\eta_t\left(L_S(\bold{w},\bold{v}_t)-L_S(\bold{w}_t,\bold{v}_t)\right).
	\end{equation*}
	
	Taking $\eta_t=\frac{1}{\rho(t+\varphi)}$, we have
	\begin{equation*}
	\begin{aligned}
	\frac{2}{\rho(t+\varphi)}\left(L_S(\bold{w}_t,\bold{v}_t)-L_S(\bold{w},\bold{v}_t)\right)&\leq\left(1-\frac{1}{t+\varphi}\right)\|\bold{w}-\bold{w}_t\|_2^2-\|\bold{w}_{t+1}-\bold{w}\|_2^2+\frac{2}{\rho(t+\varphi)}\left\langle\bold{w}-\bold{w}_{t},b_\bold{w}\right\rangle \\
	&\quad+\left(\frac{1}{\rho(t+\varphi)}\right)^2G^2+\left(\frac{1}{\rho(t+\varphi)}\right)^2\|b_\bold{w}\|_2^2+2\left(\frac{1}{\rho(t+\varphi)}\right)^2G\|b_\bold{w}\|_2.
	\end{aligned}
	\end{equation*}
	
	Multiplying both sides by $t+\varphi$, we have
	\begin{equation*}
	\begin{aligned}
	\frac{2}{\rho}\left(L_S(\bold{w}_t,\bold{v}_t)-L_S(\bold{w},\bold{v}_t)\right)&\leq\left(t+\varphi-1\right)\|\bold{w}-\bold{w}_t\|_2^2-\left(t+\varphi\right)\|\bold{w}_{t+1}-\bold{w}\|_2^2+\frac{2}{\rho}\left\langle\bold{w}-\bold{w}_{t},b_\bold{w}\right\rangle \\
	&\quad+\frac{G^2}{\rho^2(t+\varphi)}+\frac{1}{\rho^2(t+\varphi)}\|b_\bold{w}\|_2^2+\frac{2G}{\rho^2(t+\varphi)}\|b_\bold{w}\|_2.
	\end{aligned}
	\end{equation*}
	
	Since $\sum_{t=1}^T\frac{1}{t}\leq\log(eT)$, by summing over $T$ iterations, the following inequality holds
	\begin{equation*}
	\begin{aligned}
	\sum_{t=1}^T\left(L_S(\bold{w}_t,\bold{v}_t)-L_S(\bold{w},\bold{v}_t)\right)&\leq\frac{\rho\varphi}{2}\|\bold{w}-\bold{w}_1\|_2^2+\sum_{t=1}^T\left\langle\bold{w}-\bold{w}_{t},b_\bold{w}\right\rangle+\frac{G^2\log(eT)}{2\rho}+\frac{\log(eT)}{2\rho}\|b_\bold{w}\|_2^2+\frac{G\log(eT)}{\rho}\|b_\bold{w}\|_2.
	\end{aligned}
	\end{equation*}
	
	For bounded $\mathcal{W}$ (i.e. $\|\bold{w}-\bold{w}'\|_2\leq M_\mathcal{W}$ for all $\bold{w},\bold{w}'$), we have:
	\begin{equation*}
	\sum_{t=1}^T\left(L_S(\bold{w}_t,\bold{v}_t)-L_S(\bold{w},\bold{v}_t)\right)\leq\frac{\rho\varphi}{2}M_\mathcal{W}^2+\sum_{t=1}^{T}\left\langle\bold{w}-\bold{w}_{t},b_\bold{w}\right\rangle+\frac{G^2\log(eT)}{2\rho}+\frac{\log(eT)}{2\rho}\|b_\bold{w}\|_2^2+\frac{G\log(eT)}{\rho}\|b_\bold{w}\|_2.
	\end{equation*}
	
	With the concavity of $L_S(\bold{w},\cdot)$, we have $TL_S(\bar{\bold{v}}_T)\geq \sum_{t=1}^TL_S(\bold{v}_t)$, and noting that the inequality holds for any $\bold{w}$, thus for $\bar{\bold{w}}^*=\arg\min_{\bold{w}\in\mathcal{W}}L_S(\bold{w},\bar{\bold{v}}_T)$
	\begin{equation*}
	\sum_{t=1}^TL_S(\bold{w}_t,\bold{v}_t)-T\inf_{\bold{w}\in\mathcal{W}}L_S(\bold{w},\bar{\bold{v}}_T)\leq\frac{\rho\varphi}{2}M_\mathcal{W}^2+\sum_{t=1}^{T}\left\langle\bar{\bold{w}}^*-\bold{w}_{t},b_\bold{w}\right\rangle+\frac{G^2\log(eT)}{2\rho}+\frac{\log(eT)}{2\rho}\|b_\bold{w}\|_2^2+\frac{G\log(eT)}{\rho}\|b_\bold{w}\|_2,
	\end{equation*}
	which implies
	\begin{equation}\label{eq5}
	\frac{1}{T}\sum_{t=1}^TL_S(\bold{w}_t,\bold{v}_t)-\inf_{\bold{w}\in\mathcal{W}}L_S(\bold{w},\bar{\bold{v}}_T)\leq\frac{\rho\varphi}{2T}M_\mathcal{W}^2+\left\langle\bar{\bold{w}}^*-\bar{\bold{w}}_{T},b_\bold{w}\right\rangle+\frac{G^2\log(eT)}{2\rho T}+\frac{\log(eT)}{2\rho T}\|b_\bold{w}\|_2^2+\frac{G\log(eT)}{\rho T}\|b_\bold{w}\|_2.
	\end{equation}
	
	Similarly, with bounded $\mathcal{V}$ (i.e. $\|\bold{v}-\bold{v}'\|_2\leq M_\mathcal{V}$ for all $\bold{v},\bold{v}'$), and denoting $\bar{\bold{v}}^*=\arg\max_{\bold{v}\in\mathcal{V}}L_S(\bar{\bold{w}}_T,\bold{v})$, we have
	\begin{equation}\label{eq6}
	\sup_{\bold{v}\in\mathcal{V}}L_S(\bar{\bold{w}}_T,\bold{v})-\frac{1}{T}\sum_{t=1}^TL_S(\bold{w}_t,\bold{v}_t)\leq\frac{\rho\varphi}{2T}M_\mathcal{V}^2+\left\langle\bar{\bold{v}}^*-\bar{\bold{v}}_{T},b_\bold{v}\right\rangle+\frac{G^2\log(eT)}{2\rho T}+\frac{\log(eT)}{2\rho T}\|b_\bold{v}\|_2^2+\frac{G\log(eT)}{\rho T}\|b_\bold{v}\|_2.
	\end{equation}
	
	Combining inequalities (\ref{eq5}) and (\ref{eq6}) together, with Cauchy-Schwartz inequality, we have
	\begin{equation}\label{eq33}
	\begin{aligned}
	&\triangle_S^s(\bar{\bold{w}}_T,\bar{\bold{v }}_T) \\
	&=\sup_{\bold{v}\in\mathcal{V}}L_S(\bar{\bold{w}}_T,\bold{v})-\inf_{\bold{w}\in\mathcal{W}}L_S(\bold{w},\bar{\bold{v}}_T) \\
	&\leq\frac{\rho\varphi}{2T}\left(M_\mathcal{W}^2+M_\mathcal{V}^2\right)+\frac{G^2\log(eT)}{\rho T}+\frac{\log(eT)}{2\rho T}\left(\|b_\bold{w}\|_2^2+\|b_\bold{v}\|_2^2\right)+\frac{G\log(eT)}{\rho T}\left(\|b_\bold{w}\|_2+\|b_\bold{v}\|_2\right)+g_\bold{w}\|b_\bold{w}\|_2+g_\bold{v}\|b_\bold{v}\|_2,
	\end{aligned}
	\end{equation}
	where $g_\bold{w}=\left\Vert\bar{\bold{w}}^*-\bar{\bold{w}}_{T}\right\Vert_2$ and $g_\bold{v}=\left\Vert\bar{\bold{v}}^*-\bar{\bold{v}}_{T}\right\Vert_2$.
	
	Noting that this empirical risk bound holds for all datasets (no matter $S$ or $S^{(i)}$), so (\ref{eq3}) holds.
	
	Taking this result back to (\ref{eq4}), if taking $\varphi=0$, for the average iterates, we have
	\begin{equation}\label{eq34}
	\begin{aligned}
	&\|\bar{\bold{w}}_T^{(i)}-\bar{\bold{w}}_T\|_2+\|\bar{\bold{v}}_T^{(i)}-\bar{\bold{v}}_T\|_2 \\
	&\leq\frac{4G}{n\rho}+4\sqrt{\frac{1}{\rho}}\sqrt{\triangle_S^s(\bar{\bold{w}}_T,\bar{\bold{v}}_T)}+\frac{1}{T}\sum_{t=1}^{T}\eta_t\left(\|b_\bold{w}\|_2+\|b_\bold{v}\|_2\right) \\
	&\leq\frac{4G}{n\rho}+\frac{\log(eT)}{T}\left(\|b_\bold{w}\|_2+\|b_\bold{v}\|_2\right) \\
	&\quad+4\sqrt{\frac{G^2\log(eT)}{\rho^2T}+\frac{\log(eT)}{2\rho^2T}\left(\|b_\bold{w}\|_2^2+\|b_\bold{v}\|_2^2\right)+\frac{G\log(eT)}{\rho^2T}\left(\|b_\bold{w}\|_2+\|b_\bold{v}\|_2\right)+\frac{g_\bold{w}\|b_\bold{w}\|_2+g_\bold{v}\|b_\bold{v}\|_2}{\rho}}.
	\end{aligned}
	\end{equation}
	
	So, GDA is $\gamma$-argument stability where
	\begin{equation}\label{eq7}
	\begin{aligned}
	\gamma&=4\sqrt{\frac{G^2\log(eT)}{\rho^2T}+\frac{\log(eT)}{2\rho^2T}\left(\|b_\bold{w}\|_2^2+\|b_\bold{v}\|_2^2\right)+\frac{G\log(eT)}{\rho^2T}\left(\|b_\bold{w}\|_2+\|b_\bold{v}\|_2\right)+\frac{g_\bold{w}\|b_\bold{w}\|_2+g_\bold{v}\|b_\bold{v}\|_2}{\rho}} \\
	&\quad+\frac{4G}{n\rho}+\frac{\log(eT)}{T}\left(\|b_\bold{w}\|_2+\|b_\bold{v}\|_2\right).
	\end{aligned}
	\end{equation}
	
	Noting that $b_\bold{w},b_\bold{v}$ mentioned in (\ref{eq7}) are derived from $T$ iterations, then via Lemma \ref{lem1}, for $\zeta\in(\exp(-\frac{p}{8}),1)$ and $p_\zeta=1+\left(\frac{8\log(2T/\zeta)}{p}\right)^{1/4}$, with probability at least $1-\zeta$, we have
	\begin{equation}\label{eq35}
	\begin{aligned}
	\gamma\leq4\sqrt{\frac{G^2\log(eT)}{\rho^2T}+\frac{\sigma^2p\log(eT)}{\rho^2T}p_\zeta^2+\frac{2G\sigma\sqrt{p}\log(eT)}{\rho^2T}p_\zeta+\frac{\left(g_\bold{w}+g_\bold{v}\right)\sigma\sqrt{p}}{\rho}p_\zeta}+\frac{4G}{n\rho}+\frac{2\sigma\sqrt{p}\log(eT)}{T}p_\zeta,
	\end{aligned}
	\end{equation}
	where the $\log(\cdot)$ term in $p_\zeta$ depends on probability over $T$ iterations and parameters $\bold{w},\bold{v}$, which completes the proof.
	
\end{proof}

\subsection{A.3. Proof of Theorem \ref{the3}}
Before the detailed proof, we first introduce the following lemmas.

\begin{lemma}\label{lem2}[\cite{bousquet2020sharper}]
	Let $S=\{z_1,\cdots,z_n\}$ be a set of independent random variables each taking values in $\mathcal{Z}$ and $M>0$. Define $S\setminus\{z_i\}=\{z_1,\cdots,z_{i-1},z_{i+1},\cdots,z_n\}$. Let $g_1,\cdots,g_n$ be some functions $g_i:\mathcal{Z}^n\rightarrow\mathbb{R}$ such that the following inequalities hold for any $i\in[1,n]$:
	\begin{itemize}
		\item $\mathbb{E}_{S\setminus\{z_i\}}[g_i(S)]\leq M$ almost surely;
		\item $\mathbb{E}_{z_i}[g_i(S)]=0$ almost surely;
		\item for any $j=1,\cdots,n$ ($j\neq i$) and $z_j'\in\mathcal{Z}$
		\begin{equation*}
		g_i(S)-g_i(z_1,\cdots,z_{j-1},z_j',z_{j+1},\cdots,n)\leq\beta.
		\end{equation*}
	\end{itemize}
	Then for any $\tau\geq2$
	\begin{equation*}
	\left\Vert\sum_{i=1}^ng_i(S)\right\Vert_\tau\leq12\sqrt{2}\tau n\beta\lceil\log(n)\rceil+4M\sqrt{\tau n}.
	\end{equation*}
\end{lemma}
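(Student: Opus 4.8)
The plan is to prove this by the moment (martingale) method, following the strategy of \cite{bousquet2020sharper}. The first point to make is \emph{why} the three hypotheses are exactly what is needed: if one applied a vanilla Efron--Stein moment inequality to $Z:=\sum_{i=1}^n g_i(S)$ directly, the ``diagonal'' perturbation $g_i(S)-g_i(z_1,\dots,z_{i-1},z_i',z_{i+1},\dots,z_n)$ (replacing $z_i$ inside $g_i$) is \emph{not} controlled by $\beta$ and could be arbitrarily large; the condition $\mathbb{E}_{z_i}[g_i(S)]=0$ is precisely what makes this term mean zero, and $\mathbb{E}_{S\setminus\{z_i\}}[g_i(S)]\le M$ is what substitutes for the missing pointwise bound. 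Concretely, I would introduce the filtration $\mathcal{F}_k=\sigma(z_1,\dots,z_k)$ and write $Z=\sum_{k=1}^n D_k$ with $D_k=\mathbb{E}[Z\mid\mathcal{F}_k]-\mathbb{E}[Z\mid\mathcal{F}_{k-1}]$. Since $\{z_1,\dots,z_k\}\subseteq S\setminus\{z_i\}$ whenever $k<i$, the zero-mean hypothesis together with the tower rule gives $\mathbb{E}[g_i\mid\mathcal{F}_k]=0$ for every $i>k$, so
\begin{equation*}
D_k=\mathbb{E}[g_k\mid\mathcal{F}_k]+\sum_{i<k}\bigl(\mathbb{E}[g_i\mid\mathcal{F}_k]-\mathbb{E}[g_i\mid\mathcal{F}_{k-1}]\bigr),
\end{equation*}
a ``diagonal'' piece plus an ``off-diagonal'' piece.

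Next I would invoke two analytic tools: (i) a Burkholder--Rosenthal / Pinelis-type moment inequality for martingales,
\begin{equation*}
\Bigl\|\sum_{k=1}^n D_k\Bigr\|_\tau\le c_1\sqrt{\tau}\,\Bigl\|\Bigl(\sum_{k=1}^n\mathbb{E}[D_k^2\mid\mathcal{F}_{k-1}]\Bigr)^{1/2}\Bigr\|_\tau+c_2\,\tau\,\bigl\|\max_k|D_k|\bigr\|_\tau ,
\end{equation*}
and (ii) the higher-moment Efron--Stein (bounded-differences) inequality of Boucheron--Lugosi--Massart to bound the $L^\tau$ norms of the individual pieces. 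For the quadratic-variation term, orthogonality of martingale differences gives $\mathbb{E}\bigl[\sum_k D_k^2\bigr]=\mathbb{E}[Z^2]$, and conditioning on $S\setminus\{z_i\}$ annihilates each cross term $\mathbb{E}[g_ig_j]$ with $i\neq j$ (then $g_j$ is measurable and $\mathbb{E}[g_i\mid S\setminus\{z_i\}]=0$), so $\mathbb{E}[Z^2]=\sum_i\mathbb{E}[g_i^2]$; fixing $z_i$ and applying the Efron--Stein variance bound to $g_i$ viewed as a function of the remaining $n-1$ independent coordinates (each jump $\le\beta$), together with the bound $M$ on $\mathbb{E}_{S\setminus\{z_i\}}[g_i]$, yields $\mathbb{E}[g_i^2]$ of order $M^2+n\beta^2$. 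This makes the quadratic-variation contribution of order $\sqrt{\tau}\,(M\sqrt{n}+n\beta)$, producing the $4M\sqrt{\tau n}$ summand (and part of the $\beta$ summand).

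The main obstacle is the off-diagonal sum $\sum_{i<k}(\mathbb{E}[g_i\mid\mathcal{F}_k]-\mathbb{E}[g_i\mid\mathcal{F}_{k-1}])$: each summand is at most $\beta$ in absolute value (it is the influence of revealing $z_k$ on $g_i$, and $k\neq i$), but the crude estimate $(k-1)\beta$ per $D_k$ would only give an $n^2\beta$-type term, far weaker than the claimed $\tau n\beta\lceil\log n\rceil$. The resolution is that this off-diagonal term is \emph{itself} a sum of at most $k-1$ functions of the same structural type (mean zero in a fresh coordinate, bounded differences $\le\beta$), so the martingale moment inequality can be re-applied to it; splitting $\{1,\dots,n\}$ dyadically, one peels off an $O(\tau n\beta)$ contribution at each of the $\lceil\log n\rceil$ scales, and these stack to $12\sqrt{2}\,\tau n\beta\lceil\log n\rceil$. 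Assembling the diagonal estimate, the quadratic-variation estimate, and this recursive off-diagonal estimate — and tracking the (routine but fiddly) constants as in \cite{bousquet2020sharper} — gives the claimed bound $12\sqrt{2}\,\tau n\beta\lceil\log n\rceil+4M\sqrt{\tau n}$.
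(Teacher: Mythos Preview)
The paper does not prove this lemma; it is quoted from \cite{bousquet2020sharper} and invoked as a black box in the proof of Theorem~\ref{the3} (and in fact only ever with $M=0$, since the $q_i$ constructed there satisfy $\mathbb{E}_{S\setminus\{z_i\}}[q_i]=0$). So there is no in-paper proof to compare against, and your sketch is essentially the argument of the original reference: the Doob decomposition of $\sum_i g_i$ along the filtration $\mathcal{F}_k$, the split of each $D_k$ into a ``diagonal'' piece $\mathbb{E}[g_k\mid\mathcal{F}_k]$ and an ``off-diagonal'' sum $\sum_{i<k}(\mathbb{E}[g_i\mid\mathcal{F}_k]-\mathbb{E}[g_i\mid\mathcal{F}_{k-1}])$, and the recursion on the off-diagonal part that converts the naive $n^2\beta$ into $\tau n\beta\lceil\log n\rceil$ are exactly the right moves.

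One genuine gap in your write-up: the step ``$\mathbb{E}[g_i^2]$ is of order $M^2+n\beta^2$'' does not follow from the hypotheses as stated here. The first hypothesis is one-sided ($\mathbb{E}_{S\setminus\{z_i\}}[g_i]\le M$), and a mean-zero random variable bounded above by $M$ can have arbitrarily large second moment. The original \cite{bousquet2020sharper} statement carries an absolute value, $\bigl|\mathbb{E}_{S\setminus\{z_i\}}[g_i]\bigr|\le M$, which is what your Efron--Stein step actually needs; the present paper's transcription dropped it (harmlessly, since every application here has $M=0$). With that correction your outline is sound.
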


\begin{lemma}\label{lem3}[\cite{bousquet2020sharper}]
	For some $a,b>0$ and any $\tau\geq2$, $X$ is a random variable satisfies
	\begin{equation*}
	\|X\|_\tau\leq\sqrt{\tau}a+\tau b.
	\end{equation*}
	Then for any $\zeta\in(0,1)$, with probability at least $1-\zeta$
	\begin{equation*}
	|X|\leq e\left(a\sqrt{\log\left(\frac{e}{\zeta}\right)}+b\log\left(\frac{e}{\zeta}\right)\right).
	\end{equation*}
\end{lemma}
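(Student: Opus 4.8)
The plan is to convert the $L_\tau$-norm control of $X$ into a tail bound via Markov's inequality applied to $|X|^\tau$, and then to optimize over the exponent $\tau$. First I would note that for every admissible $\tau\ge 2$ and every $u>0$,
\[
\mathbb{P}\left(|X|\ge u\right)\le u^{-\tau}\,\mathbb{E}\,|X|^\tau=\left(\frac{\|X\|_\tau}{u}\right)^{\tau}\le\left(\frac{\sqrt{\tau}\,a+\tau b}{u}\right)^{\tau}.
\]
Taking $u=(\sqrt{\tau}\,a+\tau b)\,\zeta^{-1/\tau}$ makes the right-hand side exactly $\zeta$, so for each fixed $\tau\ge 2$ we obtain, with probability at least $1-\zeta$,
\[
|X|\le(\sqrt{\tau}\,a+\tau b)\,\zeta^{-1/\tau}.
\]

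It then remains to choose $\tau$ so that the prefactor matches the target. The natural choice is $\tau=\log(e/\zeta)=1+\log(1/\zeta)$, which exceeds $1$ for all $\zeta\in(0,1)$ and, importantly, satisfies $\tau\ge 2$ whenever $\zeta\le e^{-1}$. For such $\zeta$ one has $\zeta^{-1/\tau}=\exp\big(\tfrac1\tau\log\tfrac1\zeta\big)=\exp(1-\tfrac1\tau)\le e$, so the bound collapses to $|X|\le e\big(a\sqrt{\log(e/\zeta)}+b\log(e/\zeta)\big)$ with probability at least $1-\zeta$, which is exactly the claim; this already covers the regime $\zeta\le e^{-1}$, the only one relevant in the applications to Theorem~\ref{the3}.

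The one delicate point, and the step I expect to be the main (if minor) obstacle, is the band $\zeta\in(e^{-1},1)$: there $\log(e/\zeta)<2$, so the hypothesis $\|X\|_\tau\le\sqrt{\tau}\,a+\tau b$ is not available at $\tau=\log(e/\zeta)$ and one must fall back on the smallest admissible exponent $\tau=2$. This gives $|X|\le(\sqrt{2}\,a+2b)\,\zeta^{-1/2}$ with probability at least $1-\zeta$, after which one checks the two elementary inequalities $\sqrt{2}\,\zeta^{-1/2}\le e\sqrt{\log(e/\zeta)}$ and $2\,\zeta^{-1/2}\le e\log(e/\zeta)$ on $(e^{-1},1)$; writing $s=\log(1/\zeta)\in(0,1)$ these reduce to $e^{-s}(1+s)\ge 2e^{-2}$ and $e^{-s/2}(1+s)\ge 2e^{-1}$, both of which hold by monotonicity of the corresponding one-variable functions on $[0,1]$. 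Adding the two term-wise bounds yields $(\sqrt{2}\,a+2b)\,\zeta^{-1/2}\le e\big(a\sqrt{\log(e/\zeta)}+b\log(e/\zeta)\big)$, which finishes this case and hence the proof. Finally, I would remark that the lemma is meant to be chained with Lemma~\ref{lem2}: applying it to (a normalization of) $X=\sum_{i}g_i(S)$ turns the moment estimate $\big\|\sum_i g_i(S)\big\|_\tau\le 12\sqrt{2}\,\tau n\beta\lceil\log n\rceil+4M\sqrt{\tau n}$ into the high-probability concentration used in the proof of Theorem~\ref{the3}.
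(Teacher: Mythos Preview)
Your argument is correct: the Markov/Chebyshev step gives $\mathbb{P}(|X|\ge u)\le((\sqrt{\tau}a+\tau b)/u)^\tau$ for every admissible $\tau\ge 2$, and the choice $\tau=\log(e/\zeta)$ together with the observation $\zeta^{-1/\tau}=e^{1-1/\tau}\le e$ yields the stated bound directly for $\zeta\le e^{-1}$. The separate treatment of $\zeta\in(e^{-1},1)$ via $\tau=2$ and the two one-variable checks is clean and correct; your monotonicity claims for $s\mapsto e^{-s}(1+s)$ (decreasing on $[0,1]$, minimum $2/e$ at $s=1$) and $s\mapsto e^{-s/2}(1+s)$ (increasing on $[0,1]$, minimum $1$ at $s=0$) both go through.

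There is nothing to compare against here: the paper does not prove Lemma~\ref{lem3} but simply quotes it from \cite{bousquet2020sharper} and uses it as a black box in the proof of Theorem~\ref{the3}. Your proof is the standard derivation of such moment-to-tail conversions and is exactly what one would expect the cited source to contain. The closing remark about chaining with Lemma~\ref{lem2} is also accurate and matches how the paper deploys the two lemmas together (see, e.g., equations \eqref{eq9}--\eqref{eq11} and \eqref{eq17}--\eqref{eq19}).
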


\begin{lemma}\label{lem4}[\cite{boucheron2013concentration}]
	Let $z_1,\cdots,z_n$ be i.i.d random variables and assume that $\mathbb{E}[z_i]=\mu$. Suppose $|z_i|<c$ for any $i$. Then for any $\zeta\in(0,1)$, with probability at least $1-\zeta$
	\begin{equation*}
	\left|\frac{1}{n}\sum_{i=1}^{n}z_i-\mu\right|\leq\sqrt{\frac{2\sigma^2\log(1/\zeta)}{n}}+\frac{2c\log(1/\zeta)}{3n},
	\end{equation*}
	where $\sigma^2$ is the variance of $z_i$.
\end{lemma}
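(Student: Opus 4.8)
The plan is to recover this as the classical Bernstein inequality, followed by an inversion of its tail bound. First I would pass to the centered summands $X_i := z_i - \mu$, so that $\mathbb{E}[X_i] = 0$, $\mathbb{E}[X_i^2] = \sigma^2$, and $|X_i| \le b$ with $b := 2c$ (by the triangle inequality, since $|\mu| < c$; the cited reference imposes the boundedness hypothesis directly on the centered variable, which is the only place this factor of $2$ enters, and for a sketch I simply carry $b$). Writing $S := \sum_{i=1}^n X_i$, it suffices to control the upper tail $\mathbb{P}(S \ge s)$ for $s > 0$: the lower tail follows verbatim applied to $-X_i$, and the two-sided statement then follows by a union bound (formally at the cost of replacing $\zeta$ by $\zeta/2$, which only affects the logarithmic constant). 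Since this is a standard concentration inequality, one may alternatively just invoke \cite{boucheron2013concentration}; the plan below records the reconstruction.

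The heart of the argument is a moment-generating-function estimate. For $0 < \lambda < 3/b$, I would expand $e^{\lambda X_i} = 1 + \lambda X_i + \sum_{k \ge 2} \lambda^k X_i^k / k!$, take expectations so that the linear term drops, and bound $\mathbb{E}[X_i^k] \le \sigma^2 b^{k-2}$ (since $|X_i|^k \le |X_i|^2 b^{k-2}$) together with $k! \ge 2 \cdot 3^{k-2}$ for $k \ge 2$. The tail is then a geometric series, giving $\mathbb{E}[e^{\lambda X_i}] \le 1 + \frac{\sigma^2 \lambda^2}{2(1 - b\lambda/3)} \le \exp\!\left(\frac{\sigma^2 \lambda^2}{2(1 - b\lambda/3)}\right)$. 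By independence, $\mathbb{E}[e^{\lambda S}] \le \exp\!\left(\frac{n\sigma^2 \lambda^2}{2(1 - b\lambda/3)}\right)$; that is, $S$ is sub-gamma with variance factor $v = n\sigma^2$ and scale parameter $b/3$.

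It remains to invert. For a sub-gamma random variable with these parameters one has $\mathbb{P}(S \ge \sqrt{2vt} + (b/3)\,t) \le e^{-t}$ for every $t > 0$; this comes from the Chernoff bound $\mathbb{P}(S \ge s) \le \exp(-\sup_{0 < \lambda < 3/b}(\lambda s - \log \mathbb{E}[e^{\lambda S}]))$ together with the Legendre transform of $\lambda \mapsto \tfrac{v \lambda^2}{2(1 - (b/3)\lambda)}$ — equivalently, a change of variables shows that $s = \sqrt{2vt} + (b/3)t$ is exactly the value at which that exponent equals $t$. Setting $t = \log(1/\zeta)$, $v = n\sigma^2$, and $b/3 = 2c/3$, then dividing through by $n$, yields $\tfrac{1}{n}\sum_{i=1}^n X_i \le \sqrt{\tfrac{2\sigma^2 \log(1/\zeta)}{n}} + \tfrac{2c\log(1/\zeta)}{3n}$ with probability at least $1 - \zeta$, and combining with the symmetric bound on $-S$ gives the two-sided conclusion. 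The main obstacles are pinning down the constants in the moment-generating-function bound and carrying out the sub-gamma inversion cleanly; both are standard but require a little care.
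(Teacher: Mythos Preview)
Your proof sketch is a correct reconstruction of the classical Bernstein inequality via the sub-gamma moment-generating-function route, and the constants you track are the right ones. However, the paper does not actually prove this lemma at all: it is stated with a citation to \cite{boucheron2013concentration} and then simply invoked as a black box in the proof of Theorem~\ref{the3}. So there is no ``paper's own proof'' to compare against here; you have supplied a proof where the authors chose only to cite one.
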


\begin{lemma}\label{lem5}[\cite{klochkov2021stability}]
	If function $f:\mathcal{Z}\rightarrow[0,+\infty)$ is ($a,b$)-weakly self-bounded, corresponding $f_i(Z^n)\geq f(Z^n)$ for $i=1,\cdots,n$ and any $Z^n\in\mathcal{Z}^{n}$, and $z_1,\cdots,z_n$ are independent random variables. Then, for any $t>0$
	\begin{equation*}
	\mathbb{P}\left(\mathbb{E}\left[f(z_1,\cdots,z_n)\right]\geq f(z_1,\cdots,z_n)+t\right)\leq\exp\left(-\frac{t^2}{2a\mathbb{E}\left[f(z_1,\cdots,z_n)\right]+2b}\right).
	\end{equation*}
\end{lemma}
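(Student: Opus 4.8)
The statement is a one-sided (lower-tail) concentration inequality for weakly self-bounding functions, and the plan is to prove it by the \emph{entropy method} of Boucheron--Lugosi--Massart \cite{boucheron2013concentration}, which is also the route taken in \cite{klochkov2021stability}. First I would make the hypothesis explicit in the form it is used: $(a,b)$-weak self-boundedness means that $\sum_{i=1}^{n}\big(f_i(Z^n)-f(Z^n)\big)^2\le a\,f(Z^n)+b$ for every $Z^n$, where each $f_i$ does not depend on the $i$-th coordinate and, here, $f_i(Z^n)\ge f(Z^n)$. The goal is to show that $f$ is sub-Gaussian ``on the left'' with variance proxy $a\,\mathbb{E}[f]+b$; a Chernoff bound then yields the claim.

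The core is to bound $\psi(\lambda):=\log\mathbb{E}\big[e^{-\lambda(f-\mathbb{E}[f])}\big]$ for $\lambda>0$ (the minus sign targets the lower tail). I would combine two ingredients. First, tensorization of entropy: with $\mathrm{Ent}(g)=\mathbb{E}[g\log g]-\mathbb{E}[g]\log\mathbb{E}[g]$, one has $\mathrm{Ent}\big(e^{-\lambda f}\big)\le\sum_{i=1}^{n}\mathbb{E}\big[\mathrm{Ent}^{(i)}\big(e^{-\lambda f}\big)\big]$, where $\mathrm{Ent}^{(i)}$ denotes entropy in coordinate $i$ with the others frozen. Second, a one-variable (modified log-Sobolev) bound: since $e^{-\lambda f_i}\le e^{-\lambda f}$ and $e^{-\lambda f_i}$ is constant in the $i$-th coordinate, comparing $e^{-\lambda f}$ with $e^{-\lambda f_i}$ gives $\mathrm{Ent}^{(i)}\big(e^{-\lambda f}\big)\le\mathbb{E}^{(i)}\big[\phi\big(-\lambda(f_i-f)\big)e^{-\lambda f}\big]$ with $\phi(u)=e^{u}-u-1$; because $f_i-f\ge0$ the argument of $\phi$ is $\le0$, where $\phi(u)\le u^2/2$, so $\mathrm{Ent}^{(i)}\big(e^{-\lambda f}\big)\le\tfrac{\lambda^2}{2}\,\mathbb{E}^{(i)}\big[(f_i-f)^2 e^{-\lambda f}\big]$. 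Summing over $i$ and invoking the self-bounding hypothesis gives $\mathrm{Ent}\big(e^{-\lambda f}\big)\le\tfrac{\lambda^2}{2}\,\mathbb{E}\big[(a f+b)e^{-\lambda f}\big]$, and since $x\mapsto e^{-\lambda x}$ is decreasing the tilted mean of $f$ is at most $\mathbb{E}[f]$, so the right-hand side is $\le\tfrac{\lambda^2}{2}\big(a\,\mathbb{E}[f]+b\big)\mathbb{E}\big[e^{-\lambda f}\big]$. Rewriting $\mathrm{Ent}\big(e^{-\lambda f}\big)=\mathbb{E}\big[e^{-\lambda f}\big]\big(\lambda\psi'(\lambda)-\psi(\lambda)\big)$ turns this into $\lambda\psi'(\lambda)-\psi(\lambda)\le\tfrac{\lambda^2}{2}(a\mathbb{E}[f]+b)$, i.e.\ $\frac{d}{d\lambda}\big(\psi(\lambda)/\lambda\big)\le\tfrac12(a\mathbb{E}[f]+b)$.

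Finally I would integrate this differential inequality: since $\psi(\lambda)/\lambda\to\psi'(0)=0$ as $\lambda\downarrow0$, it gives $\psi(\lambda)\le\tfrac{\lambda^2}{2}(a\mathbb{E}[f]+b)$ for all $\lambda>0$, and then $\mathbb{P}\big(\mathbb{E}[f]\ge f+t\big)\le\exp\big(\psi(\lambda)-\lambda t\big)$, optimized at $\lambda=t/(a\mathbb{E}[f]+b)$, equals $\exp\!\big(-t^2/(2a\mathbb{E}[f]+2b)\big)$. I expect the only real obstacle to be the one-variable entropy bound together with tracking the sign conventions carefully: it is precisely the hypothesis $f_i\ge f$ that puts the argument of $\phi$ on the side where $\phi(u)\le u^2/2$, which is what makes the \emph{lower} tail sub-Gaussian (the upper tail for weakly self-bounding functions is genuinely more delicate and is not claimed here); tensorization and the ODE integration are then routine.
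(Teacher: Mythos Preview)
The paper does not give its own proof of this lemma: it is quoted verbatim as a tool from \cite{klochkov2021stability} (itself relying on the self-bounding machinery in \cite{boucheron2013concentration}) and is simply invoked in the proof of Theorem~\ref{the3}. So there is no ``paper's proof'' to compare against beyond the citation.

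Your proposal is correct and is exactly the standard entropy-method derivation that underlies the cited result. The steps are all sound: the variational formula $\mathrm{Ent}^{(i)}(Y)\le\mathbb{E}^{(i)}[Y\log(Y/t)-Y+t]$ with $t=e^{-\lambda f_i}$ gives the $\phi(-\lambda(f_i-f))$ term; the sign hypothesis $f_i\ge f$ puts the argument on the side where $\phi(u)\le u^2/2$; tensorization plus the $(a,b)$-self-bounding assumption yields $\mathrm{Ent}(e^{-\lambda f})\le\tfrac{\lambda^2}{2}\mathbb{E}[(af+b)e^{-\lambda f}]$; the covariance inequality $\mathbb{E}[fe^{-\lambda f}]\le\mathbb{E}[f]\mathbb{E}[e^{-\lambda f}]$ (association of a variable with a decreasing function of itself) replaces the tilted mean by $\mathbb{E}[f]$; and the Herbst argument $\frac{d}{d\lambda}(\psi(\lambda)/\lambda)\le\tfrac12(a\mathbb{E}[f]+b)$ integrates to the sub-Gaussian bound, after which Chernoff at $\lambda=t/(a\mathbb{E}[f]+b)$ gives the stated tail. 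In short, you have supplied precisely the proof that the paper outsources to \cite{klochkov2021stability}.
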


\begin{lemma}\label{lem6}[\cite{zhang2021generalization}]
	Assume $f:\mathcal{W}\times\mathcal{V}\rightarrow\mathbb{R}$ is $\rho$-strongly-convex-strongly-concave and $L$-smooth.
	Defining $\bold{w}^*(\bold{v})=\arg\min_{\bold{w}\in\mathcal{W}}f(\bold{w},\bold{v})$ for any $\bold{v}$, and $\bold{v}^*(\bold{w})=\arg\max_{\bold{v}\in\mathcal{V}}f(\bold{w},\bold{v})$ for any $\bold{w}$. Then for any $\bold{w},\bold{w}'\in\mathcal{W}$ and $\bold{v},\bold{v}'\in\mathcal{V}$ there holds that
	\begin{equation*}
	\|\bold{w}^*(\bold{v})-\bold{w}^*(\bold{v}')\|_2\leq\frac{L}{\rho}\|\bold{v}-\bold{v}'\|_2,\quad\|\bold{v}^*(\bold{w})-\bold{v}^*(\bold{w}')\|_2\leq\frac{L}{\rho}\|\bold{w}-\bold{w}'\|_2.
	\end{equation*}
\end{lemma}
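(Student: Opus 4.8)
The plan is to prove the two bounds by the standard perturbation-sensitivity argument for the minimizer of a strongly convex function, exploiting monotonicity of the gradient; by the symmetry $f\mapsto -f$ it suffices to establish the first inequality, since $\bold{v}^*(\bold{w})=\arg\max_{\bold{v}}f(\bold{w},\bold{v})=\arg\min_{\bold{v}}(-f)(\bold{w},\bold{v})$ and $-f$ is again $\rho$-SC-SC and $L$-smooth, so running the same chain of inequalities on $-f$ and invoking the third line of Assumption \ref{a3} for the cross term yields $\|\bold{v}^*(\bold{w})-\bold{v}^*(\bold{w}')\|_2\leq\frac{L}{\rho}\|\bold{w}-\bold{w}'\|_2$ verbatim.

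For the first inequality, fix $\bold{v},\bold{v}'$ and abbreviate $a=\bold{w}^*(\bold{v})$, $b=\bold{w}^*(\bold{v}')$. First I would write the first-order optimality conditions in variational-inequality form over the (closed convex) set $\mathcal{W}$: $\langle\nabla_\bold{w}f(a,\bold{v}),\bold{u}-a\rangle\geq 0$ and $\langle\nabla_\bold{w}f(b,\bold{v}'),\bold{u}-b\rangle\geq 0$ for all $\bold{u}\in\mathcal{W}$. Taking $\bold{u}=b$ in the first and $\bold{u}=a$ in the second and adding gives $\langle\nabla_\bold{w}f(b,\bold{v}')-\nabla_\bold{w}f(a,\bold{v}),\,a-b\rangle\geq 0$. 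Next I would insert the telescoping term $\nabla_\bold{w}f(b,\bold{v})$: splitting $\nabla_\bold{w}f(b,\bold{v}')-\nabla_\bold{w}f(a,\bold{v})=\big(\nabla_\bold{w}f(b,\bold{v}')-\nabla_\bold{w}f(b,\bold{v})\big)+\big(\nabla_\bold{w}f(b,\bold{v})-\nabla_\bold{w}f(a,\bold{v})\big)$, and using $\rho$-strong convexity of $f(\cdot,\bold{v})$ — which gives $\langle\nabla_\bold{w}f(a,\bold{v})-\nabla_\bold{w}f(b,\bold{v}),a-b\rangle\geq\rho\|a-b\|_2^2$ — the second bracket contributes at most $-\rho\|a-b\|_2^2$ to the inner product with $a-b$. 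Rearranging yields $\langle\nabla_\bold{w}f(b,\bold{v}')-\nabla_\bold{w}f(b,\bold{v}),\,a-b\rangle\geq\rho\|a-b\|_2^2$. Finally, by Cauchy--Schwarz and the mixed-variable smoothness $\|\nabla_\bold{w}f(b,\bold{v}')-\nabla_\bold{w}f(b,\bold{v})\|_2\leq L\|\bold{v}-\bold{v}'\|_2$ (second line of Assumption \ref{a3}), the left-hand side is at most $L\|\bold{v}-\bold{v}'\|_2\|a-b\|_2$, so $\rho\|a-b\|_2^2\leq L\|\bold{v}-\bold{v}'\|_2\|a-b\|_2$; dividing by $\|a-b\|_2$ (the claim is trivial when $a=b$) gives $\|a-b\|_2\leq\frac{L}{\rho}\|\bold{v}-\bold{v}'\|_2$.

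I do not expect a genuine obstacle here; the argument is the textbook monotone-operator computation. The only points needing care are (i) using the variational-inequality form of optimality rather than $\nabla_\bold{w}f=0$, since $a,b$ may lie on the boundary of $\mathcal{W}$ — this is precisely where closedness and convexity of $\mathcal{W}$ (implicit in the projection step of DP-GDA) are used — and (ii) being careful to invoke the correct line of Assumption \ref{a3}, namely the cross-variable Lipschitz bound $\|\nabla_\bold{w}f(\bold{w},\cdot)\|$-in-$\bold{v}$ for the first inequality and $\|\nabla_\bold{v}f(\cdot,\bold{v})\|$-in-$\bold{w}$ for the second, rather than the same-variable smoothness bounds.
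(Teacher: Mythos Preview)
Your argument is correct and is the standard monotone-operator/variational-inequality proof of this perturbation bound. Note, however, that the paper does not prove Lemma~\ref{lem6} at all: it is quoted from \cite{zhang2021generalization} and used as a black box in the proof of Theorem~\ref{the3}, so there is no ``paper's own proof'' to compare against. Your write-up would serve perfectly well as a self-contained proof of the cited result; the two care points you flag (using the variational-inequality form of optimality on a possibly constrained $\mathcal{W}$, and invoking the cross-variable smoothness line of Assumption~\ref{a3}) are exactly the right ones.
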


\begin{lemma}\label{lem7}[\cite{boucheron2013concentration}]
	If $g_1,\cdots,g_n$ are i.i.d, zero mean and $|g_i|\leq M$ almost surely, then for any $\tau\geq2$,
	\begin{equation*}
	\left\Vert\sum_{i=1}^{n}g_i\right\Vert_\tau\leq6\sqrt{\left(\sum_{i=1}^{n}\mathbb{E}\left[g_i^2\right]\right)\tau}+4\tau M.
	\end{equation*}
\end{lemma}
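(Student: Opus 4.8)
The plan is to recover this from the classical Bernstein control of the logarithmic moment generating function of a sum of independent bounded random variables, followed by the standard passage from a sub-gamma MGF bound to an $L^\tau$ moment bound.

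First I would set $Z=\sum_{i=1}^{n}g_i$ and $v=\sum_{i=1}^{n}\mathbb{E}[g_i^2]$, so that $\mathbb{E}[Z]=0$ and, by independence and centering, $\mathbb{E}[Z^2]=v$. Using independence the moment generating function factorizes, and from the Taylor-series estimate $\mathbb{E}[e^{\lambda g_i}]\le\exp\!\big(\tfrac{\lambda^2\mathbb{E}[g_i^2]/2}{1-\lambda M/3}\big)$, which follows from the bound $|\mathbb{E}[g_i^k]|\le M^{k-2}\mathbb{E}[g_i^2]$ valid for $k\ge 2$ since $|g_i|\le M$, one obtains for all $0<\lambda<3/M$ the Bernstein inequality $\log\mathbb{E}[e^{\lambda Z}]\le\tfrac{v\lambda^2}{2(1-M\lambda/3)}$; the same bound holds for $-Z$. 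Thus both $Z$ and $-Z$ are sub-gamma with variance factor $v$ and scale parameter $M/3$.

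Next I would invoke the standard sub-gamma moment estimate (the moment form of Bernstein's inequality, cf. \cite{boucheron2013concentration}): any random variable $X$ satisfying $\log\mathbb{E}[e^{\lambda X}]\le\tfrac{v\lambda^2}{2(1-c\lambda)}$ for $0<\lambda<1/c$ obeys $\|X_+\|_\tau\le C\big(\sqrt{v\tau}+c\tau\big)$ for every real $\tau\ge 2$ and an absolute constant $C$; this is proved by optimizing the Legendre transform of the right-hand side against the tail representation $\mathbb{E}[X_+^\tau]=\tau\int_0^{\infty}t^{\tau-1}\mathbb{P}(X>t)\,dt$ (for non-integer $\tau$ one first passes to $\lceil\tau\rceil$ by monotonicity of $L^p$-norms). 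Applying this to $Z$ and to $-Z$ with $c=M/3$ and combining through $\mathbb{E}|Z|^\tau\le\mathbb{E}[Z_+^\tau]+\mathbb{E}[(-Z)_+^\tau]$ together with $2^{1/\tau}\le\sqrt 2$ for $\tau\ge2$ gives a two-sided bound of exactly the announced form $\|Z\|_\tau\le C_1\sqrt{v\tau}+C_2\tau M$.

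The conceptual content is entirely routine --- this is Bernstein's inequality rewritten for moments --- so the only genuinely delicate step, and the one I would expect to absorb all the effort, is the constant bookkeeping: tracking the scale parameter $M/3$ versus $M$, the numerical factors produced by the tail-to-moment integral, the $L^p$-monotonicity step for non-integer $\tau$, and the $2^{1/\tau}$ from the two-sided combination, so that in the end $C_1\le 6$ and $C_2\le 4$. Alternatively, since the statement is quoted verbatim from \cite{boucheron2013concentration}, one may simply cite it there.
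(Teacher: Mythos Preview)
Your sketch is correct and follows the standard route to this moment form of Bernstein's inequality: bound the log-MGF via the Taylor-series/Bernstein argument to get sub-gamma tails with variance factor $v=\sum_i\mathbb{E}[g_i^2]$ and scale $M/3$, then convert to $L^\tau$-moments by integrating the tail bound and combine the two one-sided estimates. This is exactly the derivation that appears in the cited reference.

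As for comparison with the paper: there is nothing to compare. The paper does not prove Lemma~\ref{lem7}; it is stated with attribution to \cite{boucheron2013concentration} and simply invoked as a black box in the proof of Theorem~\ref{the3}(d). So your proposal is not a ``different route'' from the paper's proof but rather an actual proof where the paper gives none. Your closing remark --- that one may simply cite the result --- is precisely what the paper does.
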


Besides, we need the following definition.
\begin{definition}[Weakly Self-Bounded Function]\label{def3}
	For all $Z^n\in\mathcal{Z}^{n}$, function $f:\mathcal{Z}\rightarrow[0,+\infty)$ is ($a,b$)-weakly self-bounded $(a,b>0)$ if there exists $f_i:\mathcal{Z}^{n-1}\rightarrow[0,+\infty)$ that satisfies
	\begin{equation*}
	\sum_{i=1}^{n}\left(f(Z^n)-f_i(Z^n)\right)^2\leq af(Z^n)+b.
	\end{equation*}
\end{definition}

Then, we recall Theorem \ref{the3} and start our proof.

\begin{theorem} With argument stability parameter $\gamma$,
	
	(a) If Assumptions \ref{a2} and \ref{a4} hold, then for all $\iota,\zeta>0$, with probability at least $1-\zeta$, the plain generalization error satisfies
	\begin{equation*}
	\begin{aligned}
	&L(\mathcal{A}_\bold{w}(S),\mathcal{A}_\bold{v}(S))-\frac{1}{1-\iota}L_S(\mathcal{A}_\bold{w}(S),\mathcal{A}_\bold{v}(S)) \\
	&\leq\sqrt{\frac{\left(G^2\gamma^2+64G^2n\gamma^2\log\left(3/\zeta\right)\right)}{2\left(1-\iota\right)^2n}\log\left(\frac{3}{\zeta}\right)}+\frac{50\sqrt{2}eG\gamma\log(n)}{1-\iota}\log\left(\frac{3e}{\zeta}\right)+\frac{\left(12+2\iota\right)M_\ell}{3\iota\left(1-\iota\right)n}\log\left(\frac{3}{\zeta}\right).
	\end{aligned}
	\end{equation*}
	
	(b) If Assumptions \ref{a1}, \ref{a2}, \ref{a3}, \ref{a4} and \ref{a5} hold, then for all $\iota,\zeta>0$, with probability at least $1-\zeta$, the primal generalizaiton error staisfies
	\begin{equation*}
	\begin{aligned}
	&R\left(\mathcal{A}_\bold{w}(S)\right)-\frac{1}{1-\iota}R_S\left(\mathcal{A}_\bold{w}(S)\right) \\
	&\leq\sqrt{\frac{\left(1+L/\rho\right)^2G^2\gamma^2\left(1+64n\log\left(3/\zeta\right)\right)}{2(1-\iota)^2n}\log\left(\frac{3}{\zeta}\right)}+\frac{50\sqrt{2}\left(1+L/\rho\right)G\gamma\log(n)}{1-\iota}\log\left(\frac{3e}{\zeta}\right)+\frac{(12+2\iota)M_\ell}{3\iota(1-\iota)n}\log\left(\frac{3}{\zeta}\right).
	\end{aligned}
	\end{equation*}
	
	(c) If Assumptions \ref{a1}, \ref{a2}, \ref{a3}, \ref{a4} and \ref{a5} hold, then for all $\iota,\zeta>0$, with probability at least $1-\zeta$, the primal excess population risk satisfies
	\begin{equation*}
	\begin{aligned}
	&R\left(\mathcal{A}_\bold{w}(S)\right)-\frac{1+\iota}{1-\iota}\inf_{\bold{w}\in\mathcal{W}}R\left(\bold{w}\right) \\
	&\leq\sqrt{\frac{\left(1+L/\rho\right)^2G^2\gamma^2\left(1+64n\log\left(6/\zeta\right)\right)}{2(1-\iota)^2n}\log\left(\frac{6}{\zeta}\right)}+\sqrt{\frac{\left(G^2\gamma^2+64G^2n\gamma^2\log\left(6/\zeta\right)\right)}{2(1-\iota)^2n}\log\left(\frac{6}{\zeta}\right)} \\
	&\quad+\frac{50\sqrt{2}\left(1+e+L/\rho\right)G\gamma\log(n)}{1-\iota}\log\left(\frac{6e}{\zeta}\right)+\frac{(24+4\iota)M_\ell}{3\iota(1-\iota)n}\log\left(\frac{6}{\zeta}\right)+\frac{1}{1-\iota}\triangle_S^s\left(\mathcal{A}_\bold{w}(S),\mathcal{A}_\bold{v}(S)\right).
	\end{aligned}
	\end{equation*}
	
	(d) If Assumptions \ref{a1}, \ref{a2}, \ref{a3}, \ref{a4} and \ref{a5} hold, then for all $\iota,\zeta>0$, with probability at least $1-\zeta$, the strong primal dual population risk satisfies
	\begin{equation*}
	\begin{aligned}
	&\triangle^s\left(\mathcal{A}_\bold{w}(S),\mathcal{A}_\bold{v}(S)\right) \\
	&\leq\frac{100\sqrt{2}e(1+\iota)(1+L/\rho)G\gamma\log(n)}{1-\iota}\log\left(\frac{e}{\zeta}\right)+\frac{144e(1+\iota)G^2}{\rho\iota(1-\iota)n}\log\left(\frac{e}{\zeta}\right)+\frac{8e(1+\iota)M_\ell}{n(1-\iota)}\log\left(\frac{e}{\zeta}\right) \\
	&\quad+\left(\frac{e\iota}{1-\iota}\log\left(\frac{e}{\zeta}\right)+1\right)\triangle_S^s\left(\mathcal{A}_\bold{w}(S),\mathcal{A}_\bold{v}(S)\right).
	\end{aligned}
	\end{equation*}
\end{theorem}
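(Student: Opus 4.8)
I would treat all four parts through one common engine and one common trick. The engine: by Assumption \ref{a2}, $\gamma$-argument stability of a (possibly data-dependent) point $\theta(S)$ forces $G\gamma$-uniform stability of the scalar map $z\mapsto\ell(\theta(S);z)$, after which I would run the moment-method concentration of \cite{bousquet2020sharper,klochkov2021stability}: introduce $g_i(S)=\mathbb E_z[\ell(\theta(S);z)]-\ell(\theta(S);z_i)$, centered via the standard leave-one-out / ghost-sample comparison so that the three hypotheses of Lemma \ref{lem2} hold with $\mathbb E_{z_i}[g_i]=0$, bounded increments $O(G\gamma)$, and conditional mean $O(G\gamma)$; Lemma \ref{lem2} then bounds $\|\sum_i g_i\|_\tau$, Lemma \ref{lem3} converts it to a high-probability bound on the centered gap, and Lemma \ref{lem5} (self-bounding, via Definition \ref{def3}, with self-bounding parameters governed by the uniform stability, i.e.\ of order $n(G\gamma)^2$) together with Lemma \ref{lem4}/Lemma \ref{lem7} concentrates the empirical risk around its mean. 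The trick: since $0\le\ell\le M_\ell$ (Assumption \ref{a4}) one has $\mathbb E_z[\ell^2]\le M_\ell L(\theta)$, so the Bernstein variance term $\sqrt{M_\ell L(\theta)\log(1/\zeta)/n}$ is split by $\sqrt{ab}\le\iota a+b/\iota$ into $\iota L(\theta)+M_\ell\log(1/\zeta)/(\iota n)$; moving $\iota L(\theta)$ to the left-hand side yields the $\tfrac1{1-\iota}$ prefactor on the empirical quantity. Part (a) is exactly the base case with $\theta(S)=(\mathcal A_\mathbf{w}(S),\mathcal A_\mathbf{v}(S))$ and uniform stability $G\gamma$, the three displayed terms being the variance/self-bounding contribution, the $O(G\gamma\log n\log(1/\zeta))$ contribution of Lemma \ref{lem2}, and the $O(M_\ell/n)$ residual, all after the $\iota$-split.

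For part (b) I would write $R(\mathbf w)=L(\mathbf w,\mathbf v^*(\mathbf w))$ with $\mathbf v^*(\mathbf w)=\arg\max_\mathbf{v}L(\mathbf w,\mathbf v)$; by Lemma \ref{lem6} (applied to the population risk, which inherits $\rho$-SC-SC and $L$-smoothness from $\ell$ under Assumptions \ref{a1}, \ref{a3}) the map $\mathbf v^*$ is $(L/\rho)$-Lipschitz, so $S\mapsto(\mathcal A_\mathbf{w}(S),\mathbf v^*(\mathcal A_\mathbf{w}(S)))$ is $(1+L/\rho)\gamma$-argument stable, while $R_S(\mathbf w)=\sup_\mathbf{v}L_S(\mathbf w,\mathbf v)\ge L_S(\mathbf w,\mathbf v^*(\mathbf w))$ gives $R(\mathcal A_\mathbf{w}(S))-R_S(\mathcal A_\mathbf{w}(S))\le(L-L_S)(\mathcal A_\mathbf{w}(S),\mathbf v^*(\mathcal A_\mathbf{w}(S)))$; rerunning the engine with $\gamma$ replaced by $(1+L/\rho)\gamma$ gives the claim. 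Part (c) follows by decomposing $R(\mathcal A_\mathbf{w}(S))-\inf_\mathbf{w}R(\mathbf w)$ as (the primal generalization error of part (b)) plus $\big(R_S(\mathcal A_\mathbf{w}(S))-\inf_\mathbf{w}R_S(\mathbf w)\big)$ plus $\big(R_S(\mathbf w^\circ)-R(\mathbf w^\circ)\big)$ for a fixed near-minimizer $\mathbf w^\circ$ of $R$: the middle "optimization'' term satisfies $R_S(\mathcal A_\mathbf{w}(S))-\inf_\mathbf{w}R_S(\mathbf w)\le\sup_\mathbf{v}L_S(\mathcal A_\mathbf{w}(S),\mathbf v)-\inf_\mathbf{w}L_S(\mathbf w,\mathcal A_\mathbf{v}(S))=\triangle_S^s(\mathcal A_\mathbf{w}(S),\mathcal A_\mathbf{v}(S))$, the last term of the statement; and the term $R_S(\mathbf w^\circ)-R(\mathbf w^\circ)$ is controlled by applying the engine to the empirical maximizer of $L_S(\mathbf w^\circ,\cdot)$ (which is $O(G^2/(n\rho))$-uniformly stable by the single-parameter analogue of the stability computation), its variance proxy being $\le M_\ell R(\mathbf w^\circ)$ so that the $\iota$-split of $\sqrt{M_\ell R(\mathbf w^\circ)\log(1/\zeta)/n}$ produces the $\tfrac{1+\iota}{1-\iota}$ factor on $\inf_\mathbf{w}R(\mathbf w)$.

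For part (d) I would start from $\triangle^s(\mathcal A(S))=\sup_{\mathbf v'}L(\mathcal A_\mathbf{w}(S),\mathbf v')-\inf_{\mathbf w'}L(\mathbf w',\mathcal A_\mathbf{v}(S))$ and insert the empirical risk at the population extremizers: with $\mathbf v^\star=\mathbf v^*(\mathcal A_\mathbf{w}(S))$ one has $\sup_{\mathbf v'}L(\mathcal A_\mathbf{w}(S),\mathbf v')\le\sup_{\mathbf v'}L_S(\mathcal A_\mathbf{w}(S),\mathbf v')+(L-L_S)(\mathcal A_\mathbf{w}(S),\mathbf v^\star)$, and symmetrically for the infimum at $\mathbf w^\star=\mathbf w^*(\mathcal A_\mathbf{v}(S))$, so adding yields $\triangle^s(\mathcal A(S))\le\triangle_S^s(\mathcal A(S))+(L-L_S)(\mathcal A_\mathbf{w}(S),\mathbf v^\star)-(L-L_S)(\mathbf w^\star,\mathcal A_\mathbf{v}(S))$. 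Each bracketed difference is the generalization error of a composite map that is $(1+L/\rho)\gamma$-argument stable by Lemma \ref{lem6}, hence controlled by the engine with $\gamma\rightsquigarrow(1+L/\rho)\gamma$; the $\iota$-split, combined with the global bound $\triangle_S^s(\mathcal A(S))\le G^2/\rho$ that follows from $\rho$-strong convexity/concavity plus $G$-Lipschitzness of $L_S$ (and likewise for $L$), and which keeps the population-risk proxy appearing in the variance terms bounded, produces the coefficient $\big(1+\tfrac{e\iota}{1-\iota}\log(e/\zeta)\big)$ in front of $\triangle_S^s$.

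The step I expect to be the main obstacle is the stability bookkeeping around the data-dependent saddle components: one must check that substituting $\mathbf v^*(\mathcal A_\mathbf{w}(S))$ for $\mathcal A_\mathbf{v}(S)$ (and $\mathbf w^*(\mathcal A_\mathbf{v}(S))$ on the dual side) preserves the replace-one-sample structure that Lemma \ref{lem2} requires, so that the composite loss $z\mapsto\ell(\mathcal A_\mathbf{w}(S),\mathbf v^*(\mathcal A_\mathbf{w}(S));z)$ is simultaneously $O((1+L/\rho)G\gamma)$-uniformly stable and bounded, and then one must arrange the $\iota$-offset so that every residual population-risk term is either absorbed into the left-hand side or dominated by the $G^2/\rho$ global bound; the concentration lemmas (Lemmas \ref{lem2}–\ref{lem5} and \ref{lem7}) and the final rearrangements are comparatively routine once this is in place.
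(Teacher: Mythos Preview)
Your treatment of parts (a) and (b) matches the paper's proof essentially line for line: the leave-one-out functions $p_i,q_i$, Lemma~\ref{lem2} for the moment bound, Lemma~\ref{lem4} plus the weakly self-bounding Lemma~\ref{lem5} for the Bernstein/variance step, and the $\sqrt{ab}\le\iota a+b/\iota$ split are exactly what the paper does. For part (c) your decomposition differs slightly: you pass through $\inf_{\mathbf w}R_S(\mathbf w)$ and control $R_S(\mathbf w^\circ)-R(\mathbf w^\circ)$ via the $O(G/(n\rho))$-stability of the empirical argmax of $L_S(\mathbf w^\circ,\cdot)$, whereas the paper inserts the intermediate points $L_S(\mathbf w^*,\mathcal A_{\mathbf v}(S))$ and $L(\mathbf w^*,\mathcal A_{\mathbf v}(S))$ so that the third piece is controlled by the $\gamma$-stability of $\mathcal A_{\mathbf v}(S)$. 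Your route is legitimate, but it does not reproduce the $\gamma$-dependence of the second square-root term in the stated bound.

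The genuine gap is in part (d). Running the engine \emph{separately} on $(L-L_S)(\mathcal A_{\mathbf w}(S),\mathbf v^\star)$ and $(L_S-L)(\mathbf w^\star,\mathcal A_{\mathbf v}(S))$ yields, after the $\iota$-split, population-risk proxies of the form $\iota\, R(\mathcal A_{\mathbf w}(S))$ and $\iota\, L_S(\mathbf w^\star,\mathcal A_{\mathbf v}(S))$ with the \emph{same} sign; they do not collapse to $\iota\,\triangle^s$ or $\iota\,\triangle_S^s$, and the global a~priori bound $\triangle_S^s\le G^2/\rho$ does not by itself manufacture the multiplicative coefficient $\bigl(1+\tfrac{e\iota}{1-\iota}\log(e/\zeta)\bigr)$ in front of $\triangle_S^s$. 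The paper's argument is structurally different here: it groups the two generalization pieces together with their expectations into a single centered i.i.d.\ sum (the block labeled $C$ in the proof) and bounds the variance of the \emph{difference} by
\[
\mathbb E[g_i^2]\;\le\;\frac{4G^2}{\rho}\,\mathbb E_S\bigl[\triangle^s(\mathcal A_{\mathbf w}(S),\mathcal A_{\mathbf v}(S))\bigr],
\]
using $G$-Lipschitzness together with the $\rho$-SC-SC of the \emph{population} risk $L$, which gives $\|\mathbf w^\star-\mathcal A_{\mathbf w}(S)\|_2^2+\|\mathbf v^\star-\mathcal A_{\mathbf v}(S)\|_2^2\le\tfrac{2}{\rho}\triangle^s$. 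This variance bound feeds Lemma~\ref{lem7}, the $\iota$-split then places $\iota\,\mathbb E_S[\triangle^s]$ inside the $\tau$-moment estimate, and taking $\tau=2$ and expectations produces a recursion for $\mathbb E_S[\triangle^s]$ in terms of $\mathbb E_S[\triangle_S^s]$, which is substituted back before applying Lemma~\ref{lem3}. It is precisely this self-bounding loop---not a global bound on the duality gap---that yields both the $G^2/(\rho n)$ term and the stated coefficient on $\triangle_S^s$.
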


\begin{proof}
	\textbf{Part (a): The plain generalization error}.
	
	Fristly , we have
	\begin{equation*}
	\begin{aligned}
	nL(\mathcal{A}_\bold{w}(S),\mathcal{A}_\bold{v}(S))-nL_S(\mathcal{A}_\bold{w}(S),\mathcal{A}_\bold{v}(S))&=\sum_{i=1}^{n}\mathbb{E}_z\left[\ell(\mathcal{A}_\bold{w}(S),\mathcal{A}_\bold{v}(S);z)-\mathbb{E}_{z_i'}\left[\ell(\mathcal{A}_\bold{w}(S^{(i)}),\mathcal{A}_\bold{v}(S^{(i)});z)\right]\right] \\
	&\quad+\sum_{i=1}^{n}\mathbb{E}_{z_i'}\left[\mathbb{E}_z\left[\ell(\mathcal{A}_\bold{w}(S^{(i)}),\mathcal{A}_\bold{v}(S^{(i)});z)\right]-\ell(\mathcal{A}_\bold{w}(S^{(i)}),\mathcal{A}_\bold{v}(S^{(i)});z_i)\right] \\
	&\quad+\sum_{i=1}^{n}\mathbb{E}_{z_i'}\left[\ell(\mathcal{A}_\bold{w}(S^{(i)}),\mathcal{A}_\bold{v}(S^{(i)});z_i)\right]-\sum_{i=1}^{n}\ell(\mathcal{A}_\bold{w}(S),\mathcal{A}_\bold{v}(S);z_i).
	\end{aligned}
	\end{equation*}
	
	If algorithm $\mathcal{A}$ is $\gamma$-argument-stable, we have
	\begin{equation}\label{eq32}
	\begin{aligned}
	&\ell\left(\mathcal{A}_\bold{w}(S),\mathcal{A}_\bold{v}(S);z\right)-\ell\left(\mathcal{A}_\bold{w}(S'),\mathcal{A}_\bold{v}(S');z\right) \\
	&=\ell\left(\mathcal{A}_\bold{w}(S),\mathcal{A}_\bold{v}(S);z\right)-\ell\left(\mathcal{A}_\bold{w}(S'),\mathcal{A}_\bold{v}(S);z\right)+\ell\left(\mathcal{A}_\bold{w}(S'),\mathcal{A}_\bold{v}(S);z\right)-\ell\left(\mathcal{A}_\bold{w}(S'),\mathcal{A}_\bold{v}(S');z\right) \\
	&\leq G\left(\left\Vert\mathcal{A}_\bold{w}(S)-\mathcal{A}_\bold{w}(S')\right\Vert_2+\left\Vert\mathcal{A}_\bold{v}(S)-\mathcal{A}_\bold{v}(S')\right\Vert_2\right) \\
	&\leq G\gamma,
	\end{aligned}
	\end{equation}
	where the first inequality holds because of the $G$-Lipschitz property and the last inequality holds because of the definition of $\gamma$-argument stability.
	
	Defining $p_i(S)=\mathbb{E}_{z_i'}\left[\mathbb{E}_z\left[\ell(\mathcal{A}_\bold{w}(S^{(i)}),\mathcal{A}_\bold{v}(S^{(i)});z)\right]-\ell(\mathcal{A}_\bold{w}(S^{(i)}),\mathcal{A}_\bold{v}(S^{(i)});z_i)\right]$, then
	\begin{equation}\label{eq8}
	nL(\mathcal{A}_\bold{w}(S),\mathcal{A}_\bold{v}(S))-nL_S(\mathcal{A}_\bold{w}(S),\mathcal{A}_\bold{v}(S))\leq2Gn\gamma+\sum_{i=1}^{n}p_i(S).
	\end{equation}
	
	Furthermore, we define $q_i(S)=p_i(S)-\mathbb{E}_{S\setminus\{z_i\}}[p_i(S)]$.
	It is easy to follow that $\mathbb{E}_{S\setminus\{z_i\}}[q_i(S)]=0$ and $\mathbb{E}_{z_i}[q_i(S)]=\mathbb{E}_{z_i}[p_i(S)]-\mathbb{E}_{z_i}\mathbb{E}_{S\setminus\{z_i\}}[p_i(S)]=0$.
	
	For any $j=1,\cdots,n$ ($j\neq i$) and $z_j'\in\mathcal{Z}$, we have
	\begin{equation*}
	\begin{aligned}
	q_i(S)-q_i(z_1,\cdots,z_{j-1},z_j',z_{j+1},\cdots,n)&=p_i(S)-p_i(z_1,\cdots,z_{j-1},z_j',z_{j+1},\cdots,n) \\
	&\quad+\mathbb{E}_{S\setminus\{z_i\}}[p_i(z_1,\cdots,z_{j-1},z_j',z_{j+1},\cdots,n)]-\mathbb{E}_{S\setminus\{z_i\}}[p_i(S)] \\
	&\leq4G\gamma,
	\end{aligned}
	\end{equation*}
	where the last inequality holds because of (\ref{eq32}).
	
	Then, via Lemma \ref{lem2}, we have
	\begin{equation}\label{eq9}
	\left\Vert\sum_{i=1}^nq_i(S)\right\Vert_\tau\leq48\sqrt{2}G\tau n\gamma\lceil\log(n)\rceil.
	\end{equation}
	
	By the definition of $p_i(S)$ and $q_i(S)$, we have
	\begin{equation}\label{eq10}
	\begin{aligned}
	\sum_{i=1}^{n}\left(q_i(S)-p_i(S)\right)&=-\sum_{i=1}^{n}\mathbb{E}_{S\setminus\{z_i\}}[p_i(S)] \\
	&=-\left(n\mathbb{E}_{S'}L(\mathcal{A}_\bold{w}(S'),\mathcal{A}_\bold{v}(S'))-n\mathbb{E}_{S'}L_S(\mathcal{A}_\bold{w}(S'),\mathcal{A}_\bold{v}(S'))\right) \\
	&=-\left(n\mathbb{E}_{S}L(\mathcal{A}_\bold{w}(S),\mathcal{A}_\bold{v}(S))-n\mathbb{E}_{S'}L_S(\mathcal{A}_\bold{w}(S'),\mathcal{A}_\bold{v}(S'))\right),
	\end{aligned}
	\end{equation}
	where the last equality holds because of the i.i.d property, i.e. $\mathbb{E}_{S'}L(\mathcal{A}_\bold{w}(S'),\mathcal{A}_\bold{v}(S'))=\mathbb{E}_{S}L(\mathcal{A}_\bold{w}(S),\mathcal{A}_\bold{v}(S))$.
	
	Combining (\ref{eq8}), (\ref{eq9}), and (\ref{eq10}) together, for $\tau\geq2$, we have
	\begin{equation*}
	\begin{aligned}
	&\left\Vert nL(\mathcal{A}_\bold{w}(S),\mathcal{A}_\bold{v}(S))-nL_S(\mathcal{A}_\bold{w}(S),\mathcal{A}_\bold{v}(S))-\left(n\mathbb{E}_{S}L(\mathcal{A}_\bold{w}(S),\mathcal{A}_\bold{v}(S))-n\mathbb{E}_{S'}L_S(\mathcal{A}_\bold{w}(S'),\mathcal{A}_\bold{v}(S'))\right)\right\Vert_\tau \\
	&\leq\left\Vert nL(\mathcal{A}_\bold{w}(S),\mathcal{A}_\bold{v}(S))-nL_S(\mathcal{A}_\bold{w}(S),\mathcal{A}_\bold{v}(S))-\sum_{i=1}^{n}p_i(S)\right\Vert_\tau \\
	&\quad+\left\Vert\sum_{i=1}^{n}p_i(S)-\left(n\mathbb{E}_{S}L(\mathcal{A}_\bold{w}(S),\mathcal{A}_\bold{v}(S))-n\mathbb{E}_{S'}L_S(\mathcal{A}_\bold{w}(S'),\mathcal{A}_\bold{v}(S'))\right)\right\Vert_\tau \\
	&=\left\Vert nL(\mathcal{A}_\bold{w}(S),\mathcal{A}_\bold{v}(S))-nL_S(\mathcal{A}_\bold{w}(S),\mathcal{A}_\bold{v}(S))-\sum_{i=1}^{n}p_i(S)\right\Vert_\tau+\left\Vert\sum_{i=1}^{n}q_i(S)\right\Vert_\tau \\
	&\leq2Gn\gamma+48\sqrt{2}\tau n\gamma\lceil\log(n)\rceil \\
	&\leq50\sqrt{2}G\tau n\gamma\lceil\log(n)\rceil.
	\end{aligned}
	\end{equation*}
	
	Via Lemma \ref{lem3}, with probability at least $1-\zeta$, we have
	\begin{equation}\label{eq11}
	\begin{aligned}
	&L(\mathcal{A}_\bold{w}(S),\mathcal{A}_\bold{v}(S))-L_S(\mathcal{A}_\bold{w}(S),\mathcal{A}_\bold{v}(S)) \\
	&\leq\left|\mathbb{E}_{S'}L_S(\mathcal{A}_\bold{w}(S'),\mathcal{A}_\bold{v}(S'))-\mathbb{E}_{S}L(\mathcal{A}_\bold{w}(S),\mathcal{A}_\bold{v}(S))\right|+50\sqrt{2}eG\gamma\lceil\log(n)\rceil\log\left(\frac{e}{\zeta}\right).
	\end{aligned}
	\end{equation}
	
	Now we bound term $\mathbb{E}_{S'}L_S(\mathcal{A}_\bold{w}(S'),\mathcal{A}_\bold{v}(S'))-\mathbb{E}_{S}L(\mathcal{A}_\bold{w}(S),\mathcal{A}_\bold{v}(S))$.
	First, we consider $\mathbb{E}_{S'}\ell(\mathcal{A}_\bold{w}(S'),\mathcal{A}_\bold{v}(S');z)$.
	
	By Jensen's inequality, we have
	\begin{equation*}
	\begin{aligned}
	\mathbb{E}_{z_i}\left[\left(\mathbb{E}_{S'}\ell(\mathcal{A}_\bold{w}(S'),\mathcal{A}_\bold{v}(S');z_i)\right)^2\right]&\leq\mathbb{E}_{z_i}\left[\mathbb{E}_{S'}\left[\left(\ell(\mathcal{A}_\bold{w}(S'),\mathcal{A}_\bold{v}(S');z_i)\right)^2\right]\right] \\
	&=\mathbb{E}_{z}\left[\mathbb{E}_{S'}\left[\left(\ell(\mathcal{A}_\bold{w}(S'),\mathcal{A}_\bold{v}(S');z)\right)^2\right]\right] \\
	&=\mathbb{E}_{z}\left[\mathbb{E}_{S}\left[\left(\ell(\mathcal{A}_\bold{w}(S),\mathcal{A}_\bold{v}(S);z)\right)^2\right]\right].
	\end{aligned}
	\end{equation*}
	
	Noting that  $\mathbb{E}\left[\mathbb{E}_{z_i}\left[\mathbb{E}_{S'}\ell(\mathcal{A}_\bold{w}(S'),\mathcal{A}_\bold{v}(S');z_i)\right]\right]=\mathbb{E}_{S}L(\mathcal{A}_\bold{w}(S),\mathcal{A}_\bold{v}(S))$ and via Lemma \ref{lem4}, if $\ell(\cdot,\cdot;\cdot)$ is bounded by $M_\ell$, we have
	\begin{equation}\label{eq12}
	\left|\mathbb{E}_{S'}L_S(\mathcal{A}_\bold{w}(S'),\mathcal{A}_\bold{v}(S'))-\mathbb{E}_{S}L(\mathcal{A}_\bold{w}(S),\mathcal{A}_\bold{v}(S))\right|\leq\sqrt{\frac{2\mathbb{E}_{z}\left[\mathbb{E}_{S}\left[\left(\ell(\mathcal{A}_\bold{w}(S),\mathcal{A}_\bold{v}(S);z)\right)^2\right]\right]\log(1/\zeta)}{n}}+\frac{2M_\ell\log(1/\zeta)}{3n}.
	\end{equation}
	
	Combining (\ref{eq11}) and (\ref{eq12}) together, with probability at least $1-\zeta$, we have
	\begin{equation}\label{eq13}
	\begin{aligned}
	&L(\mathcal{A}_\bold{w}(S),\mathcal{A}_\bold{v}(S))-L_S(\mathcal{A}_\bold{w}(S),\mathcal{A}_\bold{v}(S)) \\
	&\leq\sqrt{\frac{2\mathbb{E}_{z}\left[\mathbb{E}_{S}\left[\left(\ell(\mathcal{A}_\bold{w}(S),\mathcal{A}_\bold{v}(S);z)\right)^2\right]\right]\log(2/\zeta)}{n}}+\frac{2M_\ell\log(2/\zeta)}{3n}+50\sqrt{2}eG\gamma\lceil\log(n)\rceil\log\left(\frac{2e}{\zeta}\right).
	\end{aligned}
	\end{equation}
	
	Defining $h=h(z_1,\cdots,z_n)=\mathbb{E}_{z}\left[\left(\ell(\mathcal{A}_\bold{w}(S),\mathcal{A}_\bold{v}(S);z)\right)^2\right]$ and $h_i=h_i(z_1,\cdots,z_n)=\sup_{z_i\in\mathcal{Z}}h(z_1,\cdots,z_n)$. We have
	\begin{equation*}
	\begin{aligned}
	\sum_{i=1}^{n}(q-q_i)^2&=\sum_{i=1}^{n}\left(\mathbb{E}_{z}\left[\left(\ell(\mathcal{A}_\bold{w}(S),\mathcal{A}_\bold{v}(S);z)\right)^2\right]-\sup_{z_i\in\mathcal{Z}}\mathbb{E}_{z}\left[\left(\ell(\mathcal{A}_\bold{w}(S),\mathcal{A}_\bold{v}(S);z)\right)^2\right]\right)^2 \\
	&\leq G^2\gamma^2\sum_{i=1}^{n}\left(\mathbb{E}_{z}\left[\ell(\mathcal{A}_\bold{w}(S),\mathcal{A}_\bold{v}(S);z)+\sup_{z_i\in\mathcal{Z}}\ell(\mathcal{A}_\bold{w}(S),\mathcal{A}_\bold{v}(S);z)\right]\right)^2 \\
	&\leq nG^2\gamma^2\left(2\mathbb{E}_{z}\left[\ell(\mathcal{A}_\bold{w}(S),\mathcal{A}_\bold{v}(S);z)\right]+G\gamma\right)^2 \\
	&\leq8G^2n\gamma^2q+2G^4n\gamma^4,
	\end{aligned}
	\end{equation*}
	where the first and the second inequalities hold because of the definition of uniform stability, and the last inequality holds because $(a+b)^2\leq2(a^2+b^2)$ for $a,b>0$.
	
	Via Definition \ref{def3}, it is easy to follow that $h$ is ($8G^2n\gamma^2,2G^4n\gamma^4$)-weakly self-bounded.

	Via Lemma \ref{lem5}, with probability at least $1-\zeta$, we have
	\begin{equation*}
	\begin{aligned}
	&\mathbb{E}_S\left[\mathbb{E}_{z}\left[\left(\ell(\mathcal{A}_\bold{w}(S),\mathcal{A}_\bold{v}(S);z)\right)^2\right]\right]-\mathbb{E}_{z}\left[\left(\ell(\mathcal{A}_\bold{w}(S),\mathcal{A}_\bold{v}(S);z)\right)^2\right] \\
	&\leq\sqrt{\left(16G^2n\gamma^2\mathbb{E}_S\left[\mathbb{E}_{z}\left[\left(\ell(\mathcal{A}_\bold{w}(S),\mathcal{A}_\bold{v}(S);z)\right)^2\right]\right]+4G^4n\gamma^4\right)\log\left(\frac{1}{\zeta}\right)} \\
	&\leq\frac{1}{2}\mathbb{E}_S\left[\mathbb{E}_{z}\left[\left(\ell(\mathcal{A}_\bold{w}(S),\mathcal{A}_\bold{v}(S);z)\right)^2\right]\right]+\frac{G^2\gamma^2}{8}+8G^2n\gamma^2\log\left(\frac{1}{\zeta}\right),
	\end{aligned}
	\end{equation*}
	where the last inequality holds because $\sqrt{ab}\leq\frac{a+b}{2}$ for $a,b>0$.
	
	Noting that $\mathbb{E}_{z}\left[\left(\ell(\mathcal{A}_\bold{w}(S),\mathcal{A}_\bold{v}(S);z)\right)^2\right]\leq M_\ell L(\mathcal{A}_\bold{w}(S),\mathcal{A}_\bold{v}(S))$, the inequality above can be written as
	\begin{equation}\label{eq14}
	\mathbb{E}_S\left[\mathbb{E}_{z}\left[\left(\ell(\mathcal{A}_\bold{w}(S),\mathcal{A}_\bold{v}(S);z)\right)^2\right]\right]-2M_\ell L(\mathcal{A}_\bold{w}(S),\mathcal{A}_\bold{v}(S))\leq\frac{G^2\gamma^2}{4}+16G^2n\gamma^2\log\left(\frac{1}{\zeta}\right).
	\end{equation}
	
	Combining (\ref{eq14}) and (\ref{eq13}) together, with probability at least $1-\zeta$, we have
	\begin{equation}\label{eq31}
	\begin{aligned}
	&L(\mathcal{A}_\bold{w}(S),\mathcal{A}_\bold{v}(S))-L_S(\mathcal{A}_\bold{w}(S),\mathcal{A}_\bold{v}(S)) \\
	&\leq\sqrt{\frac{2\left(2M_\ell L(\mathcal{A}_\bold{w}(S),\mathcal{A}_\bold{v}(S))+\frac{G^2\gamma^2}{4}+16G^2n\gamma^2\log\left(\frac{3}{\zeta}\right)\right)\log\left(\frac{3}{\zeta}\right)}{n}} \\
	&\quad+\frac{2M_\ell\log(3/\zeta)}{3n}+50\sqrt{2}eG\gamma\lceil\log(n)\rceil\log\left(\frac{3e}{\zeta}\right) \\
	&\leq\sqrt{\frac{2\left(\frac{G^2\gamma^2}{4}+16G^2n\gamma^2\log\left(\frac{3}{\zeta}\right)\right)\log\left(\frac{3}{\zeta}\right)}{n}}+\sqrt{\frac{4M_\ell L(\mathcal{A}_\bold{w}(S),\mathcal{A}_\bold{v}(S))\log\left(\frac{3}{\zeta}\right)}{n}} \\
	&\quad+\frac{2M_\ell\log(3/\zeta)}{3n}+50\sqrt{2}eG\gamma\lceil\log(n)\rceil\log\left(\frac{3e}{\zeta}\right) \\
	&\leq\sqrt{\frac{2\left(\frac{G^2\gamma^2}{4}+16G^2n\gamma^2\log\left(\frac{3}{\zeta}\right)\right)\log\left(\frac{3}{\zeta}\right)}{n}}+\iota L(\mathcal{A}_\bold{w}(S),\mathcal{A}_\bold{v}(S))+\frac{4M_\ell\log\left(\frac{3}{\zeta}\right)}{\iota n} \\
	&\quad+\frac{2M_\ell\log(3/\zeta)}{3n}+50\sqrt{2}eG\gamma\lceil\log(n)\rceil\log\left(\frac{3e}{\zeta}\right),
	\end{aligned}
	\end{equation}
	where the second inequality holds because $\sqrt{a+b}\leq\sqrt{a}+\sqrt{b}$ for $a,b>0$ and the last inequality holds because $\sqrt{ab}\leq\iota a+\frac{1}{\iota}b$ for $a,b,\iota>0$.
	
	By rearranging, we have
	\begin{equation*}
	\begin{aligned}
	&L(\mathcal{A}_\bold{w}(S),\mathcal{A}_\bold{v}(S))-\frac{1}{1-\iota}L_S(\mathcal{A}_\bold{w}(S),\mathcal{A}_\bold{v}(S)) \\
	&\leq\sqrt{\frac{\left(G^2\gamma^2+64G^2n\gamma^2\log\left(3/\zeta\right)\right)}{2\left(1-\iota\right)^2n}\log\left(\frac{3}{\zeta}\right)}+\frac{50\sqrt{2}eG\gamma\log(n)}{1-\iota}\log\left(\frac{3e}{\zeta}\right)+\frac{\left(12+2\iota\right)M_\ell}{3\iota\left(1-\iota\right)n}\log\left(\frac{3}{\zeta}\right),
	\end{aligned}
	\end{equation*}
	which ends the proof of part (a).
	
	\textbf{Part (b): The Primal generalization error}.
	
	Denoting $\bold{v}_S^*=\arg\max_{\bold{v}\in\mathcal{V}}L(\mathcal{A}_\bold{w}(S),\bold{v})$ and $\widetilde{\bold{v}}_S^*=\arg\max_{\bold{v}\in\mathcal{V}}L_S(\mathcal{A}_\bold{w}(S),\bold{v})$. We have
	\begin{equation*}
	\begin{aligned}
	nR(\mathcal{A}_\bold{w}(S))-nR_S(\mathcal{A}_\bold{w}(S))&=nL\left(\mathcal{A}_\bold{w}(S),\bold{v}_S^*\right)-nL_S\left(\mathcal{A}_\bold{w}(S),\widetilde{\bold{v}}_S^*\right) \\
	&=\sum_{i=1}^{n}\mathbb{E}_z\left[\ell\left(\mathcal{A}_\bold{w}(S),\bold{v}_S^*;z\right)-\mathbb{E}_{z_i'}\left[\ell\left(\mathcal{A}_\bold{w}(S^{(i)}),\bold{v}_{S^{(i)}}^*;z\right)\right]\right] \\
	&\quad+\sum_{i=1}^{n}\mathbb{E}_{z_i'}\left[\mathbb{E}_z\left[\ell\left(\mathcal{A}_\bold{w}(S^{(i)}),\bold{v}_{S^{(i)}}^*;z\right)\right]-\ell\left(\mathcal{A}_\bold{w}(S^{(i)}),\bold{v}_{S^{(i)}}^*;z_i\right)\right] \\
	&\quad+\sum_{i=1}^{n}\mathbb{E}_{z_i'}\left[\ell\left(\mathcal{A}_\bold{w}(S^{(i)}),\bold{v}_{S^{(i)}}^*;z_i\right)\right]-\sum_{i=1}^{n}\ell\left(\mathcal{A}_\bold{w}(S),\widetilde{\bold{v}}_S^*;z_i\right)
	\end{aligned}
	\end{equation*}
	
	Via Lemma \ref{lem6}, we have
	\begin{equation}\label{eq15}
	\begin{aligned}
	&\ell\left(\mathcal{A}_\bold{w}(S),\bold{v}_S^*;z\right)-\ell\left(\mathcal{A}_\bold{w}(S^{(i)}),\bold{v}_{S^{(i)}}^*;z\right) \\
	&=\ell\left(\mathcal{A}_\bold{w}(S),\bold{v}_S^*;z\right)-\ell\left(\mathcal{A}_\bold{w}(S^{(i)}),\bold{v}_S^*;z\right)+\ell\left(\mathcal{A}_\bold{w}(S^{(i)}),\bold{v}_S^*;z\right)-\ell\left(\mathcal{A}_\bold{w}(S^{(i)}),\bold{v}_{S^{(i)}}^*;z\right) \\
	&\leq G\left\Vert\mathcal{A}_\bold{w}(S)-\mathcal{A}_\bold{w}(S^{(i)})\right\Vert_2+G\left\Vert\bold{v}_S^*-\bold{v}_{S^{(i)}}^*\right\Vert_2 \\
	&\leq\left(1+\frac{L}{\rho}\right)G\left\Vert\mathcal{A}_\bold{w}(S)-\mathcal{A}_\bold{w}(S^{(i)})\right\Vert_2 \\
	&\leq\left(1+\frac{L}{\rho}\right)G\gamma.
	\end{aligned}
	\end{equation}
	
	Recalling $p_i(S)=\mathbb{E}_{z_i'}\left[\mathbb{E}_z\left[\ell(\mathcal{A}_\bold{w}(S^{(i)}),\bold{v}_{S^{(i)}}^*;z)\right]-\ell(\mathcal{A}_\bold{w}(S^{(i)}),\bold{v}_{S^{(i)}}^*;z_i)\right]$, we have
	\begin{equation}\label{eq16}
	\begin{aligned}
	nR(\mathcal{A}_\bold{w}(S))-nR_S(\mathcal{A}_\bold{w}(S))&\leq\left(1+\frac{L}{\rho}\right)Gn\gamma+\sum_{i=1}^{n}p_i(S)+\sum_{i=1}^{n}\mathbb{E}_{z_i'}\left[\ell\left(\mathcal{A}_\bold{w}(S^{(i)}),\bold{v}_{S^{(i)}}^*;z_i\right)\right]-\sum_{i=1}^{n}\ell\left(\mathcal{A}_\bold{w}(S),\widetilde{\bold{v}}_S^*;z_i\right) \\
	&=\left(1+\frac{L}{\rho}\right)Gn\gamma+\sum_{i=1}^{n}p_i(S)+\sum_{i=1}^{n}\mathbb{E}_{z_i'}\left[\ell\left(\mathcal{A}_\bold{w}(S^{(i)}),\bold{v}_{S^{(i)}}^*;z_i\right)-\ell\left(\mathcal{A}_\bold{w}(S),\bold{v}_S^*;z_i\right)\right] \\
	&\quad+\sum_{i=1}^{n}\ell\left(\mathcal{A}_\bold{w}(S),\bold{v}_S^*;z_i\right)-\sum_{i=1}^{n}\ell\left(\mathcal{A}_\bold{w}(S),\widetilde{\bold{v}}_S^*;z_i\right) \\
	&\leq2\left(1+\frac{L}{\rho}\right)Gn\gamma+\sum_{i=1}^{n}p_i(S)+\sum_{i=1}^{n}\ell\left(\mathcal{A}_\bold{w}(S),\bold{v}_S^*;z_i\right)-\sum_{i=1}^{n}\ell\left(\mathcal{A}_\bold{w}(S),\widetilde{\bold{v}}_S^*;z_i\right) \\
	&\leq2\left(1+\frac{L}{\rho}\right)Gn\gamma+\sum_{i=1}^{n}p_i(S),
	\end{aligned}
	\end{equation}
	where the second ineuqality holds because of Lemma \ref{lem6} (similar to (\ref{eq15})) and the last inequality holds because $\widetilde{\bold{v}}_S^*=\arg\max_{\bold{v}\in\mathcal{V}}L_S(\mathcal{A}_\bold{w}(S),\bold{v})$.
	
	Again, defining $q_i(S)=p_i(S)-\mathbb{E}_{S\setminus\{z_i\}}\left[p_i(S)\right]$ and we have $\mathbb{E}_{S\setminus\{z_i\}}\left[q_i(S)\right]=0$ and $\mathbb{E}_{z_i}\left[q_i(S)\right]=0$.
	Moreover, for any $j=1,\cdots,n$ ($j\neq i$) and $z_j'\in\mathcal{Z}$, if we denote $\{z_1,\cdots,z_{j-1},z_j',z_{j+1},\cdots,n\}$ as $S_{j}^{(i)}$
	\begin{equation*}
	\begin{aligned}
	q_i(S)-q_i(z_1,\cdots,z_{j-1},z_j',z_{j+1},\cdots,n)&=p_i(S)-p_i(z_1,\cdots,z_{j-1},z_j',z_{j+1},\cdots,n) \\
	&\quad+\mathbb{E}_{S\setminus\{z_i\}}[p_i(z_1,\cdots,z_{j-1},z_j',z_{j+1},\cdots,n)]-\mathbb{E}_{S\setminus\{z_i\}}[p_i(S)],
	\end{aligned}
	\end{equation*}
	where the first term on the right side
	\begin{equation*}
	\begin{aligned}
	p_i(S)-p_i(z_1,\cdots,z_{j-1},z_j',z_{j+1},\cdots,n)&=\mathbb{E}_{z_i'}\left[\mathbb{E}_z\left[\ell(\mathcal{A}_\bold{w}(S^{(i)}),\bold{v}_{S^{(i)}}^*;z)\right]-\ell(\mathcal{A}_\bold{w}(S^{(i)}),\bold{v}_{S^{(i)}}^*;z_i)\right] \\
	&\quad-\mathbb{E}_{z_i'}\left[\mathbb{E}_z\left[\ell(\mathcal{A}_\bold{w}(S_{j}^{(i)}),\bold{v}_{S_{j}^{(i)}}^*;z)\right]-\ell(\mathcal{A}_\bold{w}(S_{j}^{(i)}),\bold{v}_{S_{j}^{(i)}}^*;z_i)\right] \\
	&\leq2\left(1+\frac{L}{\rho}\right)G\gamma,
	\end{aligned}
	\end{equation*}
	where the inequality holds similar to (\ref{eq15}).
	
	With similar approach, $q_i(S)-q_i(z_1,\cdots,z_{j-1},z_j',z_{j+1},\cdots,n)$ can be upper bounded by $4\left(1+\frac{L}{\rho}\right)G\gamma$, then via Lemma \ref{lem2}, for any $\tau\geq2$, we have
	\begin{equation}\label{eq17}
	\left\Vert\sum_{i=1}^{n}q_i(S)\right\Vert_\tau\leq48\sqrt{2}\left(1+\frac{L}{\rho}\right)G\tau n\gamma\lceil\log(n)\rceil.
	\end{equation}
	
	Plugging the result back into (\ref{eq16}), we have
	\begin{equation}\label{eq18}
	\begin{aligned}
	&\left\Vert nR(\mathcal{A}_\bold{w}(S))-nR_S(\mathcal{A}_\bold{w}(S))-n\mathbb{E}_{S'}\left[R(\mathcal{A}_\bold{w}(S'))\right]+n\mathbb{E}_{S'}\left[R_S(\mathcal{A}_\bold{w}(S'))\right]\right\Vert_\tau \\
	&=\left\Vert nR(\mathcal{A}_\bold{w}(S))-nR_S(\mathcal{A}_\bold{w}(S))-\sum_{i=1}^{n}p_i(S)+\sum_{i=1}^{n}p_i(S)-\sum_{i=1}^n\mathbb{E}_{S\setminus\{z_i\}}\left[p_i(S)\right]\right\Vert_\tau \\
	&\leq\left\Vert nR(\mathcal{A}_\bold{w}(S))-nR_S(\mathcal{A}_\bold{w}(S))-\sum_{i=1}^{n}p_i(S)\right\Vert_\tau+\left\Vert\sum_{i=1}^{n}q_i(S)\right\Vert_\tau \\
	&\leq50\sqrt{2}\left(1+\frac{L}{\rho}\right)G\tau n\gamma\lceil\log(n)\rceil,
	\end{aligned}
	\end{equation}
	where the first equality holds because $n\mathbb{E}_{S'}\left[R(\mathcal{A}_\bold{w}(S'))\right]-n\mathbb{E}_{S'}\left[R_S(\mathcal{A}_\bold{w}(S'))\right]=\sum_{i=1}^n\mathbb{E}_{S\setminus\{z_i\}}\left[p_i(S)\right]$, the first inequality holds because of the definition of $q_i(S)$ and the last inequality holds because of (\ref{eq16}) and (\ref{eq17}).
	
	Via Lemma \ref{lem3}, for $\zeta\in(0,1)$, with probality at least $1-\frac{\zeta}{3}$, we have
	\begin{equation}\label{eq19}
	\begin{aligned}
	R(\mathcal{A}_\bold{w}(S))-R_S(\mathcal{A}_\bold{w}(S))\leq\left|\mathbb{E}_{S'}\left[R(\mathcal{A}_\bold{w}(S'))\right]-\mathbb{E}_{S'}\left[R_S(\mathcal{A}_\bold{w}(S'))\right]\right|+50\sqrt{2}\left(1+\frac{L}{\rho}\right)G\gamma\lceil\log(n)\rceil\log\left(\frac{3e}{\zeta}\right).
	\end{aligned}
	\end{equation}
	
	Like discussed before, we have
	\begin{equation*}
	\mathbb{E}_{z_i}\left[\left(\mathbb{E}_{S'}\left[\ell\left(\mathcal{A}_\bold{w}(S'),\bold{v}_{S'}^*;z_i\right)\right]\right)^2\right]\leq\mathbb{E}_{z_i}\left[\mathbb{E}_{S'}\left[\left(\ell\left(\mathcal{A}_\bold{w}(S'),\bold{v}_{S'}^*;z_i\right)\right)^2\right]\right]=\mathbb{E}_{z}\left[\mathbb{E}_{S}\left[\left(\ell\left(\mathcal{A}_\bold{w}(S),\bold{v}_{S}^*;z\right)\right)^2\right]\right].
	\end{equation*}
	
	And as a result, via Lemma \ref{lem4}, with probality at least $1-\frac{\zeta}{3}$, we have
	\begin{equation}\label{eq20}
	\left|\mathbb{E}_{S'}\left[R(\mathcal{A}_\bold{w}(S'))\right]-\mathbb{E}_{S'}\left[R_S(\mathcal{A}_\bold{w}(S'))\right]\right|\leq\sqrt{\frac{2\mathbb{E}_z\left[\mathbb{E}_S\left[\ell\left(\mathcal{A}_\bold{w}(S),\bold{v}_{S}^*;z\right)^2\right]\right]\log\left(\frac{3}{\zeta}\right)}{n}}+\frac{2M_\ell\log\left(\frac{3}{\zeta}\right)}{3n}.
	\end{equation}
	
	Then, we bound $\mathbb{E}_z\left[\mathbb{E}_S\left[\left(\ell\left(\mathcal{A}_\bold{w}(S),\bold{v}_{S}^*;z\right)\right)^2\right]\right]$.
	
	Defining $g=g(z_1,\cdots,z_n)=\mathbb{E}_z\left[\left(\ell\left(\mathcal{A}_\bold{w}(S),\bold{v}_{S}^*;z\right)\right)^2\right]$, and $g_i=g_i(z_1,\cdots,z_n)=\sup_{z_i\in\mathcal{Z}}g(z_1,\cdots,z_n)$, we have
	\begin{equation*}
	\begin{aligned}
	\sum_{i=1}^{n}\left(g-g_i\right)^2&=\sum_{i=1}^{n}\left(\mathbb{E}_z\left[\left(\ell\left(\mathcal{A}_\bold{w}(S),\bold{v}_{S}^*;z\right)\right)^2\right]-\sup_{z_i\in\mathcal{Z}}\mathbb{E}_z\left[\left(\ell\left(\mathcal{A}_\bold{w}(S),\bold{v}_{S}^*;z\right)\right)^2\right]\right)^2 \\
	&\leq\sum_{i=1}^{n}\left(\mathbb{E}_z\left[\sup_{z_i\in\mathcal{Z}}\left(\ell\left(\mathcal{A}_\bold{w}(S),\bold{v}_{S}^*;z\right)\right)^2-\left(\ell\left(\mathcal{A}_\bold{w}(S),\bold{v}_{S}^*;z\right)\right)^2\right]\right)^2 \\
	&\leq n\left(1+\frac{L}{\rho}\right)^2G^2\gamma^2\left(2\mathbb{E}_z\left[\ell\left(\mathcal{A}_\bold{w}(S),\bold{v}_{S}^*;z\right)\right]+\left(1+\frac{L}{\rho}\right)G\gamma\right)^2 \\
	&\leq8n\left(1+\frac{L}{\rho}\right)^2G^2\gamma^2g+2n\left(1+\frac{L}{\rho}\right)^4G^4\gamma^4,
	\end{aligned}
	\end{equation*}
	where the first inequality holds because of Jensen's inequality, the second ineuqality holds similar to (\ref{eq15}).
	
	Via Definition \ref{def3}, the inequality above implies that $g$ is $(a,b)$-weakly self-bounded where
	\begin{equation*}
	a=8n\left(1+\frac{L}{\rho}\right)^2G^2\gamma^2,\quad b=2n\left(1+\frac{L}{\rho}\right)^4G^4\gamma^4.
	\end{equation*}
	
	As a result, via Lemma \ref{lem6}, with probability at least $1-\frac{\zeta}{3}$, we have
	\begin{equation*}
	\begin{aligned}
	&\mathbb{E}_S\left[\mathbb{E}_z\left[\left(\ell\left(\mathcal{A}_\bold{w}(S),\bold{v}_{S}^*;z\right)\right)^2\right]\right]-\mathbb{E}_z\left[\left(\ell\left(\mathcal{A}_\bold{w}(S),\bold{v}_{S}^*;z\right)\right)^2\right] \\
	&\leq\sqrt{\left(16n\left(1+\frac{L}{\rho}\right)^2G^2\gamma^2\mathbb{E}_S\left[\mathbb{E}_z\left[\left(\ell\left(\mathcal{A}_\bold{w}(S),\bold{v}_{S}^*;z\right)\right)^2\right]\right]+4n\left(1+\frac{L}{\rho}\right)^4G^4\gamma^4\right)\log\left(\frac{3}{\zeta}\right)} \\
	&\leq\frac{1}{2}\mathbb{E}_S\left[\mathbb{E}_z\left[\left(\ell\left(\mathcal{A}_\bold{w}(S),\bold{v}_{S}^*;z\right)\right)^2\right]\right]+\frac{1}{8}\left(1+\frac{L}{\rho}\right)^2G^2\gamma^2+8n\left(1+\frac{L}{\rho}\right)^2G^2\gamma^2\log\left(\frac{3}{\zeta}\right).
	\end{aligned}
	\end{equation*}
	
	Noting that $\mathbb{E}_z\left[\left(\ell\left(\mathcal{A}_\bold{w}(S),\bold{v}_{S}^*;z\right)\right)^2\right]\leq M_\ell L\left(\mathcal{A}_\bold{w}(S),\bold{v}_{S}^*\right)=M_\ell R\left(\mathcal{A}_\bold{w}(S)\right)$, we have
	\begin{equation}\label{eq21}
	\mathbb{E}_S\left[\mathbb{E}_z\left[\left(\ell\left(\mathcal{A}_\bold{w}(S),\bold{v}_{S}^*;z\right)\right)^2\right]\right]-2M_\ell R\left(\mathcal{A}_\bold{w}(S)\right)\leq\frac{1}{4}\left(1+\frac{L}{\rho}\right)^2G^2\gamma^2+16n\left(1+\frac{L}{\rho}\right)^2G^2\gamma^2\log\left(\frac{3}{\zeta}\right).
	\end{equation}
	
	Combining (\ref{eq19}), (\ref{eq20}), and (\ref{eq21}) together, then for all $\iota>0$, with probability at least $1-\zeta$, we have
	\begin{equation}\label{eq22}
	\begin{aligned}
	&R\left(\mathcal{A}_\bold{w}(S)\right)-R_S\left(\mathcal{A}_\bold{w}(S)\right) \\
	&\leq50\sqrt{2}\left(1+\frac{L}{\rho}\right)G\gamma\lceil\log(n)\rceil\log\left(\frac{3e}{\zeta}\right)+\frac{2M_\ell\log\left(\frac{3}{\zeta}\right)}{3n} \\
	&\quad+\sqrt{\frac{2\left(\frac{1}{4}\left(1+\frac{L}{\rho}\right)^2G^2\gamma^2+16n\left(1+\frac{L}{\rho}\right)^2G^2\gamma^2\log\left(\frac{3}{\zeta}\right)+2M_\ell R\left(\mathcal{A}_\bold{w}(S)\right)\right)\log\left(\frac{3}{\zeta}\right)}{n}} \\
	&\leq50\sqrt{2}\left(1+\frac{L}{\rho}\right)G\gamma\lceil\log(n)\rceil\log\left(\frac{3e}{\zeta}\right)+\frac{2M_\ell\log\left(\frac{3}{\zeta}\right)}{3n} \\
	&\quad+\sqrt{\frac{\left(\frac{1}{2}\left(1+\frac{L}{\rho}\right)^2G^2\gamma^2+32n\left(1+\frac{L}{\rho}\right)^2G^2\gamma^2\log\left(\frac{3}{\zeta}\right)\right)\log\left(\frac{3}{\zeta}\right)}{n}}+\iota R\left(\mathcal{A}_\bold{w}(S)\right)+\frac{4M_\ell\log\left(\frac{3}{\zeta}\right)}{\iota n},
	\end{aligned}
	\end{equation}
	where the last inequality holds because $\sqrt{a+b}\leq\sqrt{a}+\sqrt{b}$ and $\sqrt{ab}\leq\iota a+\frac{1}{\iota}b$ for all $a,b,\iota>0$.
	
	By rearranging, with probability at least $1-\zeta$, we have
	\begin{equation*}
	\begin{aligned}
	&R\left(\mathcal{A}_\bold{w}(S)\right)-\frac{1}{1-\iota}R_S\left(\mathcal{A}_\bold{w}(S)\right) \\
	&\leq\sqrt{\frac{\left(1+L/\rho\right)^2G^2\gamma^2\left(1+64n\log\left(3/\zeta\right)\right)}{2(1-\iota)^2n}\log\left(\frac{3}{\zeta}\right)}+\frac{50\sqrt{2}\left(1+L/\rho\right)G\gamma\log(n)}{1-\iota}\log\left(\frac{3e}{\zeta}\right)+\frac{(12+2\iota)M_\ell}{3\iota(1-\iota)n}\log\left(\frac{3}{\zeta}\right),
	\end{aligned}
	\end{equation*}
	which completes the proof of Part (b).

	\textbf{Part (c): The primal excess population risk}.
	
	In this section, we denote $\bold{w}^*=\arg\min_{\bold{w}\in\mathcal{W}}R(\bold{w})$. Then, we have
	\begin{equation*}
	\begin{aligned}
	R\left(\mathcal{A}_\bold{w}(S)\right)-R\left(\bold{w}^*\right)&=\underbrace{R\left(\mathcal{A}_\bold{w}(S)\right)-R_S\left(\mathcal{A}_\bold{w}(S)\right)}_{A}+\underbrace{R_S\left(\mathcal{A}_\bold{w}(S)\right)-L_S\left(\bold{w}^*,\mathcal{A}_\bold{v}(S)\right)}_{B} \\
	&\quad+\underbrace{L_S\left(\bold{w}^*,\mathcal{A}_\bold{v}(S)\right)-L\left(\bold{w}^*,\mathcal{A}_\bold{v}(S)\right)}_{C}+\underbrace{L\left(\bold{w}^*,\mathcal{A}_\bold{v}(S)\right)-R\left(\bold{w}^*\right)}_{D}.
	\end{aligned}
	\end{equation*}
	
	For part $A$, according to (\ref{eq22}), with probability at least $1-\zeta$, we have
	\begin{equation*}
	\begin{aligned}
	R\left(\mathcal{A}_\bold{w}(S)\right)-R_S\left(\mathcal{A}_\bold{w}(S)\right)&\leq50\sqrt{2}\left(1+\frac{L}{\rho}\right)G\gamma\lceil\log(n)\rceil\log\left(\frac{3e}{\zeta}\right)+\frac{2M_\ell\log\left(\frac{3}{\zeta}\right)}{3n} \\
	&\quad+\sqrt{\frac{\left(\frac{1}{2}\left(1+\frac{L}{\rho}\right)^2G^2\gamma^2+32n\left(1+\frac{L}{\rho}\right)^2G^2\gamma^2\log\left(\frac{3}{\zeta}\right)+4M_\ell R\left(\mathcal{A}_\bold{w}(S)\right)\right)\log\left(\frac{3}{\zeta}\right)}{n}}.
	\end{aligned}
	\end{equation*}
	
	For part $B$, we have
	\begin{equation*}
	R_S\left(\mathcal{A}_\bold{w}(S)\right)-L_S\left(\bold{w}^*,\mathcal{A}_\bold{v}(S)\right)\leq R_S\left(\mathcal{A}_\bold{w}(S)\right)-L_S\left(\widetilde{\bold{w}}_S^*,\mathcal{A}_\bold{v}(S)\right)=\triangle_S^s\left(\mathcal{A}_\bold{w}(S),\mathcal{A}_\bold{v}(S)\right),
	\end{equation*}
	where $\widetilde{\bold{w}}_S^*=\arg\min_{\bold{w}\in\mathcal{W}}L_S\left(w,\mathcal{A}_\bold{v}(S)\right)$.
	
	For part $C$, according to (\ref{eq31}), with probability at least $1-\zeta$, we have
	\begin{equation*}
	\begin{aligned}
	L_S(\bold{w}^*,\mathcal{A}_\bold{v}(S))-L(\bold{w}^*,\mathcal{A}_\bold{v}(S))&\leq\frac{2M_\ell}{3n}\log\left(\frac{3}{\zeta}\right)+50\sqrt{2}eG\gamma\lceil\log(n)\rceil\log\left(\frac{3e}{\zeta}\right) \\
	&\quad+\sqrt{\frac{\left(4M_\ell L(\bold{w}^*,\mathcal{A}_\bold{v}(S))+\frac{G^2\gamma^2}{2}+32G^2n\gamma^2\log\left(\frac{3}{\zeta}\right)\right)\log\left(\frac{3}{\zeta}\right)}{n}}.
	\end{aligned}
	\end{equation*}
	
	For part $D$, it is easy to follow that $L\left(\bold{w}^*,\mathcal{A}_\bold{v}(S)\right)-R\left(\bold{w}^*\right)\leq0$.
	
	Combining parts $A,B,C,D$ together, with probability at least $1-\zeta$, we have
	\begin{equation*}
	\begin{aligned}
	R\left(\mathcal{A}_\bold{w}(S)\right)-R\left(\bold{w}^*\right)&\leq50\sqrt{2}\left(1+e+\frac{L}{\rho}\right)G\gamma\lceil\log(n)\rceil\log\left(\frac{6e}{\zeta}\right)+\frac{4M_\ell}{3n}\log\left(\frac{6}{\zeta}\right)+\triangle_S^s\left(\mathcal{A}_\bold{w}(S),\mathcal{A}_\bold{v}(S)\right) \\
	&\quad+\sqrt{\frac{\left(\frac{1}{2}\left(1+\frac{L}{\rho}\right)^2G^2\gamma^2+32n\left(1+\frac{L}{\rho}\right)^2G^2\gamma^2\log\left(\frac{6}{\zeta}\right)+4M_\ell R\left(\mathcal{A}_\bold{w}(S)\right)\right)\log\left(\frac{6}{\zeta}\right)}{n}} \\
	&\quad+\sqrt{\frac{\left(4M_\ell R(\bold{w}^*)+\frac{G^2\gamma^2}{2}+32G^2n\gamma^2\log\left(\frac{6}{\zeta}\right)\right)\log\left(\frac{6}{\zeta}\right)}{n}} \\
	&\leq50\sqrt{2}\left(1+e+\frac{L}{\rho}\right)G\gamma\lceil\log(n)\rceil\log\left(\frac{6e}{\zeta}\right)+\frac{4M_\ell}{3n}\log\left(\frac{6}{\zeta}\right)+\triangle_S^s\left(\mathcal{A}_\bold{w}(S),\mathcal{A}_\bold{v}(S)\right) \\
	&\quad+\sqrt{\frac{\left(\frac{1}{2}\left(1+\frac{L}{\rho}\right)^2G^2\gamma^2+32n\left(1+\frac{L}{\rho}\right)^2G^2\gamma^2\log\left(\frac{6}{\zeta}\right)\right)\log\left(\frac{6}{\zeta}\right)}{n}}+\iota R\left(\mathcal{A}_\bold{w}(S)\right) \\
	&\quad+\sqrt{\frac{\left(\frac{G^2\gamma^2}{2}+32G^2n\gamma^2\log\left(\frac{6}{\zeta}\right)\right)\log\left(\frac{6}{\zeta}\right)}{n}}+\iota R(\bold{w}^*)+\frac{8M_\ell}{\iota n}\log\left(\frac{6}{\zeta}\right),
	\end{aligned}
	\end{equation*}
	where the first inequality holds because $L(\bold{w}^*,\mathcal{A}_\bold{v}(S))\leq R(\bold{w}^*)$ and the last inequality holds because $\sqrt{a+b}\leq\sqrt{a}+\sqrt{b}$ and $\sqrt{ab}\leq\iota a+\frac{1}{\iota}b$ for all $a,b,\iota>0$.
	
	By rearranging, for all $\iota>0$, with probability at least $1-\zeta$, we have
	\begin{equation*}
	\begin{aligned}
	&R\left(\mathcal{A}_\bold{w}(S)\right)-\frac{1+\iota}{1-\iota}\inf_{\bold{w}\in\mathcal{W}}R\left(\bold{w}\right) \\
	&\leq\sqrt{\frac{\left(1+L/\rho\right)^2G^2\gamma^2\left(1+64n\log\left(6/\zeta\right)\right)}{2(1-\iota)^2n}\log\left(\frac{6}{\zeta}\right)}+\sqrt{\frac{\left(G^2\gamma^2+64G^2n\gamma^2\log\left(6/\zeta\right)\right)}{2(1-\iota)^2n}\log\left(\frac{6}{\zeta}\right)} \\
	&\quad+\frac{50\sqrt{2}\left(1+e+L/\rho\right)G\gamma\log(n)}{1-\iota}\log\left(\frac{6e}{\zeta}\right)+\frac{(24+4\iota)M_\ell}{3\iota(1-\iota)n}\log\left(\frac{6}{\zeta}\right)+\frac{1}{1-\iota}\triangle_S^s\left(\mathcal{A}_\bold{w}(S),\mathcal{A}_\bold{v}(S)\right).
	\end{aligned}
	\end{equation*}
	
	The proof of part (c) completes.
	
	\textbf{Part (d): The strong PD population risk}.
	
	Denoting $\bold{w}_S^*=\arg\min_{\bold{w}\in\mathcal{W}}L(\bold{w},\mathcal{A}_\bold{v}(S))$, $\widetilde{\bold{w}}_S^*=\arg\min_{\bold{w}\in\mathcal{W}}L_S(\bold{w},\mathcal{A}_\bold{v}(S))$, $\bold{v}_S^*=\arg\max_{\bold{v}\in\mathcal{V}}L(\mathcal{A}_\bold{w}(S),\bold{v})$, and $\widetilde{\bold{v}}_S^*=\arg\max_{\bold{v}\in\mathcal{V}}L_S(\mathcal{A}_\bold{w}(S),\bold{v})$. We have
	\begin{equation*}
	\begin{aligned}
	\triangle^s\left(\mathcal{A}_\bold{w}(S),\mathcal{A}_\bold{v}(S)\right)&=\sup_{\bold{v}\in\mathcal{V}}L\left(\mathcal{A}_\bold{w}(S),\bold{v}\right)-\inf_{\bold{w}\in\mathcal{W}}L\left(\bold{w},\mathcal{A}_\bold{v}(S)\right) \\
	&=\underbrace{L\left(\mathcal{A}_\bold{w}(S),\bold{v}_S^*\right)-L_S\left(\mathcal{A}_\bold{w}(S),\widetilde{\bold{v}}_S^*\right)+\mathbb{E}_{S'}\left[L_S\left(\mathcal{A}_\bold{w}(S'),\bold{v}_{S'}^*\right)\right]-\mathbb{E}_S\left[L\left(\mathcal{A}_\bold{w}(S),\bold{v}_S^*\right)\right]}_{A} \\
	&\quad\underbrace{+\mathbb{E}_S\left[L\left(\bold{w}_S^*,\mathcal{A}_\bold{v}(S)\right)\right]-\mathbb{E}_{S'}\left[L_S\left(\bold{w}_{S'}^*,\mathcal{A}_\bold{v}(S')\right)\right]+L_S\left(\widetilde{\bold{w}}_{S}^*,\mathcal{A}_\bold{v}(S)\right)-L\left(\bold{w}_{S}^*,\mathcal{A}_\bold{v}(S)\right)}_{B} \\
	&\quad\underbrace{-\mathbb{E}_{S'}\left[L_S\left(\mathcal{A}_\bold{w}(S'),\bold{v}_{S'}^*\right)\right]+\mathbb{E}_S\left[L\left(\mathcal{A}_\bold{w}(S),\bold{v}_S^*\right)\right]-\mathbb{E}_S\left[L\left(\bold{w}_S^*,\mathcal{A}_\bold{v}(S)\right)\right]+\mathbb{E}_{S'}\left[L_S\left(\bold{w}_{S'}^*,\mathcal{A}_\bold{v}(S')\right)\right]}_{C} \\
	&\quad\underbrace{+L_S\left(\mathcal{A}_\bold{w}(S),\widetilde{\bold{v}}_S^*\right)-L_S\left(\widetilde{\bold{w}}_{S}^*,\mathcal{A}_\bold{v}(S)\right)}_{D}.
	\end{aligned}
	\end{equation*}
	
	For part $A$, via inequality (\ref{eq18}), for $\tau\geq2$, we have
	\begin{equation}\label{eq23}
	\begin{aligned}
	&\left\Vert R(\mathcal{A}_\bold{w}(S))-R_S(\mathcal{A}_\bold{w}(S))-\mathbb{E}_{S'}\left[R(\mathcal{A}_\bold{w}(S'))\right]+\mathbb{E}_{S'}\left[R_S(\mathcal{A}_\bold{w}(S'))\right]\right\Vert_\tau \\
	&=\left\Vert L\left(\mathcal{A}_\bold{w}(S),\bold{v}_S^*\right)-L_S\left(\mathcal{A}_\bold{w}(S),\widetilde{\bold{v}}_S^*\right)+\mathbb{E}_{S'}\left[L_S\left(\mathcal{A}_\bold{w}(S'),\bold{v}_{S'}^*\right)\right]-\mathbb{E}_S\left[L\left(\mathcal{A}_\bold{w}(S),\bold{v}_S^*\right)\right]\right\Vert_\tau \\
	&\leq50\sqrt{2}\left(1+\frac{L}{\rho}\right)G\tau\gamma\lceil\log(n)\rceil.
	\end{aligned}
	\end{equation}
	
	For part $B$, we first analyze
	\begin{equation}\label{eq24}
	\begin{aligned}
	&nL_S\left(\widetilde{\bold{w}}_{S}^*,\mathcal{A}_\bold{v}(S)\right)-nL\left(\bold{w}_{S}^*,\mathcal{A}_\bold{v}(S)\right) \\
	&=nL_S\left(\widetilde{\bold{w}}_{S}^*,\mathcal{A}_\bold{v}(S)\right)-\sum_{i=1}^{n}\mathbb{E}_z\left[\ell\left(\bold{w}_{S}^*,\mathcal{A}_\bold{v}(S);z\right)-\mathbb{E}_{z_i'}\left[\ell\left(\bold{w}_{S^{(i)}}^*,\mathcal{A}_\bold{v}(S^{(i)});z\right)\right]\right] \\
	&\quad+\sum_{i=1}^{n}\mathbb{E}_{z_i'}\left[\ell\left(\bold{w}_{S^{(i)}}^*,\mathcal{A}_\bold{v}(S^{(i)});z_i\right)-\mathbb{E}_z\left[\ell\left(\bold{w}_{S^{(i)}}^*,\mathcal{A}_\bold{v}(S^{(i)});z\right)\right]\right]-\sum_{i=1}^{n}\mathbb{E}_{z_i'}\left[\ell\left(\bold{w}_{S^{(i)}}^*,\mathcal{A}_\bold{v}(S^{(i)});z_i\right)\right],
	\end{aligned}
	\end{equation}
	in which
	\begin{equation}\label{eq25}
	\begin{aligned}
	\sum_{i=1}^{n}\mathbb{E}_{z_i'}\left[\ell\left(\bold{w}_{S^{(i)}}^*,\mathcal{A}_\bold{v}(S^{(i)});z_i\right)\right]&=\sum_{i=1}^{n}\mathbb{E}_{z_i'}\left[\ell\left(\bold{w}_{S^{(i)}}^*,\mathcal{A}_\bold{v}(S^{(i)});z_i\right)-\ell\left(\bold{w}_{S}^*,\mathcal{A}_\bold{v}(S);z_i\right)+\ell\left(\bold{w}_{S}^*,\mathcal{A}_\bold{v}(S);z_i\right)\right] \\
	&\geq nL_S\left(\widetilde{\bold{w}}_{S}^*,\mathcal{A}_\bold{v}(S)\right)+\sum_{i=1}^{n}\mathbb{E}_{z_i'}\left[\ell\left(\bold{w}_{S^{(i)}}^*,\mathcal{A}_\bold{v}(S^{(i)});z_i\right)-\ell\left(\bold{w}_{S}^*,\mathcal{A}_\bold{v}(S);z_i\right)\right],
	\end{aligned}
	\end{equation}
	where the last inequality holds because $\widetilde{\bold{w}}_S^*=\arg\min_{\bold{w}\in\mathcal{W}}L_S(\bold{w},\mathcal{A}_\bold{v}(S))$.
	
	Defining $p_i(S)=\mathbb{E}_{z_i'}\left[\ell\left(\bold{w}_{S^{(i)}}^*,\mathcal{A}_\bold{v}(S^{(i)});z_i\right)-\mathbb{E}_z\left[\ell\left(\bold{w}_{S^{(i)}}^*,\mathcal{A}_\bold{v}(S^{(i)});z\right)\right]\right]$ and plugging (\ref{eq25}) back into (\ref{eq24}), we have
	\begin{equation*}
	\begin{aligned}
	nL_S\left(\widetilde{\bold{w}}_{S}^*,\mathcal{A}_\bold{v}(S)\right)-nL\left(\bold{w}_{S}^*,\mathcal{A}_\bold{v}(S)\right)&\leq-\sum_{i=1}^{n}\mathbb{E}_z\left[\ell\left(\bold{w}_{S}^*,\mathcal{A}_\bold{v}(S);z\right)-\mathbb{E}_{z_i'}\left[\ell\left(\bold{w}_{S^{(i)}}^*,\mathcal{A}_\bold{v}(S^{(i)});z\right)\right]\right]+\sum_{i=1}^{n}p_i(S) \\
	&\quad-\sum_{i=1}^{n}\mathbb{E}_{z_i'}\left[\ell\left(\bold{w}_{S^{(i)}}^*,\mathcal{A}_\bold{v}(S^{(i)});z_i\right)-\ell\left(\bold{w}_{S}^*,\mathcal{A}_\bold{v}(S);z_i\right)\right].
	\end{aligned}
	\end{equation*}
	
	Similar to (\ref{eq15}), via Lemma \ref{lem6}, we have
	\begin{equation*}
	\begin{aligned}
	nL_S\left(\widetilde{\bold{w}}_{S}^*,\mathcal{A}_\bold{v}(S)\right)-nL\left(\bold{w}_{S}^*,\mathcal{A}_\bold{v}(S)\right)&\leq\sum_{i=1}^{n}p_i(S)+2n\left(1+\frac{L}{\rho}\right)G\gamma.
	\end{aligned}
	\end{equation*}
	
	Again, we define $q_i(S)=p_i(S)-\mathbb{E}_{S\setminus\{z_i\}}\left[p_i(S)\right]$ and have $\mathbb{E}_{S\setminus\{z_i\}}\left[q_i(S)\right]=\mathbb{E}_{z_i}\left[q_i(S)\right]=0$.
	Similarly, for $j=1,\cdots,n$ ($j\neq i$) and $z_j'\in\mathcal{Z}$, we have
	\begin{equation*}
	q_i(S)-q_i(z_1,\cdots,z_{j-1},z_j',z_{j+1},\cdots,z_n)\leq4\left(1+\frac{L}{\rho}\right)G\gamma,
	\end{equation*}
	which implies
	\begin{equation*}
	\left\Vert\sum_{i=1}^{n}q_i(S)\right\Vert_\tau\leq48\sqrt{2}\left(1+\frac{L}{\rho}\right)G\tau n\gamma\lceil\log(n)\rceil,
	\end{equation*}
	for $\tau\geq2$, via Lemma \ref{lem1}.
	
	Then for part $B$, like in (\ref{eq18}), for $\tau\geq2$, we have
	\begin{equation}\label{eq26}
	\begin{aligned}
	&\left\Vert\mathbb{E}_S\left[L\left(\bold{w}_S^*,\mathcal{A}_\bold{v}(S)\right)\right]-\mathbb{E}_{S'}\left[L_S\left(\bold{w}_{S'}^*,\mathcal{A}_\bold{v}(S')\right)\right]+L_S\left(\widetilde{\bold{w}}_{S}^*,\mathcal{A}_\bold{v}(S)\right)-L\left(\bold{w}_{S}^*,\mathcal{A}_\bold{v}(S)\right)\right\Vert_\tau \\
	&\leq\left\Vert L_S\left(\widetilde{\bold{w}}_{S}^*,\mathcal{A}_\bold{v}(S)\right)-L\left(\bold{w}_{S}^*,\mathcal{A}_\bold{v}(S)\right)-\frac{1}{n}\sum_{i=1}^{n}p_i(S)\right\Vert_\tau+\left\Vert\frac{1}{n}\sum_{i=1}^{n}q_i(S)\right\Vert_\tau \\
	&\leq50\sqrt{2}\left(1+\frac{L}{\rho}\right)G\tau\gamma\lceil\log(n)\rceil.
	\end{aligned}
	\end{equation}
	
	For part $C$, we have
	\begin{equation}\label{eq27}
	\begin{aligned}
	&-\mathbb{E}_{S'}\left[L_S\left(\mathcal{A}_\bold{w}(S'),\bold{v}_{S'}^*\right)\right]+\mathbb{E}_S\left[L\left(\mathcal{A}_\bold{w}(S),\bold{v}_S^*\right)\right]-\mathbb{E}_S\left[L\left(\bold{w}_S^*,\mathcal{A}_\bold{v}(S)\right)\right]+\mathbb{E}_{S'}\left[L_S\left(\bold{w}_{S'}^*,\mathcal{A}_\bold{v}(S')\right)\right] \\
	&=\frac{1}{n}\sum_{i=1}^{n}\left(-\mathbb{E}_{S'}\left[\ell\left(\mathcal{A}_\bold{w}(S'),\bold{v}_{S'}^*;z_i\right)\right]+\mathbb{E}_{S'}\left[\ell\left(\bold{w}_{S'}^*,\mathcal{A}_\bold{v}(S');z_i\right)\right]\right)+\mathbb{E}_S\left[L\left(\mathcal{A}_\bold{w}(S),\bold{v}_S^*\right)\right]-\mathbb{E}_S\left[L\left(\bold{w}_S^*,\mathcal{A}_\bold{v}(S)\right)\right].
	\end{aligned}
	\end{equation}

	Defining $g_i=u+v_i$, where $u=\mathbb{E}_S\left[L\left(\mathcal{A}_\bold{w}(S),\bold{v}_S^*\right)\right]-\mathbb{E}_S\left[L\left(\bold{w}_S^*,\mathcal{A}_\bold{v}(S)\right)\right]$, and $v_i=\mathbb{E}_{S'}\left[\ell\left(\bold{w}_{S'}^*,\mathcal{A}_\bold{v}(S');z_i\right)\right]-\mathbb{E}_{S'}\left[\ell\left(\mathcal{A}_\bold{w}(S'),\bold{v}_{S'}^*;z_i\right)\right]$.
	Then $g_i$ is the element in (\ref{eq27}).
	Besides, we have $\mathbb{E}\left[g_i^2\right]\leq\mathbb{E}\left[\left(g_i-\mathbb{E}\left[g_i\right]\right)^2\right]$.
	Noting that $\mathbb{E}_S\left[\mathbb{E}_{S'}\left[L_S\left(\bold{w}_{S'}^*,\mathcal{A}_\bold{v}(S')\right)\right]-\mathbb{E}_{S'}\left[L_S\left(\mathcal{A}_\bold{w}(S'),\bold{v}_{S'}^*\right)\right]\right]=\mathbb{E}_S\left[L\left(\bold{w}_S^*,\mathcal{A}_\bold{v}(S)\right)\right]-\mathbb{E}_S\left[L\left(\mathcal{A}_\bold{w}(S),\bold{v}_S^*\right)\right]=-u$, so $\mathbb{E}\left[g_i^2\right]\leq\mathbb{E}\left[\left(g_i-\mathbb{E}\left[g_i\right]\right)^2\right]=\mathbb{E}\left[v_i^2\right]$, i,e., $\mathbb{E}\left[g_i^2\right]$ can be bounded by $\mathbb{E}\left[v_i^2\right]$.
	
	For $\mathbb{E}\left[v_i^2\right]$, we have
	\begin{equation}\label{eq28}
	\begin{aligned}
	\mathbb{E}\left[v_i^2\right]&=\mathbb{E}\left[\left(\mathbb{E}_{S'}\left[\ell\left(\bold{w}_{S'}^*,\mathcal{A}_\bold{v}(S');z_i\right)\right]-\mathbb{E}_{S'}\left[\ell\left(\mathcal{A}_\bold{w}(S'),\bold{v}_{S'}^*;z_i\right)\right]\right)^2\right] \\
	&\leq\mathbb{E}\left[\mathbb{E}_{S'}\left[G^2\left(\left\Vert\bold{w}_{S'}^*-\mathcal{A}_\bold{w}(S')\right\Vert_2+\left\Vert\mathcal{A}_\bold{v}(S')-\bold{v}_{S'}^*\right\Vert_2\right)^2\right]\right] \\
	&\leq2G^2\mathbb{E}\left[\mathbb{E}_{S'}\left[\left\Vert\bold{w}_{S'}^*-\mathcal{A}_\bold{w}(S')\right\Vert_2^2+\left\Vert\mathcal{A}_\bold{v}(S')-\bold{v}_{S'}^*\right\Vert_2^2\right]\right],
	\end{aligned}
	\end{equation}
	where the first inequality holds because of Jensen's inequality and the $G$-Lipschitz property of $\ell(\cdot,\cdot;\cdot)$, and the second inequality holds because $(a+b)^2\leq2(a^2+b^2)$.
	
	Noting that $L(\cdot,\cdot)$ is $\rho$-strongly-convex-strongly-concave, we have
	\begin{equation*}
	\begin{aligned}
	\frac{\rho}{2}\left\Vert\mathcal{A}_\bold{v}(S')-\bold{v}_{S'}^*\right\Vert_2^2&\leq L\left(\bold{w}_{S'}^*,\bold{v}_{S'}^*\right)-L\left(\bold{w}_{S'}^*,\mathcal{A}_\bold{v}(S')\right), \\
	\frac{\rho}{2}\left\Vert\mathcal{A}_\bold{w}(S')-\bold{w}_{S'}^*\right\Vert_2^2&\leq L\left(\mathcal{A}_\bold{w}(S'),\bold{v}_{S'}^*\right)-L\left(\bold{w}_{S'}^*,\bold{v}_{S'}^*\right).
	\end{aligned}
	\end{equation*}
	
	As a result
	\begin{equation*}
	\left\Vert\mathcal{A}_\bold{v}(S')-\bold{v}_{S'}^*\right\Vert_2^2+\left\Vert\mathcal{A}_\bold{w}(S')-\bold{w}_{S'}^*\right\Vert_2^2\leq\frac{2}{\rho}\left(L\left(\mathcal{A}_\bold{w}(S'),\bold{v}_{S'}^*\right)-L\left(\bold{w}_{S'}^*,\mathcal{A}_\bold{v}(S')\right)\right).
	\end{equation*}
	
	Plugging this result back into (\ref{eq28}), we have
	\begin{equation*}
	\begin{aligned}
	\mathbb{E}\left[\left(\mathbb{E}_{S'}\left[\ell\left(\bold{w}_{S'}^*,\mathcal{A}_\bold{v}(S');z_i\right)\right]-\mathbb{E}_{S'}\left[\ell\left(\mathcal{A}_\bold{w}(S'),\bold{v}_{S'}^*;z_i\right)\right]\right)^2\right]&\leq\frac{4G^2}{\rho}\mathbb{E}_{S'}\left[L\left(\mathcal{A}_\bold{w}(S'),\bold{v}_{S'}^*\right)-L\left(\bold{w}_{S'}^*,\mathcal{A}_\bold{v}(S')\right)\right] \\
	&=\frac{4G^2}{\rho}\mathbb{E}_{S}\left[L\left(\mathcal{A}_\bold{w}(S),\bold{v}_{S}^*\right)-L\left(\bold{w}_{S}^*,\mathcal{A}_\bold{v}(S)\right)\right].
	\end{aligned}
	\end{equation*}
	
	Then, via Lemma \ref{lem7}, for part $C$, for $\tau\geq2$, we have
	\begin{equation}\label{eq29}
	\begin{aligned}
	&\left\Vert\frac{1}{n}\sum_{i=1}^{n}\left(-\mathbb{E}_{S'}\left[\ell\left(\mathcal{A}_\bold{w}(S'),\bold{v}_{S'}^*;z_i\right)\right]+\mathbb{E}_{S'}\left[\ell\left(\bold{w}_{S'}^*,\mathcal{A}_\bold{v}(S');z_i\right)\right]\right)+\mathbb{E}_S\left[L\left(\mathcal{A}_\bold{w}(S),\bold{v}_S^*\right)\right]-\mathbb{E}_S\left[L\left(\bold{w}_S^*,\mathcal{A}_\bold{v}(S)\right)\right]\right\Vert_\tau \\
	&\leq6\sqrt{\frac{4G^2\tau}{\rho n}\mathbb{E}_{S}\left[L\left(\mathcal{A}_\bold{w}(S),\bold{v}_{S}^*\right)-L\left(\bold{w}_{S}^*,\mathcal{A}_\bold{v}(S)\right)\right]}+\frac{8M_\ell\tau}{n}.
	\end{aligned}
	\end{equation}
	
	For part $D$, $L_S\left(\mathcal{A}_\bold{w}(S),\widetilde{\bold{v}}_S^*\right)-L_S\left(\widetilde{\bold{w}}_{S}^*,\mathcal{A}_\bold{v}(S)\right)$ is exactly the strong PD empirical risk $\triangle_S^s\left(\mathcal{A}_\bold{w}(S),\mathcal{A}_\bold{v}(S)\right)$.
	
	Plugging (\ref{eq23}), (\ref{eq26}), and (\ref{eq29}) back into $\triangle^s\left(\mathcal{A}_\bold{w}(S),\mathcal{A}_\bold{v}(S)\right)$, then for all $\tau\geq2$, we have
	\begin{equation}\label{eq30}
	\begin{aligned}
	&\left\Vert\triangle^s\left(\mathcal{A}_\bold{w}(S),\mathcal{A}_\bold{v}(S)\right)-\triangle_S^s\left(\mathcal{A}_\bold{w}(S),\mathcal{A}_\bold{v}(S)\right)\right\Vert_\tau \\
	&\leq100\sqrt{2}\left(1+\frac{L}{\rho}\right)G\tau\gamma\lceil\log(n)\rceil+12\sqrt{\frac{G^2\tau}{\rho n}\mathbb{E}_{S}\left[L\left(\mathcal{A}_\bold{w}(S),\bold{v}_{S}^*\right)-L\left(\bold{w}_{S}^*,\mathcal{A}_\bold{v}(S)\right)\right]}+\frac{8M_\ell\tau}{n} \\
	&\leq100\sqrt{2}\left(1+\frac{L}{\rho}\right)G\tau\gamma\lceil\log(n)\rceil+\iota\mathbb{E}_{S}\left[L\left(\mathcal{A}_\bold{w}(S),\bold{v}_{S}^*\right)-L\left(\bold{w}_{S}^*,\mathcal{A}_\bold{v}(S)\right)\right]+\frac{144G^2\tau}{\rho\iota n}+\frac{8M_\ell\tau}{n},
	\end{aligned}
	\end{equation}
	where the last inequality holds because $\sqrt{ab}\leq\iota a+\frac{1}{\iota}b$ for $a,b,\iota>0$.
	
	Noting that
	\begin{equation*}
	\begin{aligned}
	\mathbb{E}_S\left[\triangle^s\left(\mathcal{A}_\bold{w}(S),\mathcal{A}_\bold{v}(S)\right)-\triangle_S^s\left(\mathcal{A}_\bold{w}(S),\mathcal{A}_\bold{v}(S)\right)\right]&\leq\left\Vert\triangle^s\left(\mathcal{A}_\bold{w}(S),\mathcal{A}_\bold{v}(S)\right)-\triangle_S^s\left(\mathcal{A}_\bold{w}(S),\mathcal{A}_\bold{v}(S)\right)\right\Vert_2 \\
	&\leq200\sqrt{2}\left(1+\frac{L}{\rho}\right)G\gamma\lceil\log(n)\rceil+\frac{288G^2}{\rho\iota n}+\frac{16M_\ell}{n} \\
	&\quad+\iota\mathbb{E}_{S}\left[\triangle^s\left(\mathcal{A}_\bold{w}(S),\mathcal{A}_\bold{v}(S)\right)\right],
	\end{aligned}
	\end{equation*}
	where the last inequality holds because of the Cauchy-Schwartz inequality, which implies
	\begin{equation*}
	\mathbb{E}_S\left[\triangle^s\left(\mathcal{A}_\bold{w}(S),\mathcal{A}_\bold{v}(S)\right)\right]-\frac{1}{1-\iota}\mathbb{E}_S\left[\triangle_S^s\left(\mathcal{A}_\bold{w}(S),\mathcal{A}_\bold{v}(S)\right)\right]\leq\frac{1}{1-\iota}\left(200\sqrt{2}\left(1+\frac{L}{\rho}\right)G\gamma\lceil\log(n)\rceil+\frac{288G^2}{\rho\iota n}+\frac{16M_\ell}{n}\right).
	\end{equation*}
	
	Plugging the result back into (\ref{eq30}), then for $\tau\geq2$, we have
	\begin{equation*}
	\begin{aligned}
	&\left\Vert\triangle^s\left(\mathcal{A}_\bold{w}(S),\mathcal{A}_\bold{v}(S)\right)-\triangle_S^s\left(\mathcal{A}_\bold{w}(S),\mathcal{A}_\bold{v}(S)\right)\right\Vert_\tau \\
	&\leq\frac{\iota\tau}{1-\iota}\left(\mathbb{E}_S\left[\triangle_S^s\left(\mathcal{A}_\bold{w}(S),\mathcal{A}_\bold{v}(S)\right)\right]+200\sqrt{2}\left(1+\frac{L}{\rho}\right)G\gamma\lceil\log(n)\rceil+\frac{288G^2}{\rho\iota n}+\frac{16M_\ell}{n}\right) \\
	&\quad+100\sqrt{2}\left(1+\frac{L}{\rho}\right)G\tau\gamma\lceil\log(n)\rceil+\frac{144G^2\tau}{\rho\iota n}+\frac{8M_\ell\tau}{n}.
	\end{aligned}
	\end{equation*}
	
	Via Lemma \ref{lem3}, with probability at least $1-\zeta$, we have
	\begin{equation}\label{eq37}
	\begin{aligned}
	&\triangle^s\left(\mathcal{A}_\bold{w}(S),\mathcal{A}_\bold{v}(S)\right) \\
	&\leq\frac{100\sqrt{2}e(1+\iota)(1+L/\rho)G\gamma\log(n)}{1-\iota}\log\left(\frac{e}{\zeta}\right)+\frac{144e(1+\iota)G^2}{\rho\iota(1-\iota)n}\log\left(\frac{e}{\zeta}\right)+\frac{8e(1+\iota)M_\ell}{n(1-\iota)}\log\left(\frac{e}{\zeta}\right) \\
	&\quad+\frac{e\iota}{1-\iota}\log\left(\frac{e}{\zeta}\right)\mathbb{E}_S\left[\triangle_S^s\left(\mathcal{A}_\bold{w}(S),\mathcal{A}_\bold{v}(S)\right)\right]+\triangle_S^s\left(\mathcal{A}_\bold{w}(S),\mathcal{A}_\bold{v}(S)\right),
	\end{aligned}
	\end{equation}
	which completes the proof of part (d) and Theorem \ref{the3}.
	
\end{proof}

\subsection{A.4. Proof of Lemma \ref{lem9}}
\begin{lemma}
	If Assumptions \ref{a1} and \ref{a2} hold.
	Taking $\sigma$ given in Theorem \ref{the1}, and $\eta_t=\frac{1}{\rho t}$, then with probability at least $1-\zeta$ for $\zeta\in(\exp(-\frac{p}{8}),1)$, the strong primal dual empirical risk of the output of DP-GDA: $\mathcal{A}(S)=(\bar{\bold{w}}_T,\bar{\bold{v}}_T)$ satisfies
	\begin{equation*}
	\begin{aligned}
	&\triangle_S^s\left(\bar{\bold{w}}_T,\bar{\bold{v}}_T\right) \\
	&\leq \frac{G^2\log(eT)}{\rho T}+\frac{cG\left(g_\bold{w}+g_\bold{v}\right)\sqrt{Tp\log(1/\delta)}}{n\epsilon}p_\zeta+cG^2\log(eT)\left(\frac{p\log(1/\delta)}{\rho n^2\epsilon^2}p_\zeta^2+\frac{2\sqrt{p\log(1/\delta)}}{\rho\sqrt{T}n\epsilon}p_\zeta\right),
	\end{aligned}
	\end{equation*}
	for some constant $c$, where $p_\zeta,g_\bold{w},g_\bold{v}$ are defined as in Theorem \ref{the2}.
\end{lemma}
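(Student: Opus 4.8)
The plan is to reuse the key intermediate estimate that is already established inside the proof of Theorem~\ref{the2}. Running GDA on $L_S$ with $\eta_t=\frac{1}{\rho t}$, expanding $\|\bold{w}_{t+1}-\bold{w}\|_2^2$, invoking $\rho$-strong convexity of $L_S(\cdot,\bold{v}_t)$ (and $\rho$-strong concavity of $L_S(\bold{w}_t,\cdot)$ for the dual side), multiplying through by $t+\varphi$, telescoping, and using $\sum_{t=1}^T \frac1t\le\log(eT)$ yields the primal bound (\ref{eq5}) and the dual bound (\ref{eq6}); adding them and setting $\varphi=0$ gives exactly (\ref{eq33}),
\begin{equation*}
\triangle_S^s(\bar{\bold{w}}_T,\bar{\bold{v}}_T)\leq\frac{G^2\log(eT)}{\rho T}+\frac{\log(eT)}{2\rho T}\big(\|b_\bold{w}\|_2^2+\|b_\bold{v}\|_2^2\big)+\frac{G\log(eT)}{\rho T}\big(\|b_\bold{w}\|_2+\|b_\bold{v}\|_2\big)+g_\bold{w}\|b_\bold{w}\|_2+g_\bold{v}\|b_\bold{v}\|_2,
\end{equation*}
where the stochastic cross term $\langle\bar{\bold{w}}^*-\bar{\bold{w}}_T,b_\bold{w}\rangle$ is controlled by Cauchy--Schwarz as $g_\bold{w}\|b_\bold{w}\|_2$ (and analogously for $\bold{v}$), with $g_\bold{w},g_\bold{v}$ as defined in Theorem~\ref{the2}. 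So the first step is simply to extract this inequality.

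Next I would make the noise norms explicit. Since Algorithm~\ref{alg1} draws $b_\bold{w},b_\bold{v}\sim\mathcal{N}(0,\sigma^2I_p)$ afresh at each of the $T$ iterations, I apply Lemma~\ref{lem1} together with a union bound over all $2T$ of these Gaussian vectors: for $\zeta\in(\exp(-\frac{p}{8}),1)$, with probability at least $1-\zeta$, every such $b$ obeys $\|b\|_2\le\sigma\sqrt{p}\,p_\zeta$ with $p_\zeta=1+\big(\tfrac{8\log(2T/\zeta)}{p}\big)^{1/4}$ (the factor $2T$ inside the logarithm is precisely what accounts for this simultaneous control). Substituting $\|b_\bold{w}\|_2,\|b_\bold{v}\|_2\le\sigma\sqrt{p}\,p_\zeta$ into the displayed bound and collapsing the two identical contributions gives
\begin{equation*}
\triangle_S^s(\bar{\bold{w}}_T,\bar{\bold{v}}_T)\leq\frac{G^2\log(eT)}{\rho T}+\frac{\sigma^2 p\log(eT)}{\rho T}p_\zeta^2+\frac{2G\sigma\sqrt{p}\log(eT)}{\rho T}p_\zeta+(g_\bold{w}+g_\bold{v})\sigma\sqrt{p}\,p_\zeta.
\end{equation*}

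Finally I substitute $\sigma$ from Theorem~\ref{the1}, namely $\sigma=c\,G\sqrt{T\log(1/\delta)}/(n\epsilon)$. Then $\sigma\sqrt{p}=cG\sqrt{Tp\log(1/\delta)}/(n\epsilon)$, so the last term becomes $cG(g_\bold{w}+g_\bold{v})\sqrt{Tp\log(1/\delta)}\,p_\zeta/(n\epsilon)$; the $\sigma^2$ term becomes $c^2G^2\log(eT)\,p\log(1/\delta)\,p_\zeta^2/(\rho n^2\epsilon^2)$; and the remaining noise term becomes $2cG^2\log(eT)\sqrt{p\log(1/\delta)}\,p_\zeta/(\rho\sqrt{T}n\epsilon)$. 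Absorbing the absolute constants into a single $c$ reproduces the claimed bound.

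As for difficulty, essentially all the real content is inherited from the proof of Theorem~\ref{the2} --- the strongly-convex GDA telescoping argument, and in particular the choice to retain the stochastic cross term as $g_\bold{w}\|b_\bold{w}\|_2$ rather than bounding $g_\bold{w}$ crudely by $M_\mathcal{W}$ (cf.\ Remark~\ref{rem4}), which is what keeps the bound sharp. The substitution and union-bound steps here are routine; the only point requiring care is ensuring the per-iteration tail bounds from Lemma~\ref{lem1} hold \emph{simultaneously} over all $T$ steps (hence the $2T/\zeta$ inside $p_\zeta$) and tracking which absolute constants are folded into $c$, rather than proving any new estimate.
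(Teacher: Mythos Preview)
Your proposal is correct and follows essentially the same approach as the paper: extract the intermediate bound (\ref{eq33}) from the proof of Theorem~\ref{the2}, control the noise norms via Lemma~\ref{lem1} with a union bound over the $2T$ Gaussian draws to obtain the $p_\zeta$ form, and then substitute the expression for $\sigma$ from Theorem~\ref{the1}. The paper's Appendix~A.4 does exactly this, citing (\ref{eq33})--(\ref{eq35}) and then plugging in $\sigma$.
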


\begin{proof}
	In Appendix A.2, we have analyzed the strong PD empirical risk in (\ref{eq33}).
	So, via the connection between the argument stability and the strong PD empirical risk (see (\ref{eq34})), along with plugging the random noise $\bold{w}$ and $\bold{v}$ into it (see (\ref{eq35})), we have
	\begin{equation*}
	\triangle_S^s\left(\bar{\bold{w}}_{T},\bar{\bold{v}}_{T}\right)\leq\frac{G^2\log(eT)}{\rho T}+\frac{\sigma^2p\log(eT)}{\rho T}p_\zeta^2+\frac{2G\sigma\sqrt{p}\log(eT)}{\rho T}p_\zeta+\left(g_\bold{w}+g_\bold{v}\right)\sigma\sqrt{p}p_\zeta,
	\end{equation*}
	where $p_\zeta=1+\big(\frac{8\log(2T/\zeta)}{p}\big)^{1/4}$, $g_\bold{w}=\left\Vert\bar{\bold{w}}^*-\bar{\bold{w}}_{T}\right\Vert_2$, and $g_\bold{v}=\left\Vert\bar{\bold{v}}^*-\bar{\bold{v}}_{T}\right\Vert_2$ for $\bar{\bold{w}}^*=\arg\min_{\bold{w}\in\mathcal{W}}L_S(\bold{w},\bar{\bold{v}}_T)$, and $\bar{\bold{v}}^*=\arg\max_{\bold{v}\in\mathcal{V}}L_S(\bar{\bold{w}}_T,\bold{v})$.
	
	Taking $\sigma$ given in Theorem \ref{the1}, then the proof completes.
\end{proof}

\subsection{A.5. Proof of Corollary \ref{cor1}}
\begin{corollary}
	With argument stability parameter $\gamma$,
	
	(a) Under the condition given in Theorem \ref{the3} part (d), the strong PD generalization error satisfies
	\begin{equation*}
	\begin{aligned}
	&\triangle^s\left(\mathcal{A}_\bold{w}(S),\mathcal{A}_\bold{v}(S)\right)-\triangle_S^s\left(\mathcal{A}_\bold{w}(S),\mathcal{A}_\bold{v}(S)\right) \\
	&\leq\frac{100\sqrt{2}e(1+\iota)(1+L/\rho)G\gamma\log(n)}{1-\iota}\log\left(\frac{e}{\zeta}\right)+\frac{144e(1+\iota)G^2}{\rho\iota(1-\iota)n}\log\left(\frac{e}{\zeta}\right)+\frac{8e(1+\iota)M_\ell}{n(1-\iota)}\log\left(\frac{e}{\zeta}\right) \\
	&\quad+\frac{e\iota}{1-\iota}\log\left(\frac{e}{\zeta}\right)\mathbb{E}_S\left[\triangle_S^s\left(\mathcal{A}_\bold{w}(S),\mathcal{A}_\bold{v}(S)\right)\right].
	\end{aligned}
	\end{equation*}
	
	(b) Under the condition given in Theorem \ref{the3} part (d), the weak primal dual population risk satisfies
	\begin{equation*}
	\begin{aligned}
	&\triangle^w\left(\mathcal{A}_\bold{w}(S),\mathcal{A}_\bold{v}(S)\right) \\
	&\leq\frac{100\sqrt{2}e(1+\iota)(1+L/\rho)G\gamma\log(n)}{1-\iota}\log\left(\frac{e}{\zeta}\right)+\frac{144e(1+\iota)G^2}{\rho\iota(1-\iota)n}\log\left(\frac{e}{\zeta}\right)+\frac{8e(1+\iota)M_\ell}{n(1-\iota)}\log\left(\frac{e}{\zeta}\right) \\
	&\quad+\left(\frac{e\iota}{1-\iota}\log\left(\frac{e}{\zeta}\right)+1\right)\triangle_S^s\left(\mathcal{A}_\bold{w}(S),\mathcal{A}_\bold{v}(S)\right).
	\end{aligned}
	\end{equation*}
	
	(c) Under the condition given in Theorem \ref{the3} part (d), the weak PD generalization error satisfies
	\begin{equation*}
	\begin{aligned}
	&\triangle^w\left(\mathcal{A}_\bold{w}(S),\mathcal{A}_\bold{v}(S)\right)-\triangle_S^w\left(\mathcal{A}_\bold{w}(S),\mathcal{A}_\bold{v}(S)\right) \\
	&\leq\frac{100\sqrt{2}e(1+\iota)(1+L/\rho)G\gamma\log(n)}{1-\iota}\log\left(\frac{e}{\zeta}\right)+\frac{144e(1+\iota)G^2}{\rho\iota(1-\iota)n}\log\left(\frac{e}{\zeta}\right)+\frac{8e(1+\iota)M_\ell}{n(1-\iota)}\log\left(\frac{e}{\zeta}\right) \\
	&\quad+\left(\frac{e\iota}{1-\iota}\log\left(\frac{e}{\zeta}\right)+2\right)\triangle_S^s\left(\mathcal{A}_\bold{w}(S),\mathcal{A}_\bold{v}(S)\right).
	\end{aligned}
	\end{equation*}
\end{corollary}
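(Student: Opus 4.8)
The plan is to read off all three parts from the proof of Theorem~\ref{the3}(d) together with Remark~\ref{rem1}. For part~(a): the key intermediate inequality~(\ref{eq37}) of that proof already states $\triangle^s(\mathcal{A}_\bold{w}(S),\mathcal{A}_\bold{v}(S))\le C+\frac{e\iota}{1-\iota}\log(e/\zeta)\,\mathbb{E}_S[\triangle_S^s]+\triangle_S^s$, where $C$ is the sum of the three noise/stability terms displayed in the corollary. Subtracting $\triangle_S^s(\mathcal{A}_\bold{w}(S),\mathcal{A}_\bold{v}(S))$ from both sides yields the claimed bound on the strong PD generalization error verbatim, so part~(a) needs no further work.

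For part~(b), invoke Remark~\ref{rem1}, which gives $\triangle^w(\mathcal{A}_\bold{w}(S),\mathcal{A}_\bold{v}(S))\le\mathbb{E}_S[\triangle^s(\mathcal{A}_\bold{w}(S),\mathcal{A}_\bold{v}(S))]$ (this is $\sup_{\bold v'}\mathbb{E}_S[L(\cdot,\bold v')]\le\mathbb{E}_S[\sup_{\bold v'}L(\cdot,\bold v')]$ combined with the dual inequality for the infimum). It then remains to bound $\mathbb{E}_S[\triangle^s]$. Taking $\tau=2$ in the moment bound~(\ref{eq30}) and using $\mathbb{E}_S[X]\le\|X\|_2$ produces a self-referential inequality in $\mathbb{E}_S[\triangle^s]$; since $\iota<1$ we may rearrange to isolate $\mathbb{E}_S[\triangle^s]\le\frac{1}{1-\iota}\mathbb{E}_S[\triangle_S^s]+\frac{1}{1-\iota}\bigl(\text{noise/stability terms}\bigr)$ — precisely the intermediate bound already appearing in the proof of Theorem~\ref{the3}(d). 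Loosening the numerical constants by the factor $e\log(e/\zeta)\ge1$ and coupling $\frac1{1-\iota}\mathbb{E}_S[\triangle_S^s]$ with the ``global'' upper bound on the empirical strong PD risk (the device used in the footnote of Theorem~\ref{the3}(d), which also upper-bounds $\triangle_S^s$ with high probability), the two empirical terms merge into $\bigl(\frac{e\iota}{1-\iota}\log(e/\zeta)+1\bigr)\triangle_S^s$, giving the stated form.

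For part~(c), note that the empirical weak PD risk at the output is nonnegative: $\sup_{\bold v'}\mathbb{E}_S[L_S(\mathcal{A}_\bold{w}(S),\bold v')]\ge\mathbb{E}_S[L_S(\mathcal{A}_\bold{w}(S),\mathcal{A}_\bold{v}(S))]\ge\inf_{\bold w'}\mathbb{E}_S[L_S(\bold w',\mathcal{A}_\bold{v}(S))]$, hence $\triangle_S^w(\mathcal{A}_\bold{w}(S),\mathcal{A}_\bold{v}(S))\ge0$. Therefore $\triangle^w-\triangle_S^w\le\triangle^w$, which is bounded by part~(b); since $\triangle_S^s\ge0$ as well, adding one further copy of $\triangle_S^s$ to the right-hand side of part~(b)'s bound only weakens it, which is exactly the stated inequality.

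The step I expect to be the main obstacle is the self-referential rearrangement and constant bookkeeping in part~(b): one must apply the two Jensen-type comparisons ($\triangle^w\le\mathbb{E}_S[\triangle^s]$ on the population side and $\triangle_S^w\le\mathbb{E}_S[\triangle_S^s]$ on the empirical side) with the correct sign in each part, verify that $\iota<1$ legitimizes the rearrangement, and confirm that the looser constants $100\sqrt2,144,8$ with prefactor $e(1+\iota)\log(e/\zeta)/(1-\iota)$ genuinely dominate the $200\sqrt2,288,16$ expressions divided by $1-\iota$ that the proof actually produces — which holds since $e(1+\iota)\log(e/\zeta)\ge e>2$. Parts~(a) and~(c) are immediate by comparison.
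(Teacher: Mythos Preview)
Your treatment of parts (a) and (b) is essentially the paper's own argument: for (a) you subtract $\triangle_S^s$ from both sides of the key moment-to-tail inequality, exactly as the paper does; for (b) you invoke $\triangle^w\le\mathbb{E}_S[\triangle^s]$ from Remark~\ref{rem1} and then recycle the intermediate expectation bound on $\mathbb{E}_S[\triangle^s]$ that already appears inside the proof of Theorem~\ref{the3}(d). The paper's write-up of (b) is terser (it simply cites Remark~\ref{rem1} and the footnoted ``global'' coupling), but the mechanism you spell out is the same one.

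Part (c), however, contains a genuine gap. Your claim that $\triangle_S^w(\mathcal{A}_\bold{w}(S),\mathcal{A}_\bold{v}(S))\ge0$ is argued via
\[
\sup_{\bold v'}\mathbb{E}_S[L_S(\mathcal{A}_\bold{w}(S),\bold v')]\;\ge\;\mathbb{E}_S[L_S(\mathcal{A}_\bold{w}(S),\mathcal{A}_\bold{v}(S))]\;\ge\;\inf_{\bold w'}\mathbb{E}_S[L_S(\bold w',\mathcal{A}_\bold{v}(S))],
\]
but neither inequality is valid: the supremum and infimum in the weak PD risk range over \emph{deterministic} $\bold v',\bold w'$, whereas $\mathcal{A}_\bold{v}(S)$ and $\mathcal{A}_\bold{w}(S)$ depend on the very $S$ you are averaging over, so they are not admissible competitors. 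In fact $\triangle_S^w$ can be strictly negative (take $n=1$, $\ell(w,v;z)=(w-z)^2-(v-z)^2$, and an algorithm that outputs $w=v=z$: one computes $\triangle_S^w=-2$). Consequently the step $\triangle^w-\triangle_S^w\le\triangle^w$ fails, and the rest of your part~(c) collapses. The paper handles the sign issue differently: it writes $\triangle^w-\triangle_S^w\le\triangle^w+|\triangle_S^w|$ and then controls $|\triangle_S^w|$ via $\triangle_S^w\le\mathbb{E}_S[\triangle_S^s]$ together with the nonnegativity of $\triangle_S^s$ and the global coupling device; that is the route you need to take for (c).
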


\begin{proof}
	Part (a) can be directly derived from (\ref{eq37}), by removing $\triangle_S^s\left(\mathcal{A}_\bold{w}(S),\mathcal{A}_\bold{v}(S)\right)$ from the right side of the inequality to the left side.
	
	Part (b) holds because $\triangle^{w}(\bold{w},\bold{v})\leq\mathbb{E}\left[\triangle^{s}(\bold{w},\bold{v})\right]$, as discussed in Remark \ref{rem1}.
	
	For Part (c), as discussed in Remark \ref{rem1}, we first have $\triangle^{w}(\bold{w},\bold{v})\leq\mathbb{E}\left[\triangle^{s}(\bold{w},\bold{v})\right]$ and $\triangle_S^{w}(\bold{w},\bold{v})\leq\mathbb{E}\left[\triangle_S^{s}(\bold{w},\bold{v})\right]$, then by Jensen's inequality, we have $\triangle^{w}(\bold{w},\bold{v})-\triangle_S^{w}(\bold{w},\bold{v})\leq\triangle^{w}(\bold{w},\bold{v})+|\triangle_S^{w}(\bold{w},\bold{v})|\leq\mathbb{E}\left[\triangle^{s}(\bold{w},\bold{v})\right]+|\mathbb{E}\left[\triangle_S^{s}(\bold{w},\bold{v})\right]|$.
	
	As a result, via (\ref{eq37}), we have
	\begin{equation*}
	\begin{aligned}
	&\triangle^{w}(\mathcal{A}_\bold{w}(S),\mathcal{A}_\bold{v}(S))-\triangle_S^{w}(\mathcal{A}_\bold{w}(S),\mathcal{A}_\bold{v}(S)) \\
	&\leq\frac{100\sqrt{2}e(1+\iota)(1+L/\rho)G\gamma\log(n)}{1-\iota}\log\left(\frac{e}{\zeta}\right)+\frac{144e(1+\iota)G^2}{\rho\iota(1-\iota)n}\log\left(\frac{e}{\zeta}\right)+\frac{8e(1+\iota)M_\ell}{n(1-\iota)}\log\left(\frac{e}{\zeta}\right) \\
	&\quad+\left(\frac{e\iota}{1-\iota}\log\left(\frac{e}{\zeta}\right)+2\right)\triangle_S^s\left(\mathcal{A}_\bold{w}(S),\mathcal{A}_\bold{v}(S)\right),
	\end{aligned}
	\end{equation*}
	where we couple $\triangle_S^s\left(\mathcal{A}_\bold{w}(S),\mathcal{A}_\bold{v}(S)\right)$ together with its expectation and omit the absolute value of $\mathbb{E}\left[\triangle_S^{s}(\bold{w},\bold{v})\right]$ because it is the upper bound of $\triangle_S^s\left(\mathcal{A}_\bold{w}(S),\mathcal{A}_\bold{v}(S)\right)$ who matters.
	
	The proof completes.
	
\end{proof}

\end{appendix}

\end{document}